\documentclass[11pt,a4paper]{article}

\usepackage[T1]{fontenc}
\usepackage[utf8]{inputenc}
\usepackage{lmodern} 
\usepackage[margin=1in]{geometry}
\usepackage{microtype}
\usepackage{newpxtext,newpxmath}

\usepackage[most]{tcolorbox}

\newtcolorbox{resultbox}{
  colback=gray!4,
  colframe=black!55,
  boxrule=0.5pt,
  arc=2pt,
  left=3pt,
  right=3pt,
  top=2pt,
  bottom=2pt,
  before skip=8pt,
  after skip=8pt
}

\usepackage{wrapfig}
\usepackage{caption}

\usepackage{amsmath,mathtools,bm}
\usepackage{amsthm}            

\newcommand{\act}{\mathcal{A}}
\newcommand{\defer}{\bot}
\newcommand{\absent}{0}
\newcommand{\present}{1}
\newcommand{\etaLoc}{\boldsymbol{\eta}}
\newcommand{\muGlob}{\boldsymbol{\mu}}
\newcommand{\compat}[1]{\mathsf{Comp}(#1)}
\DeclareMathOperator{\Softmax}{Softmax}
\newtheorem{proposition}{Proposition}

\usepackage{graphicx}
\graphicspath{{images/}} 
\usepackage{booktabs,threeparttable,tabularx,array,colortbl,multirow}
\usepackage{xcolor}
\definecolor{lightgrayrow}{gray}{0.93}

\definecolor{inc_red}{RGB}{255, 230, 230}
\definecolor{epi_org}{RGB}{255, 245, 230}
\definecolor{coh_grn}{RGB}{235, 250, 235}
\definecolor{op_blu}{RGB}{235, 242, 255}

\usepackage{tikz}
\usetikzlibrary{arrows.meta,positioning,calc,shadows.blur}
\usepackage[edges]{forest}
\definecolor{L2DRoot}{HTML}{1B4965}
\definecolor{L2DBranch}{HTML}{E7F0FA}
\definecolor{L2DLeaf}{HTML}{F6FAFF}
\colorlet{L2DLine}{L2DRoot}
\forestset{
  l2dTaxonomy/.style={
    for tree={
      grow'=east, draw=L2DLine, edge={-Stealth,L2DLine,line width=.8pt},
      rounded corners=2pt, align=center, font=\sffamily\small,
      inner xsep=4pt, inner ysep=4pt, l sep=12mm, s sep=2.5mm,
      parent anchor=east, child anchor=west, anchor=west,
      minimum height=7mm, text width=34mm
    },
    where level=0{fill=L2DRoot, text=white, font=\sffamily\bfseries\footnotesize, text width=27mm}{},
    where level=1{fill=L2DBranch, font=\sffamily\bfseries\footnotesize, text width=40mm}{},
    where level>=2{fill=L2DLeaf, font=\sffamily\footnotesize, text width=36mm}{},
  },
}

\usepackage[most]{tcolorbox}
\tcbset{boxrule=0.6pt,arc=2pt}

\usepackage{enumitem}
\setlist{leftmargin=*, itemsep=0.25em, topsep=0.25em}

\usepackage{bbm}    
\usepackage{pifont} 

\usepackage[numbers,sort&compress]{natbib}          
\usepackage{xcolor}
\definecolor{linkblue}{HTML}{0A66C2} 
\usepackage{hyperref}
\hypersetup{
  colorlinks=true,
  linkcolor=linkblue,   
  citecolor=linkblue,   
  urlcolor=linkblue,    
  filecolor=linkblue
}

\usepackage{caption}
\usepackage{subcaption}
\newif\ifblindreview
\blindreviewfalse 

\IfFileExists{orcidlink.sty}{\usepackage{orcidlink}}{%
  \newcommand{\orcidlink}[1]{\href{https://orcid.org/#1}{\textsuperscript{\tiny ORCID}}}
}

\usepackage{authblk}

\title{\vspace{-8pt}\textbf{Coherent Hierarchical Multi-Label Learning to Defer for Medical Imaging}\vspace{4pt}}
\ifblindreview
  \author{\textbf{Anonymous authors}}
  \affil{}
\else
  \author[1*]{Joshua Strong\,\orcidlink{0009-0009-2348-1955}}
  \author[1]{Pramit Saha\,\orcidlink{0009-0006-8090-1969}}
  \author[1]{Emma Sun\,\orcidlink{0009-0001-3816-5391}}
  \author[2]{Helen Higham\,\orcidlink{0000-0001-5796-0377}}
  \author[1]{J.\ Alison Noble\,\orcidlink{0000-0002-3060-3772}}
  \affil[1]{Department of Engineering Science, University of Oxford, UK \\}
  \affil[2]{Nuffield Division of Anaesthetics, University of Oxford, The John Radcliffe Hospital, UK}
  \affil[*]{Correspondence: \texttt{joshua.strong@eng.ox.ac.uk}}
  \affil[ ]{Project Webpage: \url{https://josh-strong.github.io/coherent-l2d-site/}}
\fi
\date{\vspace{-8pt}\small Preprint — \today}

\newcommand{\keywords}[1]{%
  \vspace{0.5em}\noindent\textbf{Keywords:} #1
}

\theoremstyle{plain}
\newtheorem{theorem}{Theorem}

\theoremstyle{definition}
\newtheorem{definition}{Definition}

\theoremstyle{remark}
\newtheorem{remark}{Remark}
\newtheorem{corollary}[theorem]{Corollary}

\usepackage{array,tabularx,threeparttable}

\newcolumntype{L}[1]{>{\raggedright\arraybackslash}m{#1}}
\newcolumntype{Y}{>{\raggedright\arraybackslash}X}

\definecolor{inc_red_text}{RGB}{145,48,48}
\definecolor{epi_org_text}{RGB}{142,97,22}
\definecolor{coh_grn_text}{RGB}{33,97,57}
\definecolor{op_blu_text}{RGB}{41,78,132}

\newcommand{\StatusTax}{\cellcolor{inc_red}\strut\textbf{\textcolor{inc_red_text}{Taxonomic contradiction}}}
\newcommand{\StatusDed}{\cellcolor{epi_org}\strut\textbf{\textcolor{epi_org_text}{Deductive defect}}}
\newcommand{\StatusCoh}{\cellcolor{coh_grn}\strut\textbf{\textcolor{coh_grn_text}{Coherent}}}
\newcommand{\StatusDel}{\cellcolor{op_blu}\strut\textbf{\textcolor{op_blu_text}{Delegation violation}}}

\usepackage[capitalize,noabbrev]{cleveref}
\usepackage{makecell}
\Crefname{equation}{Eq.}{Eqs.}

\begin{document}
\maketitle

\begin{abstract}
\noindent Learning to Defer (L2D) enables a model to predict autonomously or defer to an expert, but prior work largely assumes flat label spaces. We study the first L2D setting with hierarchical multi-label decisions, motivated by medical-imaging workflows in which findings are organised by clinical taxonomies. In this setting, deferral is a delegation action rather than a label assignment, so treating it as an independent per-label decision can produce \emph{deferral incoherence}, including taxonomic contradictions, delegation violations, and deferrals of labels already implied by the model’s own assertions. We formalise coherent hierarchical deferral under a Selective-Exclusion handoff contract, characterise the Bayes-optimal coherent deferral rule, and show that even nodewise Bayes L2D can be action-incoherent. We then propose two remedies: exact coherent projection, a dynamic-programming decoder over the coherent action set, and Taxonomic Belief Propagation (TBP) with Recursive Policy Optimisation (RPO), a contract-aware joint action model trained through the same recursion used at inference. Across real-reader and controlled-expert medical-imaging benchmarks, naïve binary-relevance L2D exhibits non-trivial incoherence. Projection removes it exactly, and fast TBP+RPO drives incoherence near zero while retaining strong utility.
\end{abstract}

\keywords{learning to defer, selective prediction, human–AI collaboration, uncertainty, decision referral, human-in-the-loop}

\section{Introduction}

Deep learning performs strongly on many medical-imaging tasks, but is typically considered as an assistive tool rather than an autonomous decision-maker \cite{allen20212020,european2022current,eltawil2023analyzing}. Learning to Defer (L2D) captures this human--AI setting by allowing a model either to predict or to defer to an expert \cite{mozannar2020consistent}. Existing L2D methods, however, largely assume flat label spaces, where labels are treated independently and prediction/deferral decisions are made per label without hierarchical constraints \cite{Strong2025-rr}. Medical-imaging decisions are often hierarchical and multi-label: broad findings such as \emph{Lung Opacity} relate taxonomically to more specific subtypes, and multiple findings may co-occur \cite{Langlotz2006-wv,bos2006snomed,irvin2019chexpert}. We study, to our knowledge, the first L2D formulation in which deferral is a coherent action over a hierarchical multi-label taxonomy rather than an independent reject option for each label. This exposes a failure mode invisible to both flat L2D and standard hierarchical multi-label classification (HMLC): \textbf{a system may be label-satisfiable yet still yield an incoherent delegation to the human}.

The key new phenomenon is \emph{deferral incoherence}. A naïve but natural first baseline applies Binary Relevance (BR) \cite{zhang2018binary}, treating each node as an independent defer-or-predict problem. In a taxonomy, however, independent actions can be \emph{incoherent}. A model may assert a child while ruling out its parent, defer a parent while autonomously asserting a child, or defer a node whose value is already entailed by an asserted ancestor. These are not simply label-consistency errors,  but failures of the human--AI handoff. For a clinical reader, such action vectors can be confusing even when individual nodewise predictions appear reasonable, because they obscure which findings the model is asserting autonomously and which findings it is handing off for review.

We formalise coherent hierarchical deferral under a \emph{Selective-Exclusion contract}, motivated by assistive imaging workflows in which deferring an internal finding transfers responsibility for that finding to a clinical reader. We use ``clinical reader'' to denote the human responsible for the imaging decision in the workflow; in practice, a radiologist or other clinician. We develop two remedies: exact coherent projection, an inference-only dynamic-programming decoder, and Taxonomic Belief Propagation (TBP) with Recursive Policy Optimisation (RPO), a contract-aware joint action model. We evaluate on VinDr-CXR and CheXpert as primary real-reader CXR evidence, with PadChest and ADPv2 as controlled-expert stress tests for taxonomy scale and modality.

\textbf{Relation to nearby settings.}
Flat L2D studies when to defer but not how deferral interacts with a taxonomy. HMLC studies hierarchical label consistency but not expert handoff actions. Hierarchical selective classification allows abstention or backoff in a hierarchy, but typically in single-label settings and without modelling expert-provided labels on deferred nodes. Our setting combines all three ingredients: multi-label hierarchy, ternary prediction/deferral actions, and system utility under expert delegation. An extended related literature review is given in Appendix \ref{extended_lit}.

\textbf{Contributions.}
First, we introduce hierarchical multi-label L2D as a structured human--AI handoff problem, where deferral is a delegation action over a taxonomy rather than an independent third label value. Second, we formalise \emph{deferral incoherence} through taxonomic satisfiability, workflow admissibility, and deductive closure, and characterise the Bayes-optimal coherent deferral rule. This yields a key negative result: even oracle nodewise Bayes L2D can produce delegation violations and deductive defects, so incoherence is not just a training artefact. Third, we give two contract-aware solutions: exact coherent projection, which guarantees zero incoherence at inference, and TBP+RPO, which learns a coherent-support joint action model through the deployed hierarchy recursion. Across four medical-imaging benchmarks, BR-L2D is consistently incoherent, projection removes incoherence exactly, and fast TBP+RPO gives a near-coherent high-utility operating point.

\section{Background}
\label{sec:background}
Clinical vocabularies and reporting taxonomies organise imaging findings into broad concepts and more specific subtypes. HMLC abstracts this structure as a taxonomy $\mathcal T=(\mathcal V,\mathcal E)$, where $\mathcal V$ are labels (nodes) and $\mathcal E$ are parent--child relations. Let $y_j\in\{0,1\}$ denote the label at node $j$, and let $pa(j)$ denote its parent. For each edge $(pa(j)\to j)$, $y_j=1\Rightarrow y_{pa(j)}=1$. Thus a positive child implies a positive parent, and an absent parent implies absent descendants. HMLC methods enforce or exploit these constraints over label values \cite{chen2019deep,giunchiglia2020coherent}.

L2D instead concerns \emph{actions}. For input $x$, the system either predicts autonomously or defers to an expert, and is evaluated by system risk \cite{madras2018predict,mozannar2020consistent}. A natural first hierarchical multi-label solution we propose is via Binary Relevance (BR): each node $j\in\mathcal V$ is treated as an independent ternary decision with action space $\mathcal A=\{\absent,\present,\defer\}$ where $0$ and $1 $ are autonomous predictions and $\defer$ represents deferral. With local action distribution $\pi_j(\cdot\mid x)$ and any single-concept L2D surrogate $\mathcal L_\phi$, BR optimises
\vspace{-1pt}
\begin{equation}
\label{eq:j_br_bg}
\mathcal J_{\mathrm{BR}}(x,y,m)
=
\sum\limits_{j\in\mathcal V} \,\,
\mathcal L_\phi\big(\pi_j(\cdot\mid x), y_j, m_j\big).
\end{equation}
\vspace{-2pt}
This reduction is scalable but ignores ancestor--descendant coupling between actions, which is precisely where deferral incoherence arises. We discuss this next.

\section{Deferral Coherence}
\label{sec:theory}

Once internal nodes may be deferred, label consistency is insufficient: actions must be compatible with the taxonomy, admissible as a handoff, and closed under the model's own logical implications. We formalise these in this Section. Appendix~\ref{app:intuitive_coherence} gives an informal walk-through to assist the reader.

\subsection{Setup and Label-Space Assumptions}
\label{sec:theory_setup}

We consider a tree taxonomy $\mathcal T=(\mathcal V,\mathcal E)$ with parent map $pa(\cdot)$, children $C(p)$ for each node $p$, and node-wise binary labels $y_v\in\{0,1\}$. Figure~\ref{fig:lung_opacity_subtree} shows a schematic opacity subtree used for examples.

\begin{wrapfigure}{r}{0.36\textwidth}
\vspace{-2em}
\centering
\begin{forest}
for tree={
    draw=none,
    font=\footnotesize,
    edge={semithick},
    align=center,
    grow=south,
    parent anchor=south,
    child anchor=north,
    l sep=7pt,
    s sep=5pt,
    edge path={
      \noexpand\path[\forestoption{edge}]
        (!u.parent anchor) -- ++(0,-6pt) -| (.child anchor)\forestoption{edge label};
    },
}
[{\textbf{Lung Opacity}\\(parent $p$)}
  [{Edema\\(child $c_1$)}]
  [{Infiltration\\(child $c_2$)}]
  [{Consolidation\\(child $c_3$)}]
]
\end{forest}
\caption{Illustrative subtree used throughout Section~\ref{sec:theory}.}
\label{fig:lung_opacity_subtree}
\end{wrapfigure}

The label space obeys the standard hierarchy constraint of \textbf{upward implication}: for every edge $(p\to c)\in\mathcal E$,
\begin{equation}
\label{eq:upward}
y_c=1 \implies y_p=1.
\end{equation}
Its contrapositive gives \textbf{downward implication}: $y_p=0 \implies y_c=0.$

We also adopt the \textbf{open-world assumption}
\begin{equation}
\label{eq:openworld}
y_p=1 \nRightarrow \exists c\in C(p)\text{ such that }y_c=1.
\end{equation}
Thus, a positive parent need not be fully explained by the listed children.

\subsection{Taxonomic Satisfiability}
\label{sec:satisfiable}

In L2D, each node is a defer-or-predict decision: the model outputs
$a_v \in \act := \{\absent,\present,\defer\}$
for each $v\in\mathcal V$, where $\absent$ and $\present$ are autonomous predictions and $\defer$ hands the decision to the expert. Thus $\defer$ is not a third value of $y_v$; it leaves the binary label unspecified by the model.

\begin{definition}[Compatibility set and taxonomic satisfiability]
\label{def:compatibility}
Let $y=(y_v)_{v\in\mathcal V}$ be a binary label vector, and let
\[
\mathcal Y_{\mathcal T}
:=
\left\{
y\in\{0,1\}^{|\mathcal V|} :
\forall (p\to c)\in\mathcal E,\; y_c=1 \Rightarrow y_p=1
\right\}
\]
be the taxonomy-consistent label space under upward implication (\cref{eq:upward}). For actions $a\in\act^{|\mathcal V|}$, define
\[
\compat{a}
:=
\left\{
y\in\mathcal Y_{\mathcal T} :
\forall v\in\mathcal V,\; a_v\neq\defer \Rightarrow y_v=a_v
\right\}.
\]
Thus $\compat{a}$ is the set of hierarchy-consistent labelings still possible after respecting every non-deferred model assertion. We call $a$ \emph{taxonomically satisfiable} if $\compat{a}\neq\emptyset$.
\end{definition}

\begin{definition}\colorbox{inc_red}{(Taxonomic contradiction)}
\label{def:tax_contra}
An action vector $a$ is a \emph{taxonomic contradiction} if $\compat{a}=\emptyset$.
\end{definition}

For example, \textit{Lung Opacity} absent with \textit{Edema} present is a taxonomic contradiction: the child assertion requires the parent to be present under \cref{eq:upward}.

For a parent and its children, satisfiability reduces to a simple local test:
\begin{proposition}[Local satisfiability criterion]
\label{prop:local_satisfiable}
Let $p$ be a parent node with children $C(p)$. A local configuration $(a_p,\{a_c\}_{c\in C(p)})$ is taxonomically satisfiable if and only if it does \emph{not} assert a positive child under an asserted negative parent: $ \neg\Big(a_p=\absent \ \wedge\ \exists c\in C(p): a_c=\present\Big).$
\end{proposition}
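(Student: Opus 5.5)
We read the statement as concerning the star subtree consisting of $p$ and its children $C(p)$ in isolation. Here $\compat{\cdot}$ is taken with respect to the restricted taxonomy, whose only edges are $p\to c$ for $c\in C(p)$. The plan is to prove both directions directly from Definition~\ref{def:compatibility}. The constraints of $\mathcal Y_{\mathcal T}$ restricted to this subtree are exactly the implications $y_c=1\Rightarrow y_p=1$ for $c\in C(p)$. So satisfiability amounts to finding a binary assignment $(y_p,\{y_c\})$ that agrees with every non-deferred action and obeys these implications.

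For the ``only if'' direction we argue by contraposition. Suppose $a_p=\absent$ and some $c^\ast\in C(p)$ has $a_{c^\ast}=\present$. Then any $y\in\compat{a}$ must have $y_p=0$ and $y_{c^\ast}=1$. This violates upward implication (\cref{eq:upward}) on the edge $p\to c^\ast$, so $\compat{a}=\emptyset$.

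For the ``if'' direction we construct an explicit witness by cases on $a_p$.
\begin{itemize}
\item If $a_p=\present$, set $y_p=1$. For each child set $y_c=a_c$ when $a_c\neq\defer$ and $y_c=0$ otherwise. Every implication $y_c=1\Rightarrow y_p=1$ then holds trivially.
\item If $a_p=\defer$, set $y_p=1$ and fill in the children as in the previous case. This is legitimate because a deferred node imposes no constraint.
\item If $a_p=\absent$, the hypothesis says no child has $a_c=\present$. Set $y_p=0$ and $y_c=0$ for every child, which agrees with all children whose action is $\absent$ or $\defer$. All implications hold vacuously.
\end{itemize}
In each case the witness lies in $\mathcal Y_{\mathcal T}$ and respects every assertion, so $\compat{a}\neq\emptyset$.

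The only delicate point is scope: ``local'' must mean the star subtree alone. In the full tree, a local configuration could be extended by ancestor or grandchild actions, and those could make $\compat{a}$ empty for reasons outside this star. We will state this restriction explicitly. We will also note, as a remark, that the witness construction extends: choosing $y_v=1$ for deferred nodes whenever consistent shows the edgewise criterion characterises global satisfiability on trees, since every violation must occur on some edge with an asserted $\absent$ parent-side ancestor chain. We expect no genuine obstacle, since the argument is elementary.
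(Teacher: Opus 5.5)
Your proof of the stated criterion is correct. The contrapositive settles the \emph{only if} direction, and the three-case witness settles the \emph{if} direction: $y_p=1$ when $a_p\in\{\present,\defer\}$, and the all-zero labelling when $a_p=\absent$. The paper never writes out a proof of this proposition; it treats it as immediate from Definition~\ref{def:compatibility} and upward implication. Your direct verification is therefore the intended argument. Scoping it to the star $\{p\}\cup C(p)$ is a sensible clarification, and the witness also extends to the whole tree with all other nodes unconstrained: set the ancestors of $p$ to $1$ and every remaining node to $0$.

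Do drop or correct your closing remark, though. The immediate-edge test $\neg(a_p=\absent\wedge a_c=\present)$ does \emph{not} characterise global taxonomic satisfiability on trees of depth two or more. Take a chain $g\to p\to c$ with $(a_g,a_p,a_c)=(\absent,\defer,\present)$. No edge pairs an asserted-absent parent with an asserted-present child. Yet $y_c=1$ forces $y_p=1$, which forces $y_g=1$, contradicting $a_g=\absent$. So $\compat{a}=\emptyset$, and neither value of the deferred $y_p$ rescues it.

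Once intermediate nodes are deferred, the violation is carried by an ancestor--descendant pair rather than by an edge. The correct global criterion is that $a$ is satisfiable iff no node asserted $\absent$ has an asserted-$\present$ descendant. A witness is $y_v=1$ iff $v$ or one of its descendants is asserted $\present$. The paper's edge-level Proposition~\ref{prop:local_characterization} avoids this pitfall only because it also forbids the deductive-defect pattern $(\absent,\defer)$. Together with the taxonomic pattern, that exclusion forces every descendant of an asserted-absent node to be asserted absent.
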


\subsection{Workflow Admissibility: Deferring a Parent}
\label{sec:contract}

Taxonomic satisfiability is necessary but not sufficient: we also need a rule for what the model may do when it defers an internal concept. We adopt a contract motivated by assistive reading workflows and refined with clinical input from a coauthor.

\begin{definition}[Selective-Exclusion deferral contract]
\label{def:selective_exclusion}
Under \emph{Selective-Exclusion} (SE), if the system defers a parent concept, it may exclude specific subtypes but may not assert any subtype as present. Formally, for any edge $(p\to c)$,
$a_p=\defer \Rightarrow a_c\in\{\absent,\defer\}.$
Together with taxonomic satisfiability, this implies that any positive child assertion must have a positive parent assertion.
\end{definition}

This encodes a handoff semantics: deferring a parent delegates that finding to the clinical reader, while a positive child assertion already commits the parent to being present. Negative child assertions remain admissible under the open-world assumption.

\begin{definition}\colorbox{op_blu}{(Delegation / contract violation)}
\label{def:delegation_violation}
A local configuration $(a_p,\{a_c\}_{c\in C(p)})$ is a \emph{delegation violation} under SE if $a_p=\defer \ \wedge\ \exists c\in C(p): a_c=\present.$
\end{definition}

For example, deferring \textit{Lung Opacity} while asserting \textit{Consolidation} present is a delegation violation, since the child already commits the deferred parent.

\begin{remark}[Alternative semantics: strong subtree handoff]
\label{rmk:a}
SE is not the only possible contract. A stricter alternative enforces \emph{subtree handoff}: if a parent is deferred, all descendants must also be deferred. This may suit workflows with full responsibility transfer, but is more restrictive. We derive it in Appendix~\ref{app:semA} and evaluate it in Appendix~\ref{sensitivity_analysis}.
\end{remark}

\subsection{Deductive Closure and Deductive Defects}
\label{sec:deductive}

Even a taxonomically satisfiable, contract-admissible configuration may be defective if it defers a label already determined by the model's own assertions.

\begin{definition}[Taxonomic entailment from actions]
\label{def:entailment}
For an action vector $a$ and node $v\in\mathcal V$, write
$a \models_{\mathcal T} (v=b)$ iff
$\forall y\in\compat{a},\; y_v=b$.
Thus $(v=b)$ is \emph{entailed} when every taxonomy-consistent completion of the model's assertions gives node $v$ value $b$.
\end{definition}

\begin{definition}[Deductive closure]
\label{def:deductive_closure}
An action vector $a$ is \emph{deductively closed} if every entailed label value is explicitly asserted:
$a \models_{\mathcal T} (v=b)
\Rightarrow a_v=b$ for all $v\in\mathcal V$ and $b\in\{0,1\}$.
\end{definition}

\begin{definition}\colorbox{epi_org}{(Deductive defect)}
\label{def:deductive_defect}
A configuration is a \emph{deductive defect} if it is taxonomically satisfiable and contract-admissible, but fails to be deductively closed.
\end{definition}

The key practical case follows directly from downward implication.
\begin{corollary}[Downward implication yields deductive defects]
\label{cor:downward_contradiction}
Suppose $a_p=\absent$ for a parent node $p$, and the action configuration is taxonomically satisfiable. Then for every child $c\in C(p)$, $a \models_{\mathcal T}(c=\absent).$
Consequently, any configuration with $a_p=\absent$ and some $a_c=\defer$ is a deductive defect.
\end{corollary}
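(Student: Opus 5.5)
We prove the entailment claim straight from the definitions, then check the remaining conditions of Definition~\ref{def:deductive_defect}. For the entailment, fix $a$ with $a_p=\absent$ and $\compat{a}\neq\emptyset$, and let $y\in\compat{a}$ be arbitrary. Since $a_p\neq\defer$, the definition of $\compat{a}$ gives $y_p=a_p=0$. Because $y\in\mathcal Y_{\mathcal T}$, upward implication (\cref{eq:upward}) holds along each edge $(p\to c)$. Its contrapositive (downward implication) then gives $y_c=0$ for every $c\in C(p)$. As $y$ was arbitrary, every element of $\compat{a}$ has $y_c=0$, so $a\models_{\mathcal T}(c=\absent)$ by Definition~\ref{def:entailment}. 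Nonemptiness is not needed for the universal statement. It is what keeps the entailment non-vacuous: otherwise $c=\present$ would be entailed too.

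For the consequence, take a configuration with $a_p=\absent$ and some $a_c=\defer$, where $c\in C(p)$. We interpret the hypothesis, as in the corollary's preamble, to include that $a$ is taxonomically satisfiable and contract-admissible. Definition~\ref{def:deductive_defect} requires these two properties together with failure of deductive closure, so only the last needs proof. The first part gives $a\models_{\mathcal T}(c=\absent)$, while $a_c=\defer\neq\absent$. This violates the implication in Definition~\ref{def:deductive_closure} with $v=c$ and $b=0$, so $a$ is not deductively closed and is therefore a deductive defect.

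There are two points to watch. First, satisfiability and admissibility are assumptions about the whole configuration, not consequences of the local data $a_p=\absent$, $a_c=\defer$. Other nodes, such as a positive child of a different parent under a negative ancestor, could break them. So the statement should be read with both properties as hypotheses, and the proof will state this reading explicitly. Second, the contract itself poses no obstacle: SE only restricts children of \emph{deferred} parents, so $a_p=\absent$ imposes no SE condition at $p$. Locally, Proposition~\ref{prop:local_satisfiable} shows that $a_p=\absent$ with $a_c\in\{\absent,\defer\}$ is satisfiable. If needed, the entailment can also be extended to all descendants of $p$ by induction along the tree, applying downward implication edge by edge.
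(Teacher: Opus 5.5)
Your proof is correct and is essentially the paper's own argument, which the paper gives only as the one-line remark that the corollary ``follows directly from downward implication'': every $y\in\compat{a}$ has $y_p=0$, hence $y_c=0$, so $a\models_{\mathcal T}(c=\absent)$, and $a_c=\defer$ then violates deductive closure. Your observation that the ``consequently'' clause also needs global contract-admissibility as a hypothesis is a correct sharpening of something the paper leaves implicit. That hypothesis is automatic only at the parent-neighbourhood level of Table~\ref{tab:coherence}, since $a_p=\absent$ imposes no SE restriction.
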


For example, asserting \textit{Lung Opacity} absent while deferring \textit{Infiltration} is a deductive defect, because the parent-absent assertion already entails that \textit{Infiltration} is absent.

\subsection{Complete Local Classification of the Decision Space}
\label{sec:classification}

\begin{table*}[h]
\centering
\caption{Complete local classification of parent--children handoff decisions under Selective-Exclusion. Within each parent block, the \textbf{Status} column reports the first failed coherence criterion: taxonomic satisfiability, contract admissibility, or deductive closure.}
\label{tab:coherence}
\footnotesize
\setlength{\tabcolsep}{4pt}
\renewcommand{\arraystretch}{0.96}
\begin{threeparttable}
\begin{tabularx}{\textwidth}{@{}l l Y@{}}
\toprule
\textbf{Child pattern \(\mathbf{a}_C\)} & \textbf{Status} & \textbf{Interpretation} \\
\midrule
\multicolumn{3}{@{}l}{\textbf{Parent absent} (\(a_p=\absent\))} \\
\(\exists c:\, a_c=\present\) & \StatusTax & Positive child under asserted negative parent \\
\(\forall c:\, a_c\neq\present,\ \exists c:\, a_c=\defer\) & \StatusDed & Deferred child already entailed absent \\
\(\forall c:\, a_c=\absent\) & \StatusCoh & Complete, deductively closed exclusion of the subtree \\
\multicolumn{3}{@{}l}{\textbf{Parent present} (\(a_p=\present\))} \\
\(\exists c:\, a_c=\present\) & \StatusCoh & Consistent subtype assertion under a positive parent \\
\(\forall c:\, a_c=\absent\) & \StatusCoh & Open-world case: unmodelled subtypes may explain the parent \\
\(\forall c:\, a_c\neq\present,\ \exists c:\, a_c=\defer\) & \StatusCoh & Selective delegation at the subtype level \\
\multicolumn{3}{@{}l}{\textbf{Parent deferred} (\(a_p=\defer\))} \\
\(\exists c:\, a_c=\present\) & \StatusDel & Positive child commits deferred parent present \\
\(\forall c:\, a_c\in\{\absent,\defer\}\) & \StatusCoh & Parent deferred while children are only excluded or deferred \\
\bottomrule
\end{tabularx}
\end{threeparttable}
\end{table*}

The three coherence requirements give a complete local classification of parent--children handoff decisions under SE. We first define coherence globally, then give the edge-level characterisation used by our empirical incoherence metrics; Table~\ref{tab:coherence} expands the same logic to parent neighbourhoods.

\begin{definition}[Deferral coherence]
\label{def:coherence}
An action vector $a\in\act^{|\mathcal V|}$ is \emph{deferral coherent} under SE if it is (i) taxonomically satisfiable, (ii) contract-admissible, and (iii) deductively closed.
\end{definition}

\begin{definition}[Deferral incoherence]
\label{def:deferral_incoherence}
An action vector is \emph{deferral-incoherent} under a handoff contract if it is not deferral coherent. Under SE on a tree, Proposition~\ref{prop:local_characterization} shows that all local failures are captured by taxonomic contradictions, delegation violations, and deductive defects.
\end{definition}

\begin{proposition}[Local characterisation of deferral coherence]
\label{prop:local_characterization}
Assume a tree taxonomy and the open-world choice Eq.~\eqref{eq:openworld}. An action vector $a\in\act^{|\mathcal V|}$ is deferral coherent under SE iff every edge $(p\to c)$ avoids:
\begin{align*}
a_p=\absent \ \wedge\ a_c=\present
&\qquad \colorbox{inc_red}{\text{(taxonomic contradiction),}}\\
a_p=\defer \ \wedge\ a_c=\present
&\qquad \colorbox{op_blu}{\text{(delegation violation),}}\\
a_p=\absent \ \wedge\ a_c=\defer
&\qquad \colorbox{epi_org}{\text{(deductive defect).}}
\end{align*}
\end{proposition}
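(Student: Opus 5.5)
The plan is to prove both directions by reducing each global condition (satisfiability, contract admissibility, deductive closure) to edgewise conditions. Contract admissibility under SE (Definition~\ref{def:selective_exclusion}) is already edgewise, and it is literally the absence of the second pattern, so only (i) and (iii) need work.

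\emph{Forward direction.} Suppose $a$ is coherent. Then no edge carries the first pattern, since $a_p=\absent$, $a_c=\present$ forces both $y_p=0$ and $y_c=1$, which violates \cref{eq:upward}, so $\compat{a}=\emptyset$. No edge carries the second pattern, by admissibility. No edge carries the third pattern: $a$ is satisfiable, so Corollary~\ref{cor:downward_contradiction} gives $a\models_{\mathcal T}(c=\absent)$, and closure would then force $a_c=\absent\neq\defer$.

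\emph{Reverse direction.} Assume no edge has any of the three patterns. Then on every edge, $a_c=\present$ forces $a_p=\present$, and $a_p=\absent$ forces $a_c=\absent$. Iterating along root-to-leaf paths, the set $P=\{v:a_v=\present\}$ is ancestor-closed, and the set $N=\{v:a_v=\absent\}$ is descendant-closed. For satisfiability, I will exhibit the labeling $y^{\min}$ with $y_v=1$ iff $v\in P$. It respects all assertions, and it lies in $\mathcal Y_{\mathcal T}$ because $P$ is ancestor-closed. Hence $\compat{a}\neq\emptyset$, and contract admissibility holds by assumption.

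The main obstacle is deductive closure, i.e.\ showing that no deferred node is entailed. Take $v$ with $a_v=\defer$. Again $y^{\min}\in\compat{a}$ with $y^{\min}_v=0$, so $v=\present$ is not entailed. To show $v=\absent$ is not entailed, I will use the labeling $y^{v}$ obtained from $y^{\min}$ by also setting $y_u=1$ for every ancestor $u$ of $v$ and for $v$ itself. This requires checking that no ancestor $u$ of $v$ lies in $N$. If one did, descendant-closure of $N$ would give $a_v=\absent$, contradicting $a_v=\defer$. The resulting positive set $P\cup\mathrm{anc}(v)\cup\{v\}$ is ancestor-closed, since it is a union of ancestor-closed sets on a tree. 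So $y^{v}\in\mathcal Y_{\mathcal T}$, and it agrees with every non-deferred assertion.

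One point needs care: setting $y_v=1$ must not be blocked by a requirement that some child be positive. That is exactly where the open-world assumption \cref{eq:openworld} enters, because it imposes no downward existence constraint, so $\mathcal Y_{\mathcal T}$ is defined by upward implication alone. Thus $v$ has two completions with different values, and it is not entailed. Non-deferred nodes are trivially closed, since they already assert their entailed value. This completes the equivalence.
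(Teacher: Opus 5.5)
Your proof is correct and follows the decomposition the paper implicitly relies on. The paper gives no separate proof, only the one-line remark after the statement; your forward direction is exactly the combination of Proposition~\ref{prop:local_satisfiable}, the edgewise SE definition, and Corollary~\ref{cor:downward_contradiction} that the paper has in mind. Your reverse direction supplies the global argument the paper only asserts: the ancestor-closed witness $y^{\min}$ for satisfiability, and the ancestor-lifted witness $y^{v}$ for non-entailment of deferred nodes. You also correctly identify the open-world assumption as what permits $y_v=1$ with no positive child.
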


Thus immediate-edge checks are sufficient for global coherence in our tree setting: avoiding these patterns on every edge implies taxonomic satisfiability, contract admissibility, and deductive closure on the full taxonomy.

\subsection{Bayes-Optimal Coherent Deferral}
\label{sec:bayes_coherent}

This subsection makes precise why hierarchical L2D is not nodewise L2D applied independently. Flat L2D admits a pointwise Bayes rule: defer when the expert's conditional correctness exceeds the classifier's Bayes confidence \cite[Equation 6]{mozannar2020consistent}. Here the same conditional-risk principle applies, but actions must be coherent.

Let $Y_v$ and $M_v$ denote the ground-truth and expert labels at node $v$. With optional weights $w_v\ge0$ and deferral costs $\lambda_v\ge0$, define
\[
\begin{aligned}
\rho_v(\absent\mid x)&=w_vP(Y_v=1\mid x),&
\rho_v(\present\mid x)&=w_vP(Y_v=0\mid x),\\
\rho_v(\defer\mid x)&=w_v\{P(M_v\neq Y_v\mid x)+\lambda_v\}.&
\end{aligned}
\]
Without coherence constraints, the nodewise Bayes action is
$a_v^{\mathrm{BR},B}(x)\in\arg\min_{a\in\act}\rho_v(a\mid x)$. For \(w_v>0\), this rule defers when
\[
P(M_v=Y_v\mid x)
\ge
\max\{P(Y_v=0\mid x),P(Y_v=1\mid x)\}+\lambda_v .
\]
When \(\lambda_v=0\), this reduces to the usual flat-L2D condition that the expert's conditional correctness exceeds the classifier's Bayes confidence.
Let $\mathcal C_{\mathrm{SE}}$ be the SE-coherent action set and
$\Gamma_{\mathrm{SE}}(\absent)=\{\absent\}$,
$\Gamma_{\mathrm{SE}}(\present)=\act$,
$\Gamma_{\mathrm{SE}}(\defer)=\{\absent,\defer\}$.

\begin{resultbox}
\textbf{Key result: Bayes-optimal coherent deferral.}
The Bayes action is the minimum-risk coherent action vector:
\begin{equation}
\label{eq:bayes_coherent}
a_{\mathrm{SE}}^B(x)
\in
\arg\min_{a\in\mathcal C_{\mathrm{SE}}}
\sum_{v\in\mathcal V}\rho_v(a_v\mid x).
\end{equation}
For a tree, this is computed by the subtree recursion
\[
B_v(i\mid x)
=
\min_{j\in\Gamma_{\mathrm{SE}}(i)}
\left[
\rho_v(j\mid x)+
\sum_{u\in C(v)}B_u(j\mid x)
\right],
\]
where $B_v(i\mid x)$ is the minimum risk over the subtree rooted at \(v\), including \(v\), given parent action \(i\); roots minimise over \(j\in\act\).
\end{resultbox}

Thus optimal decisions are \emph{not local}: an internal action trades off its own risk with downstream feasible-subtree value.

\begin{proposition}[Nodewise Bayes L2D can be deferral-incoherent]
\label{prop:br_bayes_incoherent}
For an edge $(p\to c)$ with $Y_c=1\Rightarrow Y_p=1$, consider the zero-cost case \(\lambda_p=\lambda_c=0\), assume \(w_p,w_c>0\), and assume no ties and let $\pi_v(x)=P(Y_v=1\mid x)$ and $q_v(x)=P(M_v=Y_v\mid x)$. The independent nodewise Bayes rule
\[
a_v^{\mathrm{BR},B}(x)
\in
\arg\max_{a\in\{\absent,\present,\defer\}}
\{1-\pi_v(x),\,\pi_v(x),\,q_v(x)\}
\]
cannot produce $a_p=\absent,a_c=\present$, but can produce both $a_p=\defer,a_c=\present$ and $a_p=\absent,a_c=\defer$. Hence even oracle-calibrated BR-L2D can be deferral-incoherent under SE. Proof is given in Appendix~\ref{app:proof_br_bayes_incoherent}.
\end{proposition}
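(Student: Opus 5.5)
The proof splits into an impossibility part and two explicit constructions. I will first check that, with $\lambda=0$ and $w_v>0$, the nodewise rule is exactly the stated argmax. From the definition of $\rho_v$, minimising $\rho_v(a\mid x)$ over $a\in\act$ is the same as maximising $\{1-\pi_v,\pi_v,q_v\}$, since $w_v$ factors out and $P(M_v\neq Y_v\mid x)=1-q_v$. The no-ties assumption makes each argmax unique.

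\textbf{Impossibility of a taxonomic contradiction.} Suppose $a_p=\absent$ and $a_c=\present$. Then $1-\pi_p>\pi_p$, so $\pi_p<1/2$, and $\pi_c>1-\pi_c$, so $\pi_c>1/2$. The label constraint $Y_c=1\Rightarrow Y_p=1$ gives $\{Y_c=1\}\subseteq\{Y_p=1\}$, hence $\pi_c(x)\le\pi_p(x)$ for every $x$. This yields $1/2<\pi_c\le\pi_p<1/2$, a contradiction.

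\textbf{Constructing a delegation violation.} I will exhibit a joint distribution over $(Y_p,Y_c,M_p,M_c)$ at a fixed $x$ that respects the hierarchy. I pick $\pi_p=0.7$ and $\pi_c=0.6$, which is feasible, for example by putting mass $0.6$ on $(1,1)$, $0.1$ on $(1,0)$ and $0.3$ on $(0,0)$. I set the expert accuracies to $q_p=0.9$ and $q_c=0.5$. At $p$ the maximum of $\{0.3,0.7,0.9\}$ is $0.9$, so $a_p=\defer$. At $c$ the maximum of $\{0.4,0.6,0.5\}$ is $0.6$, so $a_c=\present$. The expert labels can be realised by letting $M_v$ equal $Y_v$ with probability $q_v$, independently, and flipping it otherwise. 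The hierarchy constraint is imposed only on $Y$, not on $M$, so any $q_v\in[0,1]$ is attainable.

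\textbf{Constructing a deductive defect.} I choose $\pi_p=0.3$ and $\pi_c=0.2$, which is consistent since $\pi_c\le\pi_p$. I set $q_p=0.5$ and $q_c=0.95$. At $p$ the maximum of $\{0.7,0.3,0.5\}$ is $0.7$, giving $a_p=\absent$. At $c$ the maximum of $\{0.8,0.2,0.95\}$ is $0.95$, giving $a_c=\defer$. All maxima in both constructions are strict, so the no-ties assumption holds.

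\textbf{Conclusion.} The two constructed edge patterns are precisely the delegation violation and the deductive defect listed in Proposition~\ref{prop:local_characterization}. That proposition therefore shows the resulting oracle-calibrated action vector is deferral-incoherent under SE. The only step needing any care is confirming that the chosen marginals and expert accuracies come from a genuine joint law, and the explicit independent-noise construction settles this; the remaining steps are direct verification.
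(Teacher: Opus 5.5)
Your proposal is correct and follows the paper's proof essentially step for step: the impossibility part uses $\pi_c(x)\le\pi_p(x)$ from upward implication to rule out $\pi_p<1/2<\pi_c$, and the two incoherent patterns are witnessed by explicit numerical choices of $(\pi_p,\pi_c,q_p,q_c)$ compatible with a hierarchy-consistent joint law, differing from the paper only in the particular numbers. Your extra checks are valid and fill in details the paper leaves implicit: that the argmax form coincides with minimising $\rho_v$ when $\lambda_v=0$ and $w_v>0$, and that the chosen expert accuracies are realisable via independent label noise on $M$.
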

 
\section{Contract-Aware Decoding and Training}
\label{sec:method}

Deferral coherence turns hierarchical L2D into a constrained action problem. We use two estimators: an exact decoder over the coherent action set, and a joint action model whose support is coherent by construction. Unlike standard HMLC corrections, which constrain binary label values, our constraints act on ternary handoff actions. This matters because a configuration can be label-satisfiable while still being workflow-inadmissible or deductively incomplete.

\textbf{Setup and local primitives.}
Each node $v\in\mathcal V$ takes action $a_v\in\act=\{\absent,\present,\defer\}$. The network first outputs a local three-way primitive distribution
\begin{equation}
\label{eq:local_primitive}
\etaLoc_v(a\mid x)
:=
\Softmax(f_\theta^{(v)}(x))_a,
\qquad
a\in\act.
\end{equation}
These primitives are local defer/predict preferences before ancestor-dependent feasibility constraints are imposed; by themselves, they do not define a coherent joint policy.

\subsection{Exact Coherent Projection}
\label{sec:projection}

BR+Projection is an inference-only structured baseline: training remains standard BR-L2D, and coherence is imposed at test time by exact constrained decoding. Let
\[
\Gamma_{\mathrm{SE}}(\absent)=\{\absent\},\qquad
\Gamma_{\mathrm{SE}}(\present)=\act,\qquad
\Gamma_{\mathrm{SE}}(\defer)=\{\absent,\defer\}.
\]
Then
\[
\mathcal C_{\mathrm{SE}}
=
\left\{
a \in \act^{|\mathcal V|} :
\forall (p \to c)\in\mathcal E,\; a_c \in \Gamma_{\mathrm{SE}}(a_p)
\right\}.
\]
Given local primitives, projection solves the coherent MAP problem
\begin{equation}
\label{eq:proj_full_map_main}
\hat{\mathbf a}^{\mathrm{Proj}}(x)
\in
\arg\max_{\mathbf a \in \mathcal C_{\mathrm{SE}}}
\sum_{v \in \mathcal V}\log \etaLoc_v(a_v \mid x).
\end{equation}
Thus projection is not a heuristic repair step: it is exact MAP decoding over the contract-defined coherent action space.

On a tree, \eqref{eq:proj_full_map_main} is solved by dynamic programming. With $C(v)$ denoting the children of $v$,
\[
F_v(i)=
\max_{j\in\Gamma_{\mathrm{SE}}(i)}
\left[
\log \etaLoc_v(j\mid x)
+
\sum_{u\in C(v)}F_u(j)
\right],
\]
with roots maximised over $j\in\act$ and standard backtracking recovering the action vector. A single decode costs $O(|\mathcal V||\act|^2)$, effectively linear since $|\act|=3$.

For budgeted evaluation, deferrals are ranked under the projection model rather than by local BR marginals. For node $t$ and action $a^\star$, define
\[
V_t(a^\star\mid x)
=
\max_{\mathbf a\in\mathcal C_{\mathrm{SE}}:\,a_t=a^\star}
\sum_{v\in\mathcal V}\log \etaLoc_v(a_v\mid x),
\]
and rank by
\[
s_{\mathrm{Proj}}(x,t)
=
V_t(\defer\mid x)
-
\max\{V_t(\absent\mid x),V_t(\present\mid x)\}.
\]
Selected deferrals are clamped and decoded by the same tree DP; feasibility closure and runtime details are in Appendix~\ref{app:exp_details_projection}. Projection therefore isolates the effect of exact contract enforcement for an already-trained local L2D model.

\subsection{Taxonomic Belief Propagation}
\label{sec:tbp}

Our second estimator builds the contract into the model. Taxonomic Belief Propagation (TBP) maps local primitives $\{\etaLoc_v\}$ to unconditional marginals $\{\muGlob_v\}$ through a tree-structured action model. The transition law is over actions, not labels: the parent action determines the admissible child actions.

For each non-root node $v$, define
\[
\mathbf T_v(x)\in[0,1]^{3\times 3},
\qquad
(\mathbf T_v)_{ij}
=
P(a_v=j \mid a_{pa(v)}=i, x),
\]
with rows and columns ordered as $[\absent,\present,\defer]$. Under SE,
\[
a_{pa(v)}=\absent \Rightarrow a_v=\absent,\qquad
a_{pa(v)}=\present \Rightarrow a_v\sim \etaLoc_v(\cdot\mid x),\qquad
a_{pa(v)}=\defer \Rightarrow a_v\in\{\absent,\defer\}.
\]
Thus
\begin{equation}
\label{eq:transition_kernel}
\mathbf T_v(x)=
\begin{bmatrix}
1 & 0 & 0 \\
\etaLoc_v(\absent\mid x) & \etaLoc_v(\present\mid x) & \etaLoc_v(\defer\mid x) \\
\alpha_v(x) & 0 & 1-\alpha_v(x)
\end{bmatrix},
\qquad
\alpha_v(x)
=
\frac{\etaLoc_v(\absent\mid x)}
{\etaLoc_v(\absent\mid x)+\etaLoc_v(\defer\mid x)}.
\end{equation}
The deferred-parent row renormalises the local primitive over the admissible face $\{\absent,\defer\}$; Appendix~\ref{norm_proof} shows this is the unique reverse-KL projection onto that face.

For roots, $\muGlob_r(x)=\etaLoc_r(\cdot\mid x)$; for non-roots,
\begin{equation}
\label{eq:tbp_recursion}
\muGlob_v(x)=\muGlob_{pa(v)}(x)^\top \mathbf T_v(x).
\end{equation}
Equivalently, TBP induces the joint action model
\begin{equation}
\label{eq:tbp_joint}
P_{\mathrm{TBP}}(\mathbf a\mid x)
=
\etaLoc_r(a_r\mid x)
\prod_{v\neq r}\mathbf T_v(x)_{a_{pa(v)},a_v}.
\end{equation}
The structural zeros in \eqref{eq:transition_kernel} assign forbidden parent--child action pairs zero probability, so the joint model has coherent support rather than relying on post-hoc repair. Appendix~\ref{app:theory} gives the normalisation, constraint-enforcement, monotonicity, and reverse-KL proofs; exact TBP MAP decoding appears in Appendix~\ref{app:tbp_exactdecode}.

\subsection{Why Hierarchical Deferral Is Not Separable}
\label{sec:nonseparable}

The Bayes recursion in Section~\ref{sec:bayes_coherent} shows that ancestor actions have downstream option value. Under SE, asserting a parent absent forces descendants absent, while deferring a parent forbids positive child assertions. Thus the locally Bayes-optimal parent action need not match the parent action in the Bayes-optimal coherent policy.

\begin{proposition}[Contract-induced non-separability]
\label{prop:nonseparable}
Under SE, there exist parent--child defer problems for which the locally Bayes-optimal parent action differs from the parent action induced by the Bayes-optimal coherent policy.
\end{proposition}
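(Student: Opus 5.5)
The statement is existential, so I will prove it by exhibiting one explicit two-node instance. Take a single edge $(p\to c)$ with $p$ a root, set $w_p=w_c=1$ and $\lambda_p=\lambda_c=0$, and fix the conditional risks $\rho_v(\cdot\mid x)$ by choosing $\pi_v(x)=P(Y_v=1\mid x)$ and $q_v(x)=P(M_v=Y_v\mid x)$. The choices must be consistent with $Y_c=1\Rightarrow Y_p=1$, which only requires $\pi_c\le\pi_p$. I then compare two quantities. The first is the locally Bayes-optimal parent action $a_p^{\mathrm{BR},B}=\arg\min_a\rho_p(a\mid x)$. The second is the parent component of $a^B_{\mathrm{SE}}(x)$, obtained from the subtree recursion in the key result above: with $B_c(i)=\min_{j\in\Gamma_{\mathrm{SE}}(i)}\rho_c(j)$, the root chooses $\arg\min_{j\in\act}[\rho_p(j)+B_c(j)]$.

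The mechanism I will exploit is the one already identified in Proposition~\ref{prop:br_bayes_incoherent}. If the parent is deferred, the child is forbidden from taking $\present$. So I want the parent to slightly prefer deferral locally, while the child is confidently present.

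\textbf{Concrete instance.} Take
\[
\pi_p=0.6,\quad q_p=0.7,\qquad \pi_c=0.55,\quad q_c=0.5 .
\]
These satisfy $\pi_c\le\pi_p$, and $q_v$ is a free modelling quantity, so the joint law exists; I will note that a joint distribution realising these marginals can be written down directly. The parent risks are $\rho_p(\absent)=0.6$, $\rho_p(\present)=0.4$ and $\rho_p(\defer)=0.3$, so the local choice is $\defer$. The child risks are $\rho_c(\absent)=0.55$, $\rho_c(\present)=0.45$ and $\rho_c(\defer)=0.5$.

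Evaluating the recursion:
\begin{itemize}
\item $B_c(\present)=\min_{j\in\act}\rho_c(j)=0.45$;
\item $B_c(\defer)=\min_{j\in\{\absent,\defer\}}\rho_c(j)=0.5$;
\item $B_c(\absent)=\rho_c(\absent)=0.55$.
\end{itemize}
The root totals are therefore $0.6+0.55=1.15$ for $\absent$, $0.4+0.45=0.85$ for $\present$, and $0.3+0.5=0.80$ for $\defer$. With these numbers deferral still wins, so the instance fails, and I must widen the gap. I will increase the child's confidence to $\pi_c=0.58$ with $q_c$ small, say $0.3$. This gives $B_c(\present)=0.42$ and $B_c(\defer)=\min(0.58,0.7)=0.58$. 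The totals become $\present\colon 0.4+0.42=0.82$ against $\defer\colon 0.3+0.58=0.88$. The coherent Bayes policy now chooses $a_p=\present$, whereas the local choice is $\defer$. There are no ties, and $\pi_c=0.58\le\pi_p=0.6$ holds.

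\textbf{Remaining steps and main obstacle.} Arithmetically, the difference arises because the option value lost by deferring the parent is $B_c(\defer)-B_c(\present)=\rho_c(\absent)-\rho_c(\present)$. This loss exceeds the local gain $\rho_p(\present)-\rho_p(\defer)=q_p-\pi_p$. In general, any instance with $0<q_p-\pi_p<2\pi_c-1$ and $q_c$ small works.

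The main care point is verifying that the chosen numbers are realisable by a genuine joint law of $(Y_p,Y_c,M_p,M_c)$ given $x$ that respects the hierarchy. I will handle this explicitly. Put $P(Y_p=1,Y_c=1)=0.58$, $P(Y_p=1,Y_c=0)=0.02$ and $P(Y_p=0,Y_c=0)=0.4$. Then let each expert label $M_v$ be correct independently with probability $q_v$. This confirms that the example is legitimate and completes the proof.
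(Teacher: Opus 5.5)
Your proposal is correct and follows essentially the paper's route. The paper's proof (Proposition~\ref{prop:option_value_commitment} in Appendix~\ref{app:rpo_why_improves}) is likewise a single-edge counterexample: parent deferral is locally cheapest (there via a perfect parent expert and $\lambda=0.05$) but forbids a low-risk child-present action, so asserting the parent present wins globally ($0.20$ vs.\ $0.50$), whereas your final instance ($\pi_p=0.6$, $q_p=0.7$, $\pi_c=0.58$, $q_c=0.3$, coherent totals $0.82$ vs.\ $0.88$) checks out, and your explicit joint law and the general condition $0<q_p-\pi_p<2\pi_c-1$ with $q_c<1-\pi_c$ are a small bonus over the paper's version.
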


\noindent\emph{Proof sketch.}
Choose $x$ such that parent deferral has slightly lower local risk than parent-present, but parent-present unlocks a much lower-risk child-present action that is forbidden after parent deferral. Appendix~\ref{app:rpo_why_improves} gives a constructive example.

This is why independent nodewise argmax decisions are insufficient. Projection performs structured decoding over $\mathcal C_{\mathrm{SE}}$, while RPO trains local primitives through the same ancestor--descendant gating used at inference.

\subsection{Recursive Policy Optimisation}
\label{sec:rpo}

TBP changes the deployed policy: inference uses the composed marginals $\{\muGlob_v(x)\}$ rather than the local primitives $\{\etaLoc_v(x)\}$. Since SE makes hierarchical deferral non-separable, locally trained primitives need not optimise the final contract-aware policy.

We train in two stages. Stage I learns local primitives with the BR objective; Stage II activates TBP and fine-tunes through the composed marginals:
\begin{equation}
\label{eq:j_rpo_stages}
\begin{aligned}
\mathcal J_{\mathrm{StageI}}
&=
\sum_{v\in\mathcal V}
\mathcal L_\phi\big(\etaLoc_v(x), y_v, m_v\big),
\qquad
\mathcal J_{\mathrm{RPO}}
=
\sum_{v\in\mathcal V}
\mathcal L_\phi\big(\muGlob_v(x), y_v, m_v\big).
\end{aligned}
\end{equation}
Stage II lets descendant losses update ancestor primitives, so the model can learn the downstream cost of deferring, asserting, or excluding a broad finding. The objective remains a sum of per-node losses on TBP-composed marginals rather than the full structured risk of a particular budgeted decoder; the continuation baseline isolates RPO from simply training longer.

\textbf{Three TBP objects.}
The TBP joint model has coherent support by construction. The fast marginal decoder used in the main table is efficient but not coherence-guaranteed. Exact TBP MAP decoding is exactly coherent but slower. Appendix~\ref{app:tbp_exactdecode} separates these operating points.

\section{Experiments}
\label{sec:experiments}

\begin{table*}[h]
\centering

\begin{minipage}[t]{0.50\textwidth}
\vspace{0pt}
\centering
\fontsize{7.5}{8.5}\selectfont
\captionof{table}{\textbf{Primary results on VinDr-CXR, CheXpert, PadChest, and ADPv2.} Higher is better for utility metrics and lower is better for incoherence. Best means bolded, second-best underlined. Asterisks denote significance by paired Wilcoxon signed-rank test.}
\label{tab:main_results}
\setlength{\tabcolsep}{2pt}
\renewcommand{\arraystretch}{1.05}
\begin{tabular}{@{}lccccccc@{}}
\toprule
\textbf{Method}
& \makecell{\textbf{BalAcc}\\$\uparrow$}
& \makecell{\textbf{F1-I}\\$\uparrow$}
& \makecell{\textbf{F1-L}\\$\uparrow$}
& \makecell{\colorbox{inc_red}{\strut \textbf{Tax.}}\\$\downarrow$}
& \makecell{\colorbox{epi_org}{\strut \textbf{Ded.}}\\$\downarrow$}
& \makecell{\colorbox{op_blu}{\strut \textbf{Del.}}\\$\downarrow$}
& \makecell{\colorbox{black!10}{\strut \textbf{Any}}\\$\downarrow$} \\
\midrule
\multicolumn{8}{c}{-- \textbf{VinDr-CXR} \cite{PhysioNet-vindr-cxr-1.0.0}\;(\textit{real-reader}, large tax., pooled 15 runs) --} \\
\midrule
BR-L2D          & $.852$ & $.512$             & $.749$             & $\approx 0$        & $.088$ & $.019$ & $.107$ \\
BR (cont.)      & $.847$             & $\underline{.514}$ & $\underline{.763}$ & $\approx 0$        & $.115$ & $.024$ & $.138$ \\
\rowcolor{gray!10}
BR+Proj.        & $\mathbf{.853^*}$    & $.513$             & $.749$             & $\mathbf{=0}$      & $\mathbf{=0}$ & $\mathbf{=0}$ & $\mathbf{=0}$ \\
\rowcolor{gray!10}
BR+RPO          & $\mathbf{.853^*}$             & $\mathbf{.515^*}$    & $\mathbf{.768^*}$    & $\mathbf{=0}$      & $\approx 0$ & $\approx 0$ & $\underline{.004}$ \\
\midrule
\multicolumn{8}{c}{-- \textbf{CheXpert} \cite{irvin2019chexpert}\;(\textit{real-reader}, small tax., pooled 20 runs) --} \\
\midrule
BR-L2D          & $.811$             & $.641$             & $.734$             & $\mathbf{=0}$      & $.011$ & $.038$ & $.050$ \\
BR (cont.)      & $\underline{.823}$ & $\underline{.648}$ & $\underline{.749}$ & $\mathbf{=0}$      & $.014$ & $.036$ & $.050$ \\
\rowcolor{gray!10}
BR+Proj.        & $.812$             & $.642$             & $.735$             & $\mathbf{=0}$      & $\mathbf{=0}$ & $\mathbf{=0}$ & $\mathbf{=0}$ \\
\rowcolor{gray!10}
BR+RPO          & $\mathbf{.841^*}$  & $\mathbf{.652^*}$  & $\mathbf{.764^*}$  & $\mathbf{=0}$      & $\underline{\approx 0}$ & $\underline{\approx 0}$ & $\underline{\approx 0}$ \\
\midrule

\multicolumn{8}{c}{-- \textbf{PadChest} \cite{BUSTOS2020101797}\;(\textit{controlled-expert}, xlarge tax.) --} \\
\midrule
BR-L2D          & $.679$             & $.230$             & $.421$             & $\underline{\approx 0}$ & $.122$ & $.002$ & $.124$ \\
BR (cont.)      & $.671$             & $.218$             & $.410$             & $.000$             & $.129$ & $.002$ & $.131$ \\
\rowcolor{gray!10}
BR+Proj.        & $\underline{.692}$ & $\underline{.240}$ & $\underline{.430}$ & $\mathbf{=0}$      & $\mathbf{=0}$ & $\mathbf{=0}$ & $\mathbf{=0}$ \\
\rowcolor{gray!10}
BR+RPO          & $\mathbf{.721^*}$    & $\mathbf{.272^*}$    & $\mathbf{.437^*}$    & $\mathbf{=0}$      & $\underline{\approx 0}$ & $\underline{\approx 0}$ & $\underline{\approx 0}$ \\
\midrule
\multicolumn{8}{c}{-- \textbf{ADPv2} \cite{yang2025adpv2}\;(\textit{controlled-expert}, medium tax.) --} \\
\midrule
BR-L2D          & $.896$             & $.863$             & $.878$             & $\approx 0$             & $.037$             & $.032$             & $.068$ \\
BR (cont.)      & $\mathbf{.900^*}$    & $\mathbf{.868^*}$    & $\mathbf{.882^*}$    & $\underline{\approx 0}$ & $.030$             & $.032$             & $.062$ \\
\rowcolor{gray!10}
BR+Proj.        & $.895$             & $.861$             & $.876$             & $\mathbf{=0}$           & $\mathbf{=0}$      & $\mathbf{=0}$      & $\mathbf{=0}$ \\
\rowcolor{gray!10}
BR+RPO          & $\mathbf{.900^*}$    & $\mathbf{.868^*}$    & $\mathbf{.882^*}$    & $\mathbf{=0}$           & $\underline{\approx 0}$ & $\underline{.008}$ & $\underline{.009}$ \\
\bottomrule
\end{tabular}
\end{minipage}
\hfill
\begin{minipage}[t]{0.47\textwidth}
\vspace{0pt}
\centering
\footnotesize

\includegraphics[width=\linewidth]{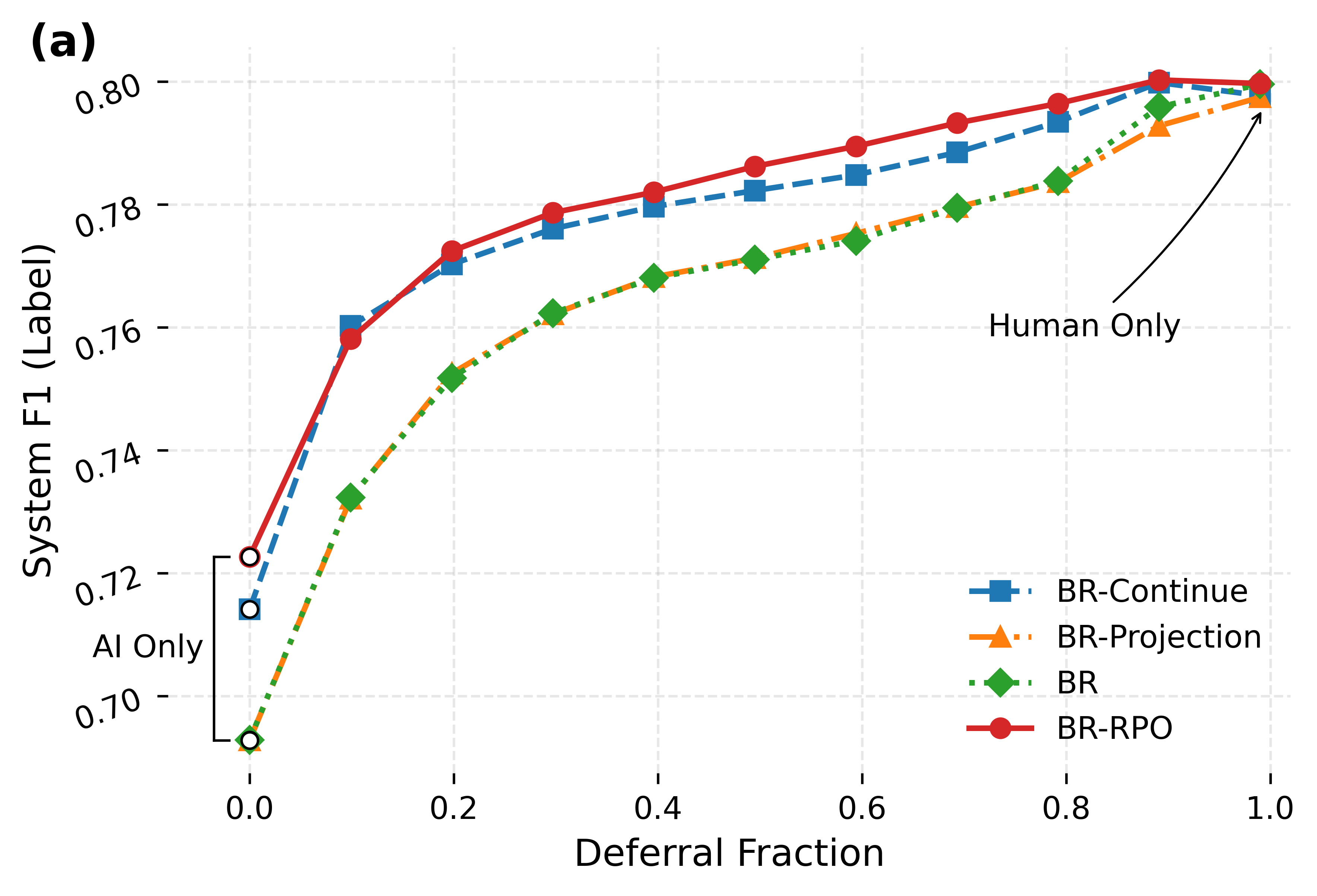}\\[-0.5ex]
\vspace{-3mm}
\includegraphics[width=\linewidth]{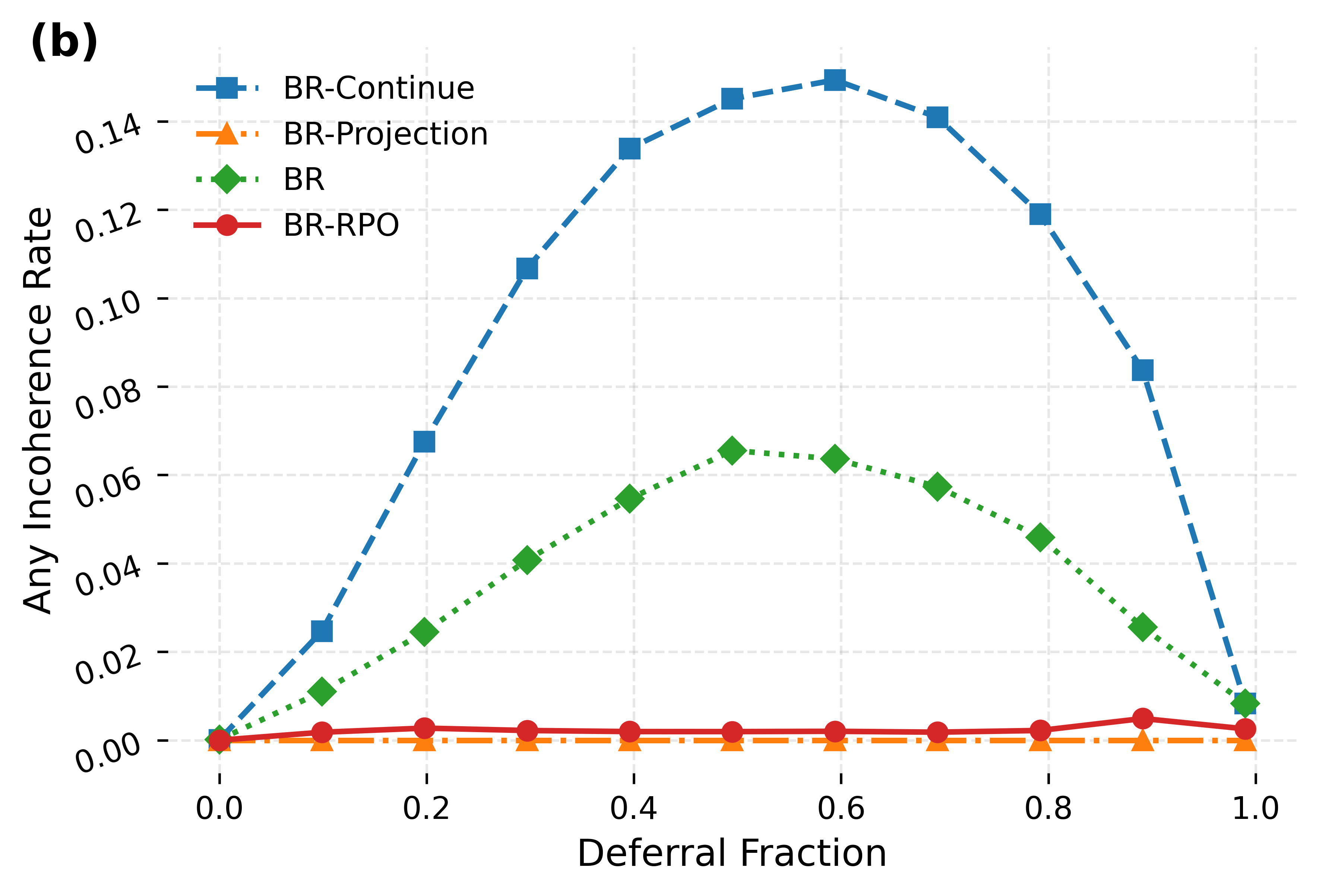}\\[-0.5ex]

\captionof{figure}{\textbf{Representative VinDr-CXR deferral curves.}
Mean over 5 seeds when deferring to radiologist R9.
\textbf{(a)} System F1 score across the node-level deferral budget.
\textbf{(b)} Any incoherence rate over the same budget sweep.}
\label{fig:vindr_r9_curves}

\end{minipage}
\end{table*}

We test three theory-derived claims. First, independent nodewise L2D should produce deferral-specific incoherence: Proposition~\ref{prop:br_bayes_incoherent} shows that even oracle nodewise Bayes decisions can yield delegation violations and deductive defects. Second, exact projection should provide a zero-incoherence operating point by decoding over the coherent action set characterised in Proposition~\ref{prop:local_characterization} and solving the constrained MAP problem in Eq.~\eqref{eq:proj_full_map_main}. Third, TBP+RPO should improve the utility--coherence trade-off beyond matched continued training: TBP assigns zero mass to forbidden parent--child transitions by construction, while RPO trains through the deployed recursion rather than independent local heads. These questions are deployment-relevant because, in assisted reading, a useful system must be accurate after deferral and provide a coherent clinical handoff specifying which findings it has asserted and which it has delegated.

\textbf{Datasets, methods, and evaluation.}
VinDr-CXR \cite{PhysioNet-vindr-cxr-1.0.0} and CheXpert \cite{irvin2019chexpert} provide the primary real-reader CXR evidence through retrospective radiologist annotations. PadChest \cite{BUSTOS2020101797} and ADPv2 \cite{yang2025adpv2} are controlled-expert stress tests for larger taxonomies, varied expert behaviour, and a non-CXR modality. We compare nodewise \textbf{BR-L2D}, matched \textbf{BR (cont.)}, exact \textbf{BR+Projection}, and \textbf{BR+RPO}. BR+RPO uses the fast marginal decoder in the main table. Exact coherent TBP MAP decoding appears in Appendix~\ref{app:tbp_exactdecode}. For each method, we sweep a global node-level deferral budget, fill deferred nodes with expert labels, and report areas under system balanced-accuracy and F1 curves for both label- and instance-wise deferral (abbreviated F1-I and F1-L, respectively). Because assisted interpretation also requires a comprehensible handoff, we report any incoherence and decompose it into taxonomic contradictions, deductive defects, and delegation violations. By Proposition~\ref{prop:local_characterization}, immediate-edge checks are sufficient in our tree setting. Raw upward-closure violations were extremely rare in the real-reader VinDr-CXR and CheXpert annotations, so preprocessing mainly adds implied ancestors rather than sanitising substantial expert incoherence. We leave explicit handling of frequent human--taxonomy disagreement to future prospective deployments.

\textbf{Main results.}
Table~\ref{tab:main_results} shows that deferral incoherence is systematic. Naïve BR-L2D is incoherent on every dataset, with deductive defects and delegation violations dominating ordinary taxonomic contradictions. This confirms the central theory: the main failures are not standard HMLC label violations, but deferral-specific handoff failures driven by expert correctness. Exact projection removes incoherence without retraining and remains utility-competitive, giving a simple zero-incoherence operating point for settings where an existing L2D model must be made contract-compliant. TBP+RPO instead learns through the deployed hierarchy recursion and drives incoherence near zero while retaining strong utility, giving a fast operating point for repeated deployment-time inference.

\textbf{Why this matters operationally.}
The incoherence decomposition is the key diagnostic. If the failures were only parent-absent/child-present contradictions, ordinary hierarchy-aware label correction would be sufficient. Instead, the dominant errors are deductive defects and delegation violations: the model defers labels already implied by its own assertions, or asks the reader to decide a parent while autonomously asserting a child. These are handoff failures rather than ordinary classification errors. A clinical reader receiving such an action vector cannot cleanly tell which parts of the taxonomy the model has taken responsibility for and which parts have been delegated. This is why hierarchical L2D must constrain the ternary action space, not only the binary label space.

\textbf{Representative curves and deployment choices.}
Figure~\ref{fig:vindr_r9_curves} illustrates the same behaviour across the full budget sweep. In Fig.~\ref{fig:vindr_r9_curves}(b), the BR baselines show a characteristic deferral-incoherence curve: incoherence rises as more nodes are deferred, peaks at intermediate budgets, and then falls as the action vector becomes dominated by deferral. The peak occurs at roughly 60\% of node-level decisions deferred, where handoff structure matters most because the system is mixing autonomous assertions, exclusions, and delegated findings. BR continuation is worst in this regime, showing that longer local training can amplify rather than repair incoherent handoffs; BR-L2D is also substantially incoherent. In contrast, BR+Projection is identically zero across the entire sweep, as expected from exact decoding over $\mathcal C_{\mathrm{SE}}$, and BR+RPO remains practically zero while preserving the stronger utility curve in Fig.~\ref{fig:vindr_r9_curves}(a).

The two proposed methods therefore serve different deployment needs. \textbf{BR+Projection} is the safest retrofit option when deterministic zero incoherence is required and a trained local L2D model already exists. \textbf{TBP+RPO} is the preferred high-throughput option when fast inference is needed. The TBP joint model has coherent support, RPO trains through the hierarchy used at inference, and exact TBP MAP decoding can be used when deterministic zero incoherence is required.

\textbf{Ablations.}
The appendix addresses the main alternative explanations and deployment variants, summarised in Table~\ref{tab:ablation_summary}. Matched continuation tests whether RPO is merely longer training; Stage-I sensitivity tests whether the effect depends on BR pretraining; hierarchy-aware losses test whether ordinary HMLC objectives remove ternary handoff defects; strong-subtree handoff tests contract sensitivity; multi-expert routing tests whether the action-space issue persists with expert-indexed deferrals; exact TBP MAP decoding separates the coherent joint model from the fast marginal decoder; and runtime experiments quantify the cost of coherent inference. These results support the central claim: the empirical gains come from modelling the semantics of human--AI handoff in the action space rather than from a particular pretraining loss, expert choice, or contract variant.

\begin{table}[t]
\centering
\caption{\textbf{Ablation summary.} Each ablation probes an alternative explanation or deployment concern.}
\label{tab:ablation_summary}

\scriptsize
\setlength{\tabcolsep}{4pt}
\renewcommand{\arraystretch}{0.95}

\rowcolors{2}{gray!6}{white}

\begin{tabularx}{\textwidth}{
@{}>{\raggedright\arraybackslash}p{0.21\textwidth}
   >{\raggedright\arraybackslash}p{0.27\textwidth}
   >{\raggedright\arraybackslash}X@{}}
\toprule
\textbf{Ablation} & \textbf{Question} & \textbf{Takeaway} \\
\midrule

\textbf{BR continuation}
& Is RPO just longer training?
& \textbf{\emph{No.}} Continued BR training does not consistently reduce incoherence and can increase it; RPO trains through deployed TBP recursion. App.~\ref{app:stage1_sensitivity}, \ref{app:exp_details_stats}. \\

\textbf{Stage-I initialisation}
& Does RPO depend on BR pretraining?
& RPO remains beneficial across tested initialisations, including BR and hierarchy-aware losses. App.~\ref{app:stage1_sensitivity}. \\

\textbf{Hierarchy-aware losses}
& Would standard HMLC objectives solve this?
& Label-consistency losses address binary labels, not delegation violations or deductive defects in ternary action spaces. App.~\ref{app:stage1_sensitivity}. \\

\textbf{Strong-subtree handoff}
& Is Selective-Exclusion the only valid contract?
& \textbf{\emph{No.}} Stricter subtree handoff is feasible and exactly decodable, but shifts the utility--coherence trade-off. App.~\ref{app:semA}, \ref{sensitivity_analysis}. \\

\textbf{Multi-expert deferral}
& Does coherence matter with several experts?
& The same action-space coherence issue persists with expert-indexed deferral actions. App.~\ref{apd:multi-expert-extension}. \\

\textbf{Exact TBP MAP}
& Is residual RPO incoherence from TBP itself?
& \textbf{\emph{No.}} The TBP joint model is coherent; residual incoherence comes from the fast marginal decoder. Exact MAP removes it at some utility cost. App.~\ref{app:tbp_exactdecode}. \\

\textbf{Runtime}
& Are coherent methods practical?
& Projection/TBP decoding scales linearly in taxonomy size; exact budgeted ranking is costlier, motivating fast RPO decoding. App.~\ref{app:exp_details_runtime}. \\

\bottomrule
\end{tabularx}
\end{table}

\section{Conclusion, Limitations, and Future Work}
We introduced hierarchical multi-label L2D as a structured human--AI handoff problem. In medical taxonomies, deferral is not abstention or a third label value, but delegation to a clinical reader. This creates \emph{deferral incoherence}: a ternary action vector may be label-satisfiable yet incoherent as a handoff. Our theoretical and empirical results show that this is substantive. Under Selective-Exclusion, the Bayes-optimal coherent policy is a structured optimisation over $\mathcal C_{\mathrm{SE}}$, while independent nodewise Bayes L2D can still produce delegation violations and deductive defects. Empirically, BR-L2D is incoherent across real-reader and controlled-expert benchmarks, with failures dominated by delegation and deductive closure rather than ordinary taxonomic contradiction. Thus, hierarchical L2D must model the semantics of delegation, not only of labels.

The proposed methods provide practical operating points. Exact projection retrofits an already-trained local L2D system with guaranteed zero incoherence. TBP+RPO builds the handoff contract into the model, trains through the deployed hierarchy recursion, and enables fast near-coherent inference with strong utility. Together, these results show that coherent hierarchical deferral is both theoretically necessary and practically achievable.

\textbf{Limitations and future work.}
Our formulation studies only tree taxonomies, intentionally reflecting the common structure of medical imaging taxonomies. We discuss extensions to DAGs in Appendix \ref{app:dag_extension}. Future work will prospectively study this framework in a real hospital setting.

\bibliographystyle{ACM-Reference-Format}
\bibliography{sample-base}

\renewcommand{\contentsname}{Contents of Appendix}
\tableofcontents
\addtocontents{toc}{\protect\setcounter{tocdepth}{3}} 
\clearpage

\section{Extended Literature Review}
\label{extended_lit}
\textbf{Learning to Defer and selective prediction.}
Learning to Defer (L2D) formalises human--AI collaboration as an \emph{allocation} decision: the model either predicts or defers to an expert to optimise system-level utility rather than standalone accuracy \cite{mozannar2020consistent}. \citet{Strong2025-rr} provide a recent survey of the area. L2D is closely related to selective classification / learning with a reject option \cite{5222035,1054406,el2010foundations,geifman2017selective,geifman2019selectivenet}, but differs in that deferral routes to an external decision-maker with its own error profile rather than simply abstaining. Beyond the original consistent-surrogate formulation \cite{mozannar2020consistent}, later work studies alternative optimisation and deployment regimes, including post-hoc estimators that learn deferral policies on top of a trained predictor \cite{narasimhan2022post} and exact / constrained formulations that optimise deferral subject to explicit capacity constraints \cite{mozannar2023should}.

\textbf{Extensions of L2D.}
Several extensions generalise L2D, addressing routing among multiple experts \cite{verma2023learning}, coverage/workload constraints \cite{ponomarev2024simple,alves2024costsensitive}, structured two-training pipelines (assuming a frozen predictor) \cite{mao2023two,montreuil2024two}, and missing expert annotations \cite{nguyen2025probabilistic}. Most closely adjacent to our training curriculum, \cite{montreuil2024two} develop two-stage L2D formulations for \emph{multi-task} settings (learning a deferral mechanism on top of a multi-head predictor) and further study querying \emph{top-$k$} experts and related guarantees \cite{montreuil2025askaskklearningtodefer}. These works primarily treat tasks as multiple outputs that share representation and routing constraints. Our focus is different: the outputs themselves are coupled by a \emph{taxonomy}, and coherence is defined by logical satisfiability, an explicit parent-deferral contract, and deductive closure.

\textbf{Hierarchical multi-label prediction.}
Hierarchical multi-label classification (HMLC) exploits taxonomies via conditional factorisation and/or constraint-based objectives to improve calibration and enforce hierarchy-consistent predictions \cite{chen2019deep,giunchiglia2020coherent}. This literature typically constrains \emph{label values} (present/absent) but does not model the \emph{delegation action} introduced by L2D. As a result, directly extending L2D with Binary Relevance can produce workflow-incoherent configurations (e.g., deferring a parent while asserting a child) even when the underlying label predictions are hierarchy-consistent.

\textbf{Hierarchy-aware abstention and coherence.}
Recent work has begun to study hierarchy-aware abstention in structured label spaces. \citet{goren2024hierarchical} formulate hierarchical selective classification, where a classifier may back off to an ancestor node instead of fully rejecting an uncertain example. Their setting is related in motivation, but differs from ours because it is single-label, does not involve expert deferral, and does not address coherence of joint ternary actions in a multi-label taxonomy. Our contribution sits at this intersection. We define \emph{deferral coherence} as the conjunction of (i) taxonomic satisfiability of assertions, (ii) workflow admissibility under an explicit contract (Selective-Exclusion), and (iii) deductive closure. We then characterise the Bayes-optimal coherent deferral rule over the constrained action set and show that independent nodewise Bayes L2D can still be action-incoherent. Algorithmically, we enforce these semantics using contract-specialised tree MAP decoding and a contract-indexed top-down Markov recursion optimised end-to-end with Recursive Policy Optimisation. Our method complements prior hierarchy-aware prediction losses \cite{chen2019deep,giunchiglia2020coherent} by extending the structured semantics from binary labels to the ternary prediction/deferral action space used in L2D.

\section{An Informal Walk-Through of Deferral Coherence}
\label{app:intuitive_coherence}

This appendix is an informal companion to Section~\ref{sec:theory}. The main text gives the formal definitions and propositions. Here we describe the same ideas in plain language. It can be read either before or after Section~\ref{sec:theory}. A useful picture to keep in mind is the simple subtree in Figure~\ref{fig:lung_opacity_subtree}: a parent concept such as \emph{Lung Opacity}, with children such as \emph{Edema}, \emph{Infiltration}, and \emph{Consolidation}.

At each node, the model chooses an \emph{action}: predict absence (\(\absent\)), predict presence (\(\present\)), or defer to the expert (\(\defer\)). The key point is that \(\defer\) is \emph{not} a third medical label. It is a workflow choice meaning that the model is not taking responsibility for that node and is instead handing the decision to the expert.

\textbf{Coherence lives in the action space, not just the label space.}
In ordinary hierarchical multi-label classification, the main concern is logical consistency of labels: a child should not be present if its parent is absent. Once deferral is introduced, that is no longer sufficient. We must also ask whether the \emph{handoff} itself is well-formed. A configuration can be logically possible as a set of labels and still correspond to an inconsistent deferral decision.

\textbf{A simple three-question checklist.}
For any parent and its children, coherence can be understood by asking:
\begin{enumerate}[leftmargin=1.4em]
    \item \textbf{Could the model's assertions all be true together?} If not, we have a \colorbox{inc_red}{\strut taxonomic contradiction}.
    \item \textbf{Is the model deferring a parent while still making a child-present claim that already commits the parent?} If yes, we have a \colorbox{op_blu}{\strut delegation violation}.
    \item \textbf{Is the model deferring a node whose value is already implied by its own earlier assertions?} If yes, we have a \colorbox{epi_org}{\strut deductive defect}.
\end{enumerate}

\textbf{Taxonomic contradiction: saying something impossible.}
The familiar bad case is \emph{parent absent, child present}. For example, \emph{Lung Opacity = absent} and \emph{Edema = present} cannot both be true. Another way to say this is that, after leaving deferred nodes undecided, there should still remain at least one hierarchy-consistent way the world could look. If no such completion exists, the action vector is taxonomically impossible.

\textbf{Delegation violation: deferring a question while also answering it indirectly.}
Under our Selective-Exclusion contract, deferring a parent still allows the model to \emph{exclude} specific children, but it does \emph{not} allow the model to assert any child as present. The reason is operational rather than purely logical: saying \emph{Edema = present} already commits \emph{Lung Opacity = present}. So the model cannot honestly say ``the expert should decide the parent'' while simultaneously making a positive subtype assertion underneath it. Equivalently, Selective-Exclusion permits statements like ``I am not deciding Lung Opacity overall, but I can already exclude Edema,'' while forbidding statements like ``I defer Lung Opacity, but I assert Edema is present.''

\textbf{Deductive defect: deferring something the model has already settled itself.}
Suppose the model says \emph{Lung Opacity = absent}. Then every child under that node must also be absent. In that case, a choice such as \emph{Edema = defer} is not contradictory, but it is still defective: the model is handing off a question whose answer is already fixed by its own parent assertion. Equivalently, deductive closure says that when all hierarchy-consistent completions compatible with the model's current assertions agree on a node, the model should state that value explicitly rather than leave it deferred.

\textbf{Why ``parent present, all children absent'' can still be coherent.}
Section~\ref{sec:theory} adopts an \emph{open-world} view of the taxonomy: the listed children need not exhaust all possible ways that a parent can be present. So a configuration such as \emph{Lung Opacity = present} with all listed children marked absent is still allowed. Intuitively, the parent may be present for a reason that is real but not captured by the modelled child set. This is why the theory only forbids specific bad patterns. It does not require every positive parent to be ``explained'' by one positive child.

\begin{table}[t]
\centering
\caption{Informal reading of the local parent--children patterns under Selective-Exclusion.}
\label{tab:intuitive_coherence}
\small
\setlength{\tabcolsep}{5pt}
\renewcommand{\arraystretch}{1.14}
\begin{tabular}{@{}p{2.0cm}p{3.3cm}p{6.0cm}@{}}
\toprule
\textbf{Parent action} & \textbf{Children} & \textbf{Interpretation} \\
\midrule
Absent & At least one child present &
\colorbox{inc_red}{\strut Impossible}: the model asserts a positive child under a negative parent. \\

Absent & All children absent &
\colorbox{coh_grn}{\strut Coherent}: the model fully closes that part of the tree. \\

Absent & No child present, but at least one child deferred &
\colorbox{epi_org}{\strut Incomplete reasoning}: those deferred children are already forced absent by the parent decision. \\

Present & Any mix of absent / present / defer &
\colorbox{coh_grn}{\strut Coherent}: once the parent is present, the model may exclude, assert, or defer children. \\

Deferred & At least one child present &
\colorbox{op_blu}{\strut Bad handoff}: the model says the expert should decide the parent, but a positive child assertion already decides it. \\

Deferred & Every child absent or deferred &
\colorbox{coh_grn}{\strut Coherent}: the model defers the parent while still being allowed to selectively exclude or defer children. \\
\bottomrule
\end{tabular}
\end{table}

\textbf{Why this makes the problem structured rather than nodewise independent.}
The action chosen at a parent changes what is allowed below it. If the parent is marked absent, the subtree is forced absent. If the parent is deferred, children can no longer be asserted present under Selective-Exclusion. So the best action at a node cannot always be chosen independently of its descendants. This is the intuition behind the non-separability result in Section~\ref{sec:nonseparable}: a locally sensible defer decision at an internal node can remove valuable options lower in the tree.

\textbf{Why even nodewise Bayes decisions can fail.}
In flat L2D, the Bayes rule at one label compares the model's confidence with the expert's probability of being correct. In a hierarchy, doing this independently at every node can still produce an incoherent handoff. For example, the parent may locally prefer deferral because the expert is strong at the parent, while the child may locally prefer an autonomous positive assertion because the model is strong at the child. This gives a configuration such as \emph{Lung Opacity = defer} and \emph{Edema = present}, which is a delegation violation. Thus, incoherence is not only a training error. It can arise from solving the wrong, nodewise Bayes problem.

\textbf{Take-home message.}
Deferral coherence is best thought of as three common-sense rules acting together:
\begin{enumerate}[leftmargin=1.4em]
    \item do not assert an impossible parent--child combination;
    \item do not defer a question while answering it indirectly through a descendant;
    \item do not defer a label that your own assertions have already determined.
\end{enumerate}
Section~\ref{sec:theory} makes these rules precise, and Section~\ref{sec:method} then builds estimators whose predictions respect this coherent decision space.

\section{Alternative Deferral Semantics: Strong Subtree Handoff}
\label{app:semA}

Our main method adopts the \emph{Selective-Exclusion} contract (Definition~\ref{def:selective_exclusion}), which allows selectively excluding subtypes under parent deferral. For completeness, we also consider a stricter workflow semantics in which deferral at a parent triggers a \emph{subtree handoff}: if a parent is deferred then all descendants must be deferred.

\textbf{Modified transition kernel.}
Let $\etaLoc_v(a\mid x)=\mathbb{P}_\theta(a_v=a \mid a_{pa(v)}=\present,x)$ denote the local primitive as in the main text. Under strong subtree handoff, the deferral row becomes deterministic:
\begin{equation}
\mathbf{T}^{(A)}_v(x)=
\begin{bmatrix}
1 & 0 & 0 \\
\etaLoc_v(\absent) & \etaLoc_v(\present) & \etaLoc_v(\defer) \\
0 & 0 & 1
\end{bmatrix},
\end{equation}
which enforces $a_{pa(v)}=\defer \Rightarrow a_v=\defer$ for every edge.

\textbf{Resulting recursion.}
With $\muGlob_v=\muGlob_{pa(v)}^\top \mathbf{T}^{(A)}_v$, the marginals become:
\begin{align}
\muGlob_v(\present) &= \muGlob_{pa(v)}(\present)\,\etaLoc_v(\present),\\
\muGlob_v(\absent) &= \muGlob_{pa(v)}(\absent) + \muGlob_{pa(v)}(\present)\,\etaLoc_v(\absent),\\
\muGlob_v(\defer) &= \muGlob_{pa(v)}(\defer) + \muGlob_{pa(v)}(\present)\,\etaLoc_v(\defer).
\end{align}
This semantics guarantees monotone deferral down the hierarchy, at the cost of disallowing selective exclusion.

The Bayes-optimal coherent decoder under SSH has the same form as Eq.~\eqref{eq:bayes_coherent}, but with the SSH feasible set. Equivalently, the tree dynamic program uses
\[
\Gamma_{\mathrm{SSH}}(\absent)=\{\absent\},\qquad
\Gamma_{\mathrm{SSH}}(\present)=\act,\qquad
\Gamma_{\mathrm{SSH}}(\defer)=\{\defer\}.
\]
Thus changing the handoff contract changes the admissible subtree actions, but not the conditional-risk principle.

\subsection{Contract Sensitivity Analysis}
\label{sensitivity_analysis}

We evaluate the strong subtree handoff (SSH) contract as a stricter alternative to Selective-Exclusion, where deferring an internal node forces all descendants to be deferred. Tables~\ref{tab:ssh_main_results} and~\ref{tab:ssh_defect_decomp} show that the same coherent-decoding machinery applies under SSH: projection again gives zero incoherence, while RPO remains near-coherent and typically preserves strong utility.

\begin{table*}[h!]
\centering
\scriptsize
\caption{
\textbf{Balanced utility on VinDr-CXR and CheXpert under the strong subtree handoff (SSH) contract.}
Values are mean $\pm$ standard deviation for the four utility metrics over the SSH BR-family runs: $15$ runs for VinDr-CXR pooled across R8/R9/R10 with AU-BalAcc stopping, and $20$ runs for CheXpert pooled across bc1/bc2/bc3/bc5 with AU-SysF1-I stopping. The final AU-Neigh-Any column reports pooled means only. Higher is better for the first four columns and lower is better for AU-Neigh-Any. Best means are bolded and second-best means are underlined, using the unrounded pooled means to break near-ties. In each dataset block, the last two rows are the two proposed coherent methods: BR+Projection and BR+RPO, with BR+RPO evaluated under SSH.
}
\label{tab:ssh_main_results}
\setlength{\tabcolsep}{4pt}
\renewcommand{\arraystretch}{1.10}
\begin{tabular}{@{}lccccc@{}}
\toprule
\textbf{Method}
& \makecell{\textbf{AU-BalAcc}\\$\uparrow$}
& \makecell{\textbf{AU-F1-Lmacro}\\$\uparrow$}
& \makecell{\textbf{AU-SysF1-I}\\$\uparrow$}
& \makecell{\textbf{AU-SysF1-L}\\$\uparrow$}
& \makecell{\textbf{AU-Neigh-Any}\\$\downarrow$} \\
\midrule
\multicolumn{6}{c}{\textbf{VinDr-CXR} \cite{PhysioNet-vindr-cxr-1.0.0}} \\
\midrule
BR-L2D
    & $0.8525 \pm 0.0127$
    & $0.4695 \pm 0.0180$
    & $0.5121 \pm 0.0260$
    & $0.7494 \pm 0.0156$
    & $0.107348$ \\
BR (cont.)
    & $0.8473 \pm 0.0118$
    & $\underline{0.4780 \pm 0.0191}$
    & $\underline{0.5135 \pm 0.0268}$
    & $\underline{0.7627 \pm 0.0154}$
    & $0.138449$ \\
\rowcolor{gray!10}
BR+Projection
    & $\underline{0.8530 \pm 0.0125}$
    & $0.4706 \pm 0.0180$
    & $0.5129 \pm 0.0258$
    & $0.7493 \pm 0.0156$
    & $\mathbf{0.000000}$ \\
\rowcolor{gray!10}
BR+RPO
    & $\mathbf{0.8553 \pm 0.0157}$
    & $\mathbf{0.5190 \pm 0.0297}$
    & $\mathbf{0.5214 \pm 0.0269}$
    & $\mathbf{0.7791 \pm 0.0168}$
    & $\underline{0.000477}$ \\
\midrule
\multicolumn{6}{c}{\textbf{CheXpert} \cite{irvin2019chexpert}} \\
\midrule
BR-L2D
    & $0.8108 \pm 0.0084$
    & $0.4936 \pm 0.0325$
    & $0.6469 \pm 0.0320$
    & $0.7361 \pm 0.0140$
    & $0.078582$ \\
BR (cont.)
    & $\underline{0.8224 \pm 0.0175}$
    & $\underline{0.5109 \pm 0.0370}$
    & $\underline{0.6552 \pm 0.0347}$
    & $\underline{0.7518 \pm 0.0269}$
    & $0.072095$ \\
\rowcolor{gray!10}
BR+Projection
    & $0.8104 \pm 0.0076$
    & $0.4943 \pm 0.0332$
    & $0.6483 \pm 0.0322$
    & $0.7363 \pm 0.0136$
    & $\mathbf{0.000000}$ \\
\rowcolor{gray!10}
BR+RPO
    & $\mathbf{0.8373 \pm 0.0095}$
    & $\mathbf{0.5475 \pm 0.0315}$
    & $\mathbf{0.6657 \pm 0.0328}$
    & $\mathbf{0.7740 \pm 0.0138}$
    & $\underline{0.000647}$ \\
\bottomrule
\end{tabular}
\end{table*}

\begin{table*}[h!]
\centering
\scriptsize
\caption{
\textbf{Budget-swept decomposition of deferral incoherence on VinDr-CXR and CheXpert under the strong subtree handoff (SSH) contract.}
Values are pooled AUC means over the same SSH BR-family runs as Table~\ref{tab:ssh_main_results}: $15$ VinDr-CXR runs (AU-BalAcc-stopped) and $20$ CheXpert runs (AU-SysF1-I-stopped). The left block is edge-weighted (EW), i.e.\ a micro-average over $(\text{example}, \text{edge})$ pairs; the right block is the coarser neighbourhood-partition (NP) view over $(\text{example}, \text{parent-neighbourhood})$ pairs. Lower is better. In each dataset block, the last two rows are the two proposed coherent methods: BR+Projection and BR+RPO, with BR+RPO evaluated under SSH.
}
\label{tab:ssh_defect_decomp}
\setlength{\tabcolsep}{4pt}
\renewcommand{\arraystretch}{1.10}
\begin{tabular}{@{}lcccccccc@{}}
\toprule
\textbf{Method}
& \multicolumn{4}{c}{\textbf{Edge-Weighted}} 
& \multicolumn{4}{c}{\textbf{Neighbourhood-Partition}} \\
\cmidrule(lr){2-5}\cmidrule(lr){6-9}
& \makecell{\colorbox{inc_red}{\strut \textbf{Tax.}}\\$\downarrow$}
& \makecell{\colorbox{epi_org}{\strut \textbf{Ded.}}\\$\downarrow$}
& \makecell{\colorbox{op_blu}{\strut \textbf{Del.}}\\$\downarrow$}
& \makecell{\colorbox{black!10}{\strut \textbf{Any}}\\$\downarrow$}
& \makecell{\colorbox{inc_red}{\strut \textbf{Tax.}}\\$\downarrow$}
& \makecell{\colorbox{epi_org}{\strut \textbf{Ded.}}\\$\downarrow$}
& \makecell{\colorbox{op_blu}{\strut \textbf{Del.}}\\$\downarrow$}
& \makecell{\colorbox{black!10}{\strut \textbf{Any}}\\$\downarrow$} \\
\midrule
\multicolumn{9}{c}{\textbf{VinDr-CXR} \cite{PhysioNet-vindr-cxr-1.0.0}} \\
\midrule
BR-L2D
    & $0.000010$ & $0.043205$ & $0.010970$ & $0.054185$
    & $0.000030$ & $0.088128$ & $0.019190$ & $0.107348$ \\
BR (cont.)
    & $0.000007$ & $0.061516$ & $0.014627$ & $0.076150$
    & $0.000020$ & $0.114878$ & $0.023551$ & $0.138449$ \\
\rowcolor{gray!10}
BR+Projection
    & $\mathbf{0.000000}$ & $\mathbf{0.000000}$ & $\mathbf{0.000000}$ & $\mathbf{0.000000}$
    & $\mathbf{0.000000}$ & $\mathbf{0.000000}$ & $\mathbf{0.000000}$ & $\mathbf{0.000000}$ \\
\rowcolor{gray!10}
BR+RPO
    & $\mathbf{0.000000}$
    & $0.000183$
    & $\mathbf{0.000000}$
    & $0.000183$
    & $\mathbf{0.000000}$
    & $0.000477$
    & $\mathbf{0.000000}$
    & $0.000477$ \\
\midrule
\multicolumn{9}{c}{\textbf{CheXpert} \cite{irvin2019chexpert}} \\
\midrule
BR-L2D
    & $\mathbf{0.000000}$ & $0.011519$ & $0.034789$ & $0.046308$
    & $\mathbf{0.000000}$ & $0.021758$ & $0.056823$ & $0.078582$ \\
BR (cont.)
    & $0.000009$ & $0.014994$ & $0.031695$ & $0.046698$
    & $0.000017$ & $0.025054$ & $0.047024$ & $0.072095$ \\
\rowcolor{gray!10}
BR+Projection
    & $\mathbf{0.000000}$ & $\mathbf{0.000000}$ & $\mathbf{0.000000}$ & $\mathbf{0.000000}$
    & $\mathbf{0.000000}$ & $\mathbf{0.000000}$ & $\mathbf{0.000000}$ & $\mathbf{0.000000}$ \\
\rowcolor{gray!10}
BR+RPO
    & $\mathbf{0.000000}$ & $0.000343$ & $\mathbf{0.000000}$ & $0.000343$
    & $\mathbf{0.000000}$ & $0.000647$ & $\mathbf{0.000000}$ & $0.000647$ \\
\bottomrule
\end{tabular}
\end{table*}

\section{Proof of Proposition~\ref{prop:br_bayes_incoherent}}
\label{app:proof_br_bayes_incoherent}

\begin{proof}
For any edge $(p\to c)$, upward implication gives
\[
Y_c=1 \Rightarrow Y_p=1,
\qquad\text{hence}\qquad
\pi_c(x):=P(Y_c=1\mid x)\le P(Y_p=1\mid x)=:\pi_p(x).
\]
Assume no ties. If the nodewise Bayes rule predicts $a_c=\present$, then
\[
\pi_c(x) > 1-\pi_c(x),
\]
so $\pi_c(x)>1/2$. By $\pi_c(x)\le \pi_p(x)$, we have $\pi_p(x)>1/2$, so the parent absent action cannot be locally Bayes-optimal. Hence the nodewise Bayes rule cannot produce $a_p=\absent,\ a_c=\present$.

The two deferral-specific failures can occur by construction. For a delegation violation, take
\[
\pi_p=.70,\qquad \pi_c=.60,\qquad q_p=.80,\qquad q_c=.40,
\]
which is compatible with upward implication, e.g.\
\[
P(Y_p=1,Y_c=1)=.60,\quad
P(Y_p=1,Y_c=0)=.10,\quad
P(Y_p=0,Y_c=0)=.30.
\]
Then the parent Bayes scores are $(1-\pi_p,\pi_p,q_p)=(.30,.70,.80)$, so $a_p=\defer$, while the child scores are $(.40,.60,.40)$, so $a_c=\present$. Thus $(a_p,a_c)=(\defer,\present)$ is a delegation violation.

For a deductive defect, take
\[
\pi_p=.20,\qquad \pi_c=.10,\qquad q_p=.10,\qquad q_c=.95,
\]
compatible for example with
\[
P(Y_p=1,Y_c=1)=.10,\quad
P(Y_p=1,Y_c=0)=.10,\quad
P(Y_p=0,Y_c=0)=.80.
\]
The parent scores are $(.80,.20,.10)$, so $a_p=\absent$, while the child scores are $(.90,.10,.95)$, so $a_c=\defer$. Thus $(a_p,a_c)=(\absent,\defer)$ is a deductive defect. Therefore, even oracle nodewise Bayes L2D can be deferral-incoherent under SE: it may either violate the SE contract through a delegation violation or fail deductive closure through a deductive defect.
\end{proof}

\section{Validity and Structural Properties of Taxonomic Belief Propagation}
\label{app:theory}

In this appendix, we prove basic validity and structural properties of the
\emph{Taxonomic Belief Propagation} (TBP) mechanism in the main paper
(§\ref{sec:tbp}) under the \textbf{Selective-Exclusion} contract
(§\ref{def:selective_exclusion}). Throughout, we consider a single edge
$(pa(v)\to v)$ in a tree taxonomy.

\subsection{Setup and notation}

Let $\muGlob_{pa(v)}(x)\in\Delta^2$ denote the \emph{unconditional} marginal action
distribution of the parent node, and let $\etaLoc_{v}(\cdot\mid x)\in\Delta^2$
denote the \emph{local primitive} distribution at node $v$ conditioned on the
parent being asserted present, as in Eq.~\eqref{eq:local_primitive}. That is,
for $\act=\{\absent,\present,\defer\}$,
\begin{align}
\sum_{a\in\act}\muGlob_{pa(v)}(a\mid x) &= 1,\qquad \muGlob_{pa(v)}(a\mid x)\ge 0,\\
\sum_{a\in\act}\etaLoc_{v}(a\mid x) &= 1,\qquad \etaLoc_{v}(a\mid x)\ge 0.
\end{align}

Under Selective-Exclusion, when the parent is deferred, the child may not be
asserted present; the remaining mass is split between $\absent$ and $\defer$ by
renormalising $\etaLoc_v$ onto $\{\absent,\defer\}$. Define
\begin{equation}
\label{eq:app_alpha}
\alpha_v(x)
:= \mathbb{P}(a_v=\absent \mid a_{pa(v)}=\defer, x)
= \frac{\etaLoc_v(\absent\mid x)}{\etaLoc_v(\absent\mid x)+\etaLoc_v(\defer\mid x)},
\end{equation}
with the convention $\alpha_v(x)=1$ when
$\etaLoc_v(\absent\mid x)+\etaLoc_v(\defer\mid x)=0$.
By construction, $\alpha_v(x)\in[0,1]$.

\subsection{TBP transition kernel and recursion (Selective-Exclusion)}

Define the Selective-Exclusion transition kernel $\mathbf T_v(x)\in[0,1]^{3\times 3}$
(rows/cols ordered as $[\absent,\present,\defer]$) by Eq.~\eqref{eq:transition_kernel}:
\begin{equation}
\label{eq:app_transition_kernel}
\mathbf T_v(x)=
\begin{bmatrix}
1 & 0 & 0 \\
\etaLoc_v(\absent\mid x) & \etaLoc_v(\present\mid x) & \etaLoc_v(\defer\mid x) \\
\alpha_v(x) & 0 & 1-\alpha_v(x)
\end{bmatrix}.
\end{equation}
TBP defines the child marginal by passing the parent marginal through the kernel,
as in Eq.~\eqref{eq:tbp_recursion}:
\begin{equation}
\label{eq:app_tbp_recursion}
\muGlob_{v}(x) \;=\; \muGlob_{pa(v)}(x)^\top\, \mathbf T_v(x).
\end{equation}
Expanding \eqref{eq:app_tbp_recursion} yields exactly the main-paper equations:
\begin{align}
\muGlob_v(\present\mid x)
&= \muGlob_{pa(v)}(\present\mid x)\,\etaLoc_v(\present\mid x),
\label{eq:app_prop_present}\\[4pt]
\muGlob_v(\absent\mid x)
&= \muGlob_{pa(v)}(\absent\mid x)
 + \muGlob_{pa(v)}(\present\mid x)\,\etaLoc_v(\absent\mid x)
 + \muGlob_{pa(v)}(\defer\mid x)\,\alpha_v(x),
\label{eq:app_prop_absent}\\[4pt]
\muGlob_v(\defer\mid x)
&= \muGlob_{pa(v)}(\present\mid x)\,\etaLoc_v(\defer\mid x)
 + \muGlob_{pa(v)}(\defer\mid x)\,[1-\alpha_v(x)].
\label{eq:app_prop_defer}
\end{align}

\subsection{Property (i): Normalisation}

\begin{theorem}[Normalisation]
\label{thm:tbp_norm}
For every $x$, the TBP output $\muGlob_v(x)$ is a valid probability distribution:
\(
\sum_{a\in\act}\muGlob_v(a\mid x)=1.
\)
\end{theorem}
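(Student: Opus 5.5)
The plan is to prove the statement by induction over the tree, from the roots downward. The root case is immediate: $\muGlob_r(x)=\etaLoc_r(\cdot\mid x)$ is a softmax output (Eq.~\eqref{eq:local_primitive}), so it is nonnegative and sums to one. For the inductive step we may assume that $\muGlob_{pa(v)}(x)\in\Delta^2$. It then suffices to show that $\mathbf T_v(x)$ in Eq.~\eqref{eq:app_transition_kernel} is row-stochastic, meaning its entries are nonnegative and each row sums to one. Once that holds, $\muGlob_v=\muGlob_{pa(v)}^\top\mathbf T_v$ is a convex combination of the rows of $\mathbf T_v$, and therefore lies in the simplex.

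Next I check the three rows. The absent row $(1,0,0)$ clearly sums to one. The present row is $\etaLoc_v(\cdot\mid x)$, which sums to one because $\etaLoc_v$ is a distribution. The deferred row is $(\alpha_v,0,1-\alpha_v)$, which sums to one for any real $\alpha_v$. Nonnegativity of this row needs $\alpha_v\in[0,1]$. That follows from Eq.~\eqref{eq:app_alpha}: the numerator $\etaLoc_v(\absent\mid x)$ is nonnegative and at most the denominator. In the degenerate case where the denominator vanishes, the stated convention $\alpha_v=1$ applies. Since the softmax is strictly positive, the denominator is in fact never zero, so the convention is only a formality.

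As a cross-check, I will sum Eqs.~\eqref{eq:app_prop_present}--\eqref{eq:app_prop_defer} directly. The terms group by parent action: $\muGlob_{pa(v)}(\absent)\cdot 1$, then $\muGlob_{pa(v)}(\present)\sum_a\etaLoc_v(a)=\muGlob_{pa(v)}(\present)$, then $\muGlob_{pa(v)}(\defer)[\alpha_v+1-\alpha_v]=\muGlob_{pa(v)}(\defer)$. These add up to $\sum_a\muGlob_{pa(v)}(a)=1$ by the inductive hypothesis. Because the tree is processed from the roots down, every node is covered.

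There is no real obstacle in this argument. The only care needed is in the deferred row: confirming that $\alpha_v\in[0,1]$, including the zero-denominator convention, so that nonnegativity holds and not just summation to one.
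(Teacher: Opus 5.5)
Your proof is correct and matches the paper's. The paper sums Eqs.~\eqref{eq:app_prop_present}--\eqref{eq:app_prop_defer} and groups terms by parent action, exactly as in your cross-check, which is equivalent to your row-stochasticity argument; you additionally make the root-down induction explicit (the paper simply assumes $\muGlob_{pa(v)}$ is normalised in its setup) and prove nonnegativity, which the paper treats separately in Theorem~\ref{thm:tbp_bounds}.
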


\begin{proof}
Sum Eqs.~\eqref{eq:app_prop_present}--\eqref{eq:app_prop_defer}:
\begin{align}
&\muGlob_v(\absent\mid x)+\muGlob_v(\present\mid x)+\muGlob_v(\defer\mid x) \notag\\
&=
\muGlob_{pa(v)}(\absent\mid x)
+ \muGlob_{pa(v)}(\present\mid x)\big(\etaLoc_v(\absent\mid x)+\etaLoc_v(\present\mid x)+\etaLoc_v(\defer\mid x)\big) \notag\\
&\quad
+ \muGlob_{pa(v)}(\defer\mid x)\big(\alpha_v(x) + (1-\alpha_v(x))\big) \notag\\
&=
\muGlob_{pa(v)}(\absent\mid x)
+ \muGlob_{pa(v)}(\present\mid x)\cdot 1
+ \muGlob_{pa(v)}(\defer\mid x)\cdot 1 \notag\\
&= 1,
\end{align}
using that $\etaLoc_v(\cdot\mid x)$ and $\muGlob_{pa(v)}(\cdot\mid x)$ are normalised.
\end{proof}

\subsection{Property (ii): Nonnegativity and bounds}

\begin{theorem}[Bounds]
\label{thm:tbp_bounds}
For all $a\in\act$ and all $x$, $0\le \muGlob_v(a\mid x)\le 1$.
\end{theorem}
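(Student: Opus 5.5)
The argument is a short induction on depth together with Theorem~\ref{thm:tbp_norm}. Lower bounds come first, then the upper bound follows from normalisation.

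\textbf{Nonnegativity.} I would induct on the depth of $v$ in the tree. At a root, $\muGlob_r(x)=\etaLoc_r(\cdot\mid x)$ is a softmax output, so every entry lies in $[0,1]$. For a non-root $v$, assume $\muGlob_{pa(v)}(\cdot\mid x)\ge 0$. Every entry of $\mathbf T_v(x)$ in \eqref{eq:app_transition_kernel} is nonnegative: the entries $1$ and $0$ trivially, the $\etaLoc_v$ entries because they are softmax probabilities, and $\alpha_v(x)$ and $1-\alpha_v(x)$ because $\alpha_v(x)\in[0,1]$. The degenerate case $\etaLoc_v(\absent)+\etaLoc_v(\defer)=0$ is covered by the convention $\alpha_v=1$. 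Each of \eqref{eq:app_prop_present}--\eqref{eq:app_prop_defer} is therefore a sum of products of nonnegative quantities, so $\muGlob_v(a\mid x)\ge 0$ for every $a\in\act$.

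\textbf{Upper bound.} For any fixed $a$, Theorem~\ref{thm:tbp_norm} gives
\[
\muGlob_v(a\mid x)=1-\sum_{a'\neq a}\muGlob_v(a'\mid x)\le 1,
\]
since the subtracted terms are nonnegative by the previous step. Theorem~\ref{thm:tbp_norm} was stated for a single edge, assuming the parent marginal is normalised. Applying it along the same depth induction gives normalisation at every node, so the two properties can be carried jointly in one inductive hypothesis: $\muGlob_{pa(v)}(x)\in\Delta^2$.

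\textbf{Main subtlety.} The only delicate point is whether $\alpha_v(x)\in[0,1]$ holds in the degenerate case. This is handled by the stated convention. Otherwise it follows because the numerator $\etaLoc_v(\absent\mid x)$ is nonnegative and no larger than the denominator $\etaLoc_v(\absent\mid x)+\etaLoc_v(\defer\mid x)$. Everything else is routine.
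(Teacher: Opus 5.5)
Your proof is correct and follows the paper's argument. Nonnegativity holds because each coordinate of the expanded recursion is a sum of products of nonnegative terms with $\alpha_v(x)\in[0,1]$, and the upper bound then follows from normalisation (Theorem~\ref{thm:tbp_norm}); your explicit depth induction just makes rigorous the paper's standing assumption that the parent marginal $\muGlob_{pa(v)}(x)$ lies in $\Delta^2$.
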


\begin{proof}
All terms in Eqs.~\eqref{eq:app_prop_present}--\eqref{eq:app_prop_defer} are sums
and products of nonnegative quantities, and $\alpha_v(x)\in[0,1]$, so
$\muGlob_v(a\mid x)\ge 0$ for each $a$. By §\ref{thm:tbp_norm}, the components sum
to $1$, hence no single component can exceed $1$.
\end{proof}

\subsection{Property (iii): Contract and coherence constraints are enforced}

\begin{theorem}[Coherence constraints as zero-probability transitions]
\label{thm:tbp_constraints}
The TBP kernel enforces the local coherence constraints induced by taxonomic satisfiability, deductive closure, and the Selective-Exclusion contract:
\begin{enumerate}[leftmargin=*]
\item If the parent is asserted absent, then the child is forced absent:
\(
\mathbb P(a_v=\present\mid a_{pa(v)}=\absent,x)=
\mathbb P(a_v=\defer\mid a_{pa(v)}=\absent,x)=0.
\)
\item If the parent is deferred, the child cannot be asserted present:
\(
\mathbb P(a_v=\present\mid a_{pa(v)}=\defer,x)=0.
\)
\end{enumerate}
\end{theorem}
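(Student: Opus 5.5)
The plan is to read both claims directly off the kernel entries in \eqref{eq:app_transition_kernel}. Under the TBP joint model \eqref{eq:tbp_joint}, the conditional law of $a_v$ given $a_{pa(v)}=i$ and $x$ is by definition the $i$-th row of $\mathbf T_v(x)$. With rows and columns ordered $[\absent,\present,\defer]$, we have $\mathbb P(a_v=j\mid a_{pa(v)}=i,x)=(\mathbf T_v(x))_{ij}$. This holds whenever the parent event has positive probability under the composed model. When that probability is zero, we simply take the kernel row as the defining conditional, as the main text does.

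\textbf{Case 1 (parent absent).} Row $\absent$ of $\mathbf T_v(x)$ is $(1,0,0)$. Hence $\mathbb P(a_v=\present\mid a_{pa(v)}=\absent,x)=(\mathbf T_v)_{\absent\present}=0$ and $\mathbb P(a_v=\defer\mid a_{pa(v)}=\absent,x)=(\mathbf T_v)_{\absent\defer}=0$. These are exactly the two forbidden edge patterns of Proposition~\ref{prop:local_characterization}: the taxonomic contradiction and the deductive defect (cf.\ Corollary~\ref{cor:downward_contradiction}).

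\textbf{Case 2 (parent deferred).} Row $\defer$ is $(\alpha_v(x),0,1-\alpha_v(x))$, so $(\mathbf T_v)_{\defer\present}=0$. This excludes the delegation violation of Definition~\ref{def:delegation_violation}. One must also check that this row is a genuine distribution, which is where the $\alpha_v$ convention matters. For the degenerate case $\etaLoc_v(\absent\mid x)+\etaLoc_v(\defer\mid x)=0$, we use the stated convention $\alpha_v(x)=1$. In all cases $\alpha_v(x)\in[0,1]$, so the row is nonnegative and sums to one. This is consistent with Theorem~\ref{thm:tbp_norm}.

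To connect the per-edge statement to the full joint action vector, I would observe a consequence of the product form \eqref{eq:tbp_joint}. Any $\mathbf a$ containing a forbidden pair on some edge $(pa(v)\to v)$ includes the factor $\mathbf T_v(x)_{a_{pa(v)},a_v}=0$, and therefore $P_{\mathrm{TBP}}(\mathbf a\mid x)=0$. By Proposition~\ref{prop:local_characterization}, the support of $P_{\mathrm{TBP}}$ is then contained in $\mathcal C_{\mathrm{SE}}$. This step is routine; the only place needing care is the conditioning convention in the zero-probability case, and that is handled by the definition of the kernel rows.
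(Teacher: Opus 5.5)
Your proposal is correct and follows the paper's own proof, which simply reads off the kernel rows $[1,0,0]$ and $[\alpha_v(x),0,1-\alpha_v(x)]$ of \eqref{eq:app_transition_kernel}. Your additional checks go beyond what the statement requires but are sound: the row normalisation under the $\alpha_v$ convention, and the lift via the product form \eqref{eq:tbp_joint} to zero joint probability outside $\mathcal C_{\mathrm{SE}}$ (which follows directly from the edgewise definition of $\mathcal C_{\mathrm{SE}}$).
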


\begin{proof}
Both claims follow immediately from the corresponding rows of the transition
matrix \eqref{eq:app_transition_kernel}. The first row is $[1,0,0]$, and the
third row is $[\alpha_v(x),0,1-\alpha_v(x)]$.
\end{proof}

\begin{proposition}[Coherent handoff preserves label consistency]
\label{prop:handoff_preserves_consistency}
Let the completed system label be
\[
\hat y^{\mathrm{sys}}_v(a,m)=
\begin{cases}
a_v, & a_v\in\{\absent,\present\},\\
m_v, & a_v=\defer.
\end{cases}
\]
If $a\in\mathcal C_{\mathrm{SE}}$ and the expert label vector $m\in\mathcal Y_{\mathcal T}$, then $\hat y^{\mathrm{sys}}(a,m)\in\mathcal Y_{\mathcal T}$.
\end{proposition}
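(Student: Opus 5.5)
Since $\mathcal Y_{\mathcal T}$ is defined edgewise by upward implication, the plan is to fix an arbitrary edge $(p\to c)\in\mathcal E$ and show that $\hat y^{\mathrm{sys}}_c=1$ forces $\hat y^{\mathrm{sys}}_p=1$. Membership $a\in\mathcal C_{\mathrm{SE}}$ gives the local constraint $a_c\in\Gamma_{\mathrm{SE}}(a_p)$, so it is natural to split into three cases according to the parent action $a_p\in\{\absent,\present,\defer\}$, and within each case to use the restricted set of admissible child actions.

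\emph{Case $a_p=\absent$.} Then $\Gamma_{\mathrm{SE}}(\absent)=\{\absent\}$ forces $a_c=\absent$. Hence $\hat y^{\mathrm{sys}}_c=0$, and the implication holds vacuously. This is precisely where excluding the deductive-defect pattern $(\absent,\defer)$ matters. Without it, a deferred child could receive an expert label $m_c=1$ under a model-asserted absent parent, breaking consistency even though $m$ itself is consistent.

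\emph{Case $a_p=\present$.} Here $\hat y^{\mathrm{sys}}_p=1$ directly, so the implication holds regardless of the child.

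\emph{Case $a_p=\defer$.} Then $\hat y^{\mathrm{sys}}_p=m_p$ and $a_c\in\{\absent,\defer\}$. If $a_c=\absent$, then $\hat y^{\mathrm{sys}}_c=0$ and there is nothing to show. If $a_c=\defer$, both completed labels come from the expert, so $\hat y^{\mathrm{sys}}_c=m_c=1$ implies $m_p=1$ by $m\in\mathcal Y_{\mathcal T}$. This is the only step that uses the hypothesis on $m$. The excluded delegation-violation pattern $(\defer,\present)$ is exactly the case where a model-asserted positive child could sit under an expert-supplied $m_p=0$.

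Since every edge satisfies upward implication, $\hat y^{\mathrm{sys}}(a,m)\in\mathcal Y_{\mathcal T}$. The argument is essentially bookkeeping and I do not expect a real obstacle. The only care needed is to check that the case split over $\Gamma_{\mathrm{SE}}$ is exhaustive, and to note that the tree structure is not actually needed: the argument is purely edgewise, so it extends verbatim to any edge set defining $\mathcal Y_{\mathcal T}$. As a closing remark, I would point out that each of the three forbidden patterns of Proposition~\ref{prop:local_characterization} is necessary for this guarantee.
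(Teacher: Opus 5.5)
Your proof is correct and follows the paper's argument essentially verbatim. Like the paper, you split each edge by cases on $a_p$, use $\Gamma_{\mathrm{SE}}(a_p)$ to restrict $a_c$, and invoke $m\in\mathcal Y_{\mathcal T}$ only in the deferred-parent, deferred-child case. Your two extra remarks are also correct and go slightly beyond what the paper states: the argument never uses the tree structure, and each of the three excluded edge patterns is needed for the guarantee to hold for every hierarchy-consistent $m$.
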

\begin{proof}
For any edge $(p\to c)$, if $a_p=\absent$, coherence forces $a_c=\absent$. If $a_p=\present$, the completed parent is positive. If $a_p=\defer$, coherence gives $a_c\in\{\absent,\defer\}$; in the deferred-child case both labels come from the expert, whose labels are hierarchy-consistent. Thus no completed positive child can occur under a completed negative parent.
\end{proof}

\subsection{Monotonicity-style implications}

\begin{theorem}[Upward-implication-style marginal bound]
\label{thm:tbp_upward}
For every $x$,
\(
\muGlob_v(\present\mid x)\le \muGlob_{pa(v)}(\present\mid x).
\)
\end{theorem}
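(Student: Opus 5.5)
The plan is to read the bound directly off the present-coordinate of the TBP recursion, Eq.~\eqref{eq:app_prop_present}. The only column of the kernel $\mathbf T_v(x)$ in Eq.~\eqref{eq:app_transition_kernel} that feeds mass into the child's present action is the middle one. Its entries are $0$ in the parent-absent row, $\etaLoc_v(\present\mid x)$ in the parent-present row, and $0$ in the parent-deferred row. These structural zeros are exactly the two constraints recorded in Theorem~\ref{thm:tbp_constraints}. Multiplying out $\muGlob_{pa(v)}(x)^\top\mathbf T_v(x)$ in that coordinate therefore gives
\[
\muGlob_v(\present\mid x)=\muGlob_{pa(v)}(\present\mid x)\,\etaLoc_v(\present\mid x),
\]
with no contribution from the parent-absent or parent-deferred mass.

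The second step uses that $\etaLoc_v(\cdot\mid x)$ is a softmax output, hence a probability vector. So $0\le\etaLoc_v(\present\mid x)\le 1$. Also $\muGlob_{pa(v)}(\present\mid x)\ge 0$: for a root this holds because $\muGlob_r=\etaLoc_r$, and for a non-root it follows from Theorem~\ref{thm:tbp_bounds} applied to the parent, by induction down the tree. Multiplying a nonnegative number by a factor in $[0,1]$ cannot increase it, which gives the claimed inequality for every $x$.

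There is no genuine obstacle here. The only point needing care is to confirm that both zero entries in the present column are really zero, since otherwise parent-absent or parent-deferred mass could leak into the child's present action and break the bound. That is exactly what the Selective-Exclusion rows enforce. The renormalisation $\alpha_v$ only redistributes mass between $\absent$ and $\defer$, and the convention $\alpha_v=1$ on the degenerate face does not affect the present column. Finally, iterating along any root-to-node path gives the corollary that present-marginals are monotone non-increasing down the tree. This is the action-space analogue of the label-space upward implication in Eq.~\eqref{eq:upward}.
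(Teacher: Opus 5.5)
Your proposal is correct and follows the paper's proof: both read off $\muGlob_v(\present\mid x)=\muGlob_{pa(v)}(\present\mid x)\,\etaLoc_v(\present\mid x)$ from the expanded TBP recursion and conclude from $\etaLoc_v(\present\mid x)\in[0,1]$. You also state explicitly that the parent's present-marginal is nonnegative (via Theorem~\ref{thm:tbp_bounds} and induction down the tree), which the paper's proof uses only implicitly.
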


\begin{proof}
By \eqref{eq:app_prop_present},
\(
\muGlob_v(\present\mid x)=\muGlob_{pa(v)}(\present\mid x)\,\etaLoc_v(\present\mid x).
\)
Since $\etaLoc_v(\present\mid x)\in[0,1]$, the inequality follows.
\end{proof}

\begin{theorem}[Effect of parent deferral mass under Selective-Exclusion]
\label{thm:tbp_parent_defer_split}
Under Selective-Exclusion, the contribution of parent deferral mass to the child marginal is split between the child's \(\absent\) and \(\defer\) outcomes. Algebraically, in Eqs.~\eqref{eq:app_prop_absent}--\eqref{eq:app_prop_defer}, the coefficient of \(\muGlob_{pa(v)}(\defer\mid x)\) is \(\alpha_v(x)\) in \(\muGlob_v(\absent\mid x)\) and \(1-\alpha_v(x)\) in \(\muGlob_v(\defer\mid x)\). Thus parent deferral mass is redistributed between child absence and child deferral with weights \(\alpha_v(x)\) and \(1-\alpha_v(x)\), respectively.
\end{theorem}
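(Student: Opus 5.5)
The claim is a direct algebraic reading of the TBP recursion, so I will obtain it by expanding $\muGlob_v(x)=\muGlob_{pa(v)}(x)^\top\mathbf T_v(x)$ column by column, as in Eqs.~\eqref{eq:app_prop_present}--\eqref{eq:app_prop_defer}, and then isolating the terms that multiply $\muGlob_{pa(v)}(\defer\mid x)$.

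\textbf{Step 1: isolate the parent-deferral contribution.} By the definition of the vector--matrix product, for each child action $j\in\act$ we have
\[
\muGlob_v(j\mid x)=\sum_{i\in\act}\muGlob_{pa(v)}(i\mid x)\,(\mathbf T_v)_{ij}.
\]
Hence the coefficient of $\muGlob_{pa(v)}(\defer\mid x)$ in $\muGlob_v(j\mid x)$ is exactly the third-row entry $(\mathbf T_v)_{\defer,j}$. I will read these entries from the Selective-Exclusion kernel~\eqref{eq:app_transition_kernel}:
\begin{itemize}
\item in column $\absent$ the entry is $\alpha_v(x)$;
\item in column $\present$ the entry is $0$;
\item in column $\defer$ the entry is $1-\alpha_v(x)$.
\end{itemize}
These match the terms that appear explicitly in Eqs.~\eqref{eq:app_prop_absent} and~\eqref{eq:app_prop_defer}, and the absence of any such term in Eq.~\eqref{eq:app_prop_present}.

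\textbf{Step 2: show the parent-deferral mass is redistributed, not lost or created.}
\begin{itemize}
\item The total contribution of $\muGlob_{pa(v)}(\defer\mid x)$ to the child marginal is $\alpha_v(x)+0+(1-\alpha_v(x))=1$ times that mass. This is the row-normalisation already used in the proof of Theorem~\ref{thm:tbp_norm}.
\item Since $\alpha_v(x)\in[0,1]$, including the convention $\alpha_v=1$ when the denominator vanishes, both weights are valid probabilities.
\item The zero in the $\present$ column is the Selective-Exclusion constraint of Theorem~\ref{thm:tbp_constraints}(2). It guarantees that none of this mass reaches child-present.
\end{itemize}

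\textbf{Step 3: interpret.} Conditional on $a_{pa(v)}=\defer$, the child action is $\absent$ with probability $\alpha_v(x)$ and $\defer$ with probability $1-\alpha_v(x)$. Under the joint model~\eqref{eq:tbp_joint}, the event $\{a_{pa(v)}=\defer\}$ therefore has its mass $\muGlob_{pa(v)}(\defer\mid x)$ partitioned into $\{a_v=\absent\}$ and $\{a_v=\defer\}$ in exactly these proportions.

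No step is a genuine obstacle. The only point needing care is the degenerate case $\etaLoc_v(\absent\mid x)+\etaLoc_v(\defer\mid x)=0$, where $\alpha_v$ is set by convention. There the convention still yields a valid row $[1,0,0]$, so the statement holds unchanged.
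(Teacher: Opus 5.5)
Your proposal is correct and takes the same route as the paper: both read off the coefficients of $\muGlob_{pa(v)}(\defer\mid x)$ directly from the expanded recursion (equivalently, the third row of $\mathbf T_v(x)$), and both use $\alpha_v(x)\in[0,1]$ to conclude that this mass is split between child $\absent$ and child $\defer$. Your additional remarks on row-normalisation, the zero $\present$ entry, the probabilistic reading under the joint model, and the degenerate convention are sound.
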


\begin{proof}
This follows directly from the expanded TBP recursion:
\[
\muGlob_v(\absent\mid x)
=
\muGlob_{pa(v)}(\absent\mid x)
+
\muGlob_{pa(v)}(\present\mid x)\etaLoc_v(\absent\mid x)
+
\muGlob_{pa(v)}(\defer\mid x)\alpha_v(x),
\]
and
\[
\muGlob_v(\defer\mid x)
=
\muGlob_{pa(v)}(\present\mid x)\etaLoc_v(\defer\mid x)
+
\muGlob_{pa(v)}(\defer\mid x)(1-\alpha_v(x)).
\]
The coefficients multiplying \(\muGlob_{pa(v)}(\defer\mid x)\) are therefore \(\alpha_v(x)\) and \(1-\alpha_v(x)\). Since \(\alpha_v(x)\in[0,1]\), parent deferral mass is split between child \(\absent\) and child \(\defer\). This is an algebraic decomposition of the recursion, not an independent perturbation of one coordinate of the parent marginal off the probability simplex.
\end{proof}

\subsection{Reverse-KL characterisation of the deferred-parent row}
\label{norm_proof}
\begin{proposition}[Selective-Exclusion renormalisation as an exact KL projection]
\label{prop:alpha_kl_projection}
Fix a non-root node $v$ and input $x$. Let
\[
\etaLoc_v(\cdot\mid x)\in\Delta^2
\]
be the local primitive distribution over
\(
\act=\{\absent,\present,\defer\}
\),
with coordinates ordered as $[\absent,\present,\defer]$. Assume
\[
\etaLoc_v(\absent\mid x)+\etaLoc_v(\defer\mid x)>0,
\]
which holds in our parameterisation since $\etaLoc_v$ is produced by a softmax. Under Selective-Exclusion, when the parent is deferred, the admissible child-action distributions are
\[
\mathcal Q_v^{\defer}(x)
:=
\{q\in\Delta^2:\; q(\present)=0\}.
\]
Then the unique solution of
\begin{equation}
\label{eq:kl_proj_problem}
q_v^\star(\cdot\mid x)
\in
\arg\min_{q\in\mathcal Q_v^{\defer}(x)}
D_{\mathrm{KL}}\!\bigl(q \,\|\, \etaLoc_v(\cdot\mid x)\bigr)
\end{equation}
is
\begin{equation}
\label{eq:kl_proj_solution}
q_v^\star(\cdot\mid x)
=
\left[
\frac{\etaLoc_v(\absent\mid x)}
{\etaLoc_v(\absent\mid x)+\etaLoc_v(\defer\mid x)},
\;
0,
\;
\frac{\etaLoc_v(\defer\mid x)}
{\etaLoc_v(\absent\mid x)+\etaLoc_v(\defer\mid x)}
\right].
\end{equation}
Equivalently,
\[
q_v^\star(\absent\mid x)=\alpha_v(x),
\qquad
q_v^\star(\defer\mid x)=1-\alpha_v(x),
\]
where $\alpha_v(x)$ is defined in Eq.~\eqref{eq:app_alpha}.
\end{proposition}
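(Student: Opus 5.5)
Write $\eta_a:=\etaLoc_v(a\mid x)$ for $a\in\act$ and $Z:=\eta_\absent+\eta_\defer>0$. The plan is to reduce the problem to a KL divergence against the renormalised distribution restricted to the admissible face. For $q\in\mathcal Q_v^{\defer}(x)$ we have $q(\present)=0$, so by the convention $0\log 0=0$ only the coordinates $\absent,\defer$ contribute:
\[
D_{\mathrm{KL}}\bigl(q\,\|\,\etaLoc_v\bigr)=\sum_{a\in\{\absent,\defer\}} q(a)\log\frac{q(a)}{\eta_a}.
\]
Since $q(\absent)+q(\defer)=1$ on this face, substituting $\eta_a=Z\,\tilde\eta_a$ with $\tilde\eta_a:=\eta_a/Z$ gives the identity
\[
D_{\mathrm{KL}}\bigl(q\,\|\,\etaLoc_v\bigr)=D_{\mathrm{KL}}\bigl(q\,\|\,\tilde\eta\bigr)-\log Z .
\]
Here $\tilde\eta=[\alpha_v(x),0,1-\alpha_v(x)]$ is itself a member of $\mathcal Q_v^{\defer}(x)$.

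The key step is to invoke Gibbs' inequality. It gives $D_{\mathrm{KL}}(q\,\|\,\tilde\eta)\ge 0$, with equality if and only if $q=\tilde\eta$. Hence the objective is bounded below by $-\log Z$, and $\tilde\eta$ attains this bound. This establishes that $\tilde\eta$ is a minimiser and equals the stated formula \eqref{eq:kl_proj_solution}. It then matches $\alpha_v(x)$ in Eq.~\eqref{eq:app_alpha} by definition.

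For uniqueness, the equality case of Gibbs' inequality shows that any other minimiser $q$ must satisfy $D_{\mathrm{KL}}(q\,\|\,\tilde\eta)=0$, and therefore $q=\tilde\eta$.

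The main obstacle is boundary bookkeeping: the case where $\eta_\absent$ or $\eta_\defer$ is zero. If $\eta_a=0$ while $q(a)>0$, the divergence is $+\infty$, so such $q$ are never minimisers and the identity above still holds in the extended reals. In our softmax parameterisation all $\eta_a>0$, so I would note this and otherwise argue in the interior. As a cross-check, one could alternatively minimise $t\log(t/\eta_\absent)+(1-t)\log((1-t)/\eta_\defer)$ over $t\in[0,1]$. This function is strictly convex, and its stationarity condition $\log\frac{t\,\eta_\defer}{(1-t)\eta_\absent}=0$ yields $t=\alpha_v(x)$.
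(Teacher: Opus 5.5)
Your proof is correct, and it takes a different route from the paper's proof.

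The paper restricts the objective to the face $q(\present)=0$. It introduces a single Lagrange multiplier for $q_\absent+q_\defer=1$ and solves the first-order conditions to get $q_a\propto\eta_a$. It then appeals to strict convexity of the restricted objective, using $\eta_\absent,\eta_\defer>0$, for global optimality and uniqueness.

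You instead recentre the divergence. The identity $D_{\mathrm{KL}}(q\,\|\,\etaLoc_v)=D_{\mathrm{KL}}(q\,\|\,\tilde\eta)-\log Z$ holds on the face, and Gibbs' inequality then gives existence, the explicit minimiser, uniqueness, and the optimal value $-\log Z$ at once. This needs no calculus and no separate convexity argument.

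Your route also handles the statement's actual hypothesis $Z>0$ directly. That includes the degenerate case where one of $\eta_\absent,\eta_\defer$ vanishes, treated with extended-real arithmetic. The paper's stationarity argument implicitly requires both to be strictly positive and the optimum to lie in the interior, so that the nonnegativity constraints can be ignored.

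The Lagrangian method is the more mechanical template when no such recentering identity is available, for example other divergences or more complex constraint sets. Your one-dimensional cross-check is essentially the paper's argument.
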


\begin{proof}
Write
\[
\eta_0:=\etaLoc_v(\absent\mid x),\qquad
\eta_1:=\etaLoc_v(\present\mid x),\qquad
\eta_\bot:=\etaLoc_v(\defer\mid x).
\]
Any feasible $q\in\mathcal Q_v^{\defer}(x)$ has the form
\[
q=(q_0,0,q_\bot),
\qquad
q_0\ge 0,\quad q_\bot\ge 0,\quad q_0+q_\bot=1.
\]
Substituting this into the reverse KL objective gives
\[
D_{\mathrm{KL}}(q\|\etaLoc_v)
=
q_0\log\frac{q_0}{\eta_0}
+
q_\bot\log\frac{q_\bot}{\eta_\bot},
\]
since the middle term is
\(
q_\present\log\!\bigl(q_\present/\eta_1\bigr)=0
\)
by feasibility.

We therefore solve
\[
\min_{q_0,q_\bot\ge 0,\; q_0+q_\bot=1}
\left\{
q_0\log\frac{q_0}{\eta_0}
+
q_\bot\log\frac{q_\bot}{\eta_\bot}
\right\}.
\]
Introducing a Lagrange multiplier $\lambda$ for the constraint $q_0+q_\bot=1$, the Lagrangian is
\[
\mathcal L(q_0,q_\bot,\lambda)
=
q_0\log\frac{q_0}{\eta_0}
+
q_\bot\log\frac{q_\bot}{\eta_\bot}
+
\lambda(q_0+q_\bot-1).
\]
The first-order optimality conditions are
\[
\frac{\partial \mathcal L}{\partial q_0}
=
\log\frac{q_0}{\eta_0}+1+\lambda
=
0,
\qquad
\frac{\partial \mathcal L}{\partial q_\bot}
=
\log\frac{q_\bot}{\eta_\bot}+1+\lambda
=
0.
\]
Subtracting the two equations yields
\[
\log\frac{q_0}{\eta_0}
=
\log\frac{q_\bot}{\eta_\bot},
\]
and hence
\[
\frac{q_0}{\eta_0}
=
\frac{q_\bot}{\eta_\bot}
=
c
\]
for some constant $c>0$. Using the normalisation constraint,
\[
q_0+q_\bot
=
c(\eta_0+\eta_\bot)
=
1,
\]
so
\[
c=\frac{1}{\eta_0+\eta_\bot}.
\]
Therefore
\[
q_0^\star=\frac{\eta_0}{\eta_0+\eta_\bot},
\qquad
q_\bot^\star=\frac{\eta_\bot}{\eta_0+\eta_\bot},
\]
which proves Eq.~\eqref{eq:kl_proj_solution}.

Finally, since $\eta_0,\eta_\bot>0$ under the softmax parameterisation, the map
\[
(q_0,q_\bot)\mapsto
q_0\log\frac{q_0}{\eta_0}
+
q_\bot\log\frac{q_\bot}{\eta_\bot}
\]
is strictly convex on the feasible set. Hence the stationary point above is the unique global minimiser of \eqref{eq:kl_proj_problem}. By Eq.~\eqref{eq:app_alpha}, its $\absent$ coordinate is exactly $\alpha_v(x)$.
\end{proof}

\subsection{Summary}

The results above show that TBP under Selective-Exclusion (i) produces valid
normalised marginals, (ii) enforces the local coherence constraints by assigning zero probability
to forbidden transitions, and (iii) preserves an upward-implication-style
monotonicity bound on the \emph{present} action while explicitly characterising
how parent deferral probability is redistributed between child $\absent$ and
child $\defer$ via $\alpha_v(x)$. Moreover, this redistribution is not
heuristic: $\alpha_v(x)$ is exactly the unique reverse-KL projection of the
local primitive onto the admissible face that excludes the $\present$ action
under parent deferral.

\section{Why end-to-end TBP fine-tuning can improve system utility}
\label{app:rpo_why_improves}

This appendix expands the non-separability argument from Section~\ref{sec:nonseparable}. The Bayes recursion in Section~\ref{sec:bayes_coherent} shows that an internal-node action should account for downstream subtree value, not only local risk. Stage~II fine-tuning through TBP (RPO) can therefore improve \emph{system utility} even when coherence defects are already rare after Stage~I. At a high level, Stage~II can help for two independent reasons:

\begin{enumerate}[leftmargin=*]
\item \textbf{Contract-induced non-separability (gating).}
Under a deferral contract (Selective-Exclusion), internal-node actions change the feasible action space of descendants.
Therefore, the Bayes-optimal \emph{coherent} policy is generally \emph{not} separable across nodes, whereas many Stage~I objectives are locally trained (e.g., BR-style L2D, per-node surrogates, or masked classification losses).

\item \textbf{Composition/semantics mismatch.}
Stage~I may optimise predictions under a hierarchy semantics that differs from TBP's \emph{contract-constrained ternary} composition.
Even when Stage~I is hierarchy-aware (e.g., constraint/closure-based HMLC surrogates such as MCLoss), the resulting parameters need not be stationary for the TBP-induced marginals that are used at inference in our deferral system.
\end{enumerate}

We formalise these points below and give an explicit two-node counterexample with strict improvement.

\subsection{A contract-constrained system risk}
Consider a taxonomy with a single edge $(p \to c)$ and labels $(y_p,y_c)\in\{0,1\}^2$ satisfying upward implication $y_c=1\Rightarrow y_p=1$.
A policy selects actions $a_p,a_c\in\act:=\{\absent,\present,\defer\}$.
Under Selective-Exclusion (Def.~\ref{def:selective_exclusion}), if $a_p=\defer$ then $a_c\in\{\absent,\defer\}$ (in particular, $a_c\neq\present$).

We evaluate system performance with a simple (yet standard) ``predict vs.\ defer'' risk: when the system asserts $\absent$ or $\present$, it incurs $0$--$1$ error against the ground-truth; when it defers, it incurs the expert's $0$--$1$ error plus a deferral cost $\lambda>0$.
Formally, for node $v\in\{p,c\}$, ground truth $y_v$, and expert prediction $m_v$, define
\begin{equation}
\label{eq:app_node_risk}
\ell(a_v,y_v,m_v)
=
\begin{cases}
\mathbb I[a_v \neq y_v], & a_v\in\{\absent,\present\},\\
\mathbb I[m_v \neq y_v] + \lambda, & a_v=\defer.
\end{cases}
\end{equation}
The (expected) system risk is the sum over nodes:
\begin{equation}
\label{eq:app_sys_risk}
R(\pi)
:= \mathbb E_{(x,y,m)}\big[\,\ell(a_p,y_p,m_p) + \ell(a_c,y_c,m_c)\,\big],
\end{equation}
where $(a_p,a_c)$ are drawn (or decoded) from the policy $\pi(\cdot\mid x)$ subject to Selective-Exclusion.

\subsection{Stage~I vs.\ Stage~II objectives: a loss-agnostic view}
Let $\{\etaLoc_v(\cdot\mid x)\}_{v\in\mathcal V}$ denote local primitives produced by a network (Eq.~\ref{eq:local_primitive} in the main text).
TBP defines a deterministic, differentiable composition map
\begin{equation}
\label{eq:app_comp_map}
\muGlob(\cdot\mid x) \;=\; F_{\text{TBP}}\big(\etaLoc(\cdot\mid x)\big),
\end{equation}
via the transition kernels and recursion in Eq.~\eqref{eq:transition_kernel}--\eqref{eq:tbp_recursion}.
Stage~I training with an arbitrary surrogate can be abstracted as optimising some objective of the form
\begin{equation}
\label{eq:app_stage1_general}
\mathcal J_{\text{base}}(\theta)
\;=\;
\mathbb E\Big[\; \sum_{v\in\mathcal V} \mathcal L^{(I)}_v\big(\etaLoc_v(x;\theta),y_v,m_v\big)\;\Big],
\end{equation}
where $\mathcal L^{(I)}_v$ may be an L2D surrogate, a hierarchy-aware classification loss, a constraint-based objective, or any mixture thereof.
Stage~II (RPO) instead optimises the \emph{inference-time} TBP-induced marginals:
\begin{equation}
\label{eq:app_stage2_general}
\mathcal J_{\text{RPO}}(\theta)
\;=\;
\mathbb E\Big[\; \sum_{v\in\mathcal V} \mathcal L^{(II)}_v\big(\muGlob_v(x;\theta),y_v,m_v\big)\;\Big],
\qquad \muGlob = F_{\text{TBP}}(\etaLoc).
\end{equation}
Even when $\mathcal L^{(I)}_v=\mathcal L^{(II)}_v$ (same surrogate), the objectives differ because $F_{\text{TBP}}$ couples nodes and modifies the effective decision distribution.
Hence, Stage~I optima need not be stationary for Stage~II.

\subsection{Myopic (separable) vs.\ contract-aware optimality}
A separable training objective (e.g., BR-style L2D) effectively optimises a nodewise criterion
\begin{equation}
\label{eq:app_br_myopic}
\min_{\pi_p,\pi_c}\;
\mathbb E\big[\ell(a_p,y_p,m_p)\big]
+
\mathbb E\big[\ell(a_c,y_c,m_c)\big],
\end{equation}
which treats the choice of $a_p$ as independent of the consequences for the feasible set of $a_c$.
In contrast, under Selective-Exclusion, the optimal decision at $p$ must account for the downstream best response:
\begin{equation}
\label{eq:app_value_compare}
\underbrace{\mathbb E[\ell(a_p,y_p,m_p)\mid x]}_{\text{parent risk}}
+
\underbrace{\min_{a_c\in\mathcal A_c(a_p)}\mathbb E[\ell(a_c,y_c,m_c)\mid x]}_{\text{best achievable child risk given }a_p},
\end{equation}
where $\mathcal A_c(a_p)=\act$ if $a_p=\present$ and $\mathcal A_c(a_p)=\{\absent,\defer\}$ if $a_p=\defer$ (and $\mathcal A_c(a_p)=\{\absent\}$ if $a_p=\absent$ under TBP's satisfiability row).
Thus, deferring an internal node can have a non-local cost by forcing suboptimal actions downstream.

\subsection{A fixed-$x$ counterexample with strict utility gain}
We now construct an explicit conditional distribution at a single input $x^\star$ where the locally optimal choice at $p$ is to defer, but the globally optimal coherent policy at that same $x^\star$ asserts $p=\present$ to unlock a much better decision at $c$.
All inequalities are strict.

\begin{proposition}[Option value of commitment under Selective-Exclusion]
\label{prop:option_value_commitment}
There exist an input $x^\star$, a conditional label distribution satisfying $y_c=1\Rightarrow y_p=1$, an expert $m$, a deferral cost $\lambda>0$, and a model class such that:
(i) the nodewise Bayes-optimal action at $p$ conditional on $x^\star$ (minimising $\mathbb E[\ell(a_p,y_p,m_p)\mid x^\star]$) is to defer ($a_p=\defer$),
but (ii) the globally optimal Selective-Exclusion-admissible policy at that same $x^\star$ attains strictly lower system risk by asserting $a_p=\present$.
\end{proposition}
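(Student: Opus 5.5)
The plan is a fixed-$x^\star$ construction in which every expected risk is written directly in terms of four conditional quantities: $\pi_p=P(y_p=1\mid x^\star)$, $\pi_c=P(y_c=1\mid x^\star)$, and the expert error rates $e_p=P(m_p\neq y_p\mid x^\star)$, $e_c=P(m_c\neq y_c\mid x^\star)$. Under Eq.~\eqref{eq:app_node_risk}, the conditional risk at node $v$ is $1-\pi_v$ for $\present$, $\pi_v$ for $\absent$, and $e_v+\lambda$ for $\defer$. Because only these marginals matter, we can choose them freely subject to $\pi_c\le\pi_p$. The joint law is then realised exactly as in the proof of Proposition~\ref{prop:br_bayes_incoherent}: put mass $\pi_c$ on $(1,1)$, $\pi_p-\pi_c$ on $(1,0)$, and $1-\pi_p$ on $(0,0)$. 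The expert $m_v$ is taken to be the truth flipped independently with probability $e_v$. For the model class, take an unrestricted tabular (or point-mass) policy class at $x^\star$, so that any SE-admissible action pair is realisable and the Bayes optimum within the class equals the minimum over $\mathcal C_{\mathrm{SE}}$.

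For concreteness, take $\lambda=0.05$, $\pi_p=0.7$, $e_p=0.2$, $\pi_c=0.65$, and $e_c=0.5$. These give the following conditional risks:
\begin{itemize}
\item Parent: present $0.30$, absent $0.70$, defer $0.25$. The nodewise Bayes action at $p$ is therefore $\defer$, which gives (i).
\item Child: present $0.35$, absent $0.65$, defer $0.55$. The child's best action is $\present$.
\end{itemize}

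Next, evaluate Eq.~\eqref{eq:app_value_compare} for each parent action, using $\mathcal A_c(a_p)$ from the text:
\begin{itemize}
\item $a_p=\defer$: the child cannot be present, so the best total is $0.25+\min(0.65,0.55)=0.80$.
\item $a_p=\present$: the child is unrestricted, so the best total is $0.30+0.35=0.65$.
\item $a_p=\absent$: the child is forced absent, so the total is $0.70+0.65=1.35$.
\end{itemize}
The SE-optimal policy is therefore $(\present,\present)$ with risk $0.65<0.80$. This is strictly less than the risk of the best admissible policy that keeps the locally optimal $a_p=\defer$, which gives (ii). The same comparison is an instance of the Bayes recursion $B_v$ in Section~\ref{sec:bayes_coherent} and can be cited as such.

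The main obstacle is keeping every inequality strict while respecting upward implication. The gap we need is
\[
\rho_p(\defer)<\rho_p(\present)
\quad\text{but}\quad
\rho_p(\present)-\rho_p(\defer)<\min_{a\in\{\absent,\defer\}}\rho_c(a)-\rho_c(\present),
\]
together with $\pi_c\le\pi_p$. To secure this, the plan is to set the parent gap small (here $0.05$) and the child gap large (here $0.20$), then check that all three parent options and all child options are pairwise distinct, so that no ties arise. Finally, we note that $\lambda>0$ is used only to shift the deferral costs and that the example remains valid for small perturbations.
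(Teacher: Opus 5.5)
Your proposal is correct and takes essentially the same route as the paper. Both give a fixed-$x^\star$ numerical counterexample in which deferring the parent is locally cheaper than asserting it present by a small margin, while $a_p=\present$ unlocks $a_c=\present$, whose advantage over the best child action under a deferred parent exceeds that margin: your $0.65<0.80$ corresponds to the paper's $0.20<0.50$, which uses a two-point label law $(1,1)/(0,0)$ with a perfect parent expert. Your version is somewhat more explicit, since it constructs the joint law and the expert, fixes the model class, checks the $a_p=\absent$ branch, and isolates the general inequality, but the argument is the same.
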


\begin{proof}
Fix an input $x^\star$. Conditional on $x^\star$, let
\[
\mathbb P\big((y_p,y_c)=(1,1)\mid x^\star\big)=0.9,
\qquad
\mathbb P\big((y_p,y_c)=(0,0)\mid x^\star\big)=0.1.
\]
This satisfies $y_c=1\Rightarrow y_p=1$.

Let the parent expert be perfect, so deferring at $p$ incurs conditional risk $\lambda=0.05$. Let the child expert have error rate $0.4$ conditional on $x^\star$, so child deferral incurs risk $0.4+\lambda=0.45$. Assume the model class can represent:
(a) a child action $a_c=\present$ with conditional risk $0.1$ at $x^\star$;
(b) a parent action $a_p=\present$ with conditional risk $0.1$ at $x^\star$.

\textbf{(i) Local conditional optimum at the parent.}
At $x^\star$, asserting $a_p=\present$ has conditional risk $0.1$, while deferring has conditional risk $0.05$.
Hence the nodewise Bayes-optimal parent action conditional on $x^\star$ is
\[
\arg\min_{a_p}\mathbb E[\ell(a_p,y_p,m_p)\mid x^\star]=\defer.
\]

\textbf{(ii) Global coherent optimum differs at the same $x^\star$.}
Under Selective-Exclusion, if $a_p=\defer$ then the child cannot assert $\present$.
At $x^\star$, the child therefore chooses between asserting $\absent$ (risk $0.9$) and deferring (risk $0.45$), so the best admissible child action is $a_c=\defer$.
The resulting coherent conditional system risk is
\[
R_{\text{defer-parent}}(x^\star)=0.05+0.45=0.50.
\]

If instead the parent asserts $a_p=\present$, then the child may assert $\present$ as well. The coherent policy
\[
a_p=\present,\qquad a_c=\present
\]
therefore has conditional system risk
\[
R_{\text{assert-parent}}(x^\star)=0.10+0.10=0.20.
\]
Thus
\[
R_{\text{assert-parent}}(x^\star)=0.20 < 0.50 = R_{\text{defer-parent}}(x^\star),
\]
even though deferring at the parent is locally optimal conditional on the same $x^\star$.
\end{proof}

\subsection{Interpretation and link to RPO}
Proposition~\ref{prop:option_value_commitment} isolates the \emph{contract-induced non-separability} behind Stage~II gains:
deferring an internal node can be locally optimal (because the expert is strong), yet globally suboptimal because it removes downstream autonomy under the contract, forcing costly deferrals or errors at descendants.

TBP makes descendant marginals depend on ancestor marginals (e.g., for $(p\to c)$,
$\muGlob_c(\present\mid x)=\muGlob_p(\present\mid x)\,\etaLoc_c(\present\mid x)$.
Hence, when RPO optimises a loss on the \emph{unconditional} marginals, gradients from descendant losses backpropagate into ancestor decisions.
In the above construction, child loss incurred at $x^\star$ induces a gradient signal that increases $\muGlob_p(\present\mid x^\star)$, because raising the parent's ``present'' marginal enlarges the feasible child action set and eliminates forced child deferral.

\section{Dataset details}

\subsection{Dataset Licenses and Access Terms}
\label{app:dataset_licenses}

All datasets used in this work were accessed under their respective research-use terms. VinDr-CXR v1.0.0 is a restricted PhysioNet resource released under the PhysioNet Credentialed Health Data License 1.5.0 and the PhysioNet Credentialed Health Data Use Agreement 1.5.0; access requires credentialed PhysioNet status, completion of the required CITI training, and signing the project DUA. CheXpert is distributed through Stanford AIMI/Stanford ML Group; its dataset documentation describes the terms as personal, non-commercial, non-clinical research use. PadChest is distributed under the BIMCV-PadChest Dataset Research Use Agreement, which permits research use without charge, prohibits sale and redistribution, requires approval for non-academic use, and states that the dataset is for research use only. ADPv2 is distributed through Zenodo in three parts under the Creative Commons Attribution-NonCommercial 4.0 International license.

We do not redistribute any medical images, labels, or dataset files. Reproducibility therefore requires users to obtain each dataset directly from the official provider and comply with the corresponding license, data-use agreement, credentialing, and non-clinical-use restrictions.

\subsection{VinDr-CXR}
\label{app:vindr_taxonomy}

Figure~\ref{fig:vindr_taxonomy} shows the annotation-compatible tree used for VinDr-CXR after introducing synthetic grouping nodes and excluding \texttt{No finding} from training and evaluation.

\begin{figure}[h]
\centering
\scalebox{0.9}{
\begin{forest}
for tree={
    draw=none,
    font=\footnotesize,
    edge={semithick},
    anchor=west,
    grow'=east,
    child anchor=west,
    parent anchor=east,
    l sep=4pt,
    s sep=4pt
}
[ROOT
  [No finding]
  [Abnormality
    [Pulmonary abnormality
      [Parenchymal abnormality
        [Opacity-related finding
          [Lung Opacity]
          [Consolidation]
          [Infiltration]
          [Edema]
        ]
        [Interstitial / fibrotic finding
          [ILD]
          [Pulmonary fibrosis]
        ]
        [Obstructive airway finding
          [Emphysema]
          [COPD]
        ]
        [Infectious pulmonary finding
          [Pneumonia]
          [Tuberculosis]
        ]
        [Atelectasis]
        [Lung cavity]
        [Lung cyst]
      ]
      [Nodule / mass
        [Nodule/Mass]
        [Lung tumor]
      ]
    ]
    [Pleural abnormality
      [Pleural effusion]
      [Pleural thickening]
      [Pneumothorax]
    ]
    [Cardiac / vascular abnormality
      [Cardiomegaly]
      [Aortic enlargement]
      [Enlarged PA]
    ]
    [Mediastinal abnormality [Mediastinal shift]]
    [Musculoskeletal abnormality
      [Rib fracture]
      [Clavicle fracture]
    ]
    [Other abnormality
      [Calcification]
      [Other lesion]
      [Other diseases]
    ]
  ]
]
\end{forest}
}
\caption{Annotation-compatible chest X-ray taxonomy for the VinDr-CXR dataset. Problematic raw-to-raw implication edges from the original taxonomy are replaced by synthetic internal grouping nodes. The raw source taxonomy contains a \texttt{No finding} node under \texttt{ROOT}, but all reported experiments exclude \texttt{No finding} from training and evaluation.}
\label{fig:vindr_taxonomy}
\end{figure}

\subsection{CheXpert}
\label{app:chexpert_taxonomy}

Figure~\ref{fig:chexpert_taxonomy} shows the CheXpert taxonomy used to define hierarchy constraints and parent--child handoff coherence.


\begin{figure}[h]
\centering
\scalebox{1.0}{ 
\begin{forest}
for tree={
    draw=none,
    font=\footnotesize,
    edge={semithick},
    anchor=west,
    grow'=east,
    child anchor=west,
    parent anchor=east,
    l sep=4pt,
    s sep=4pt
}
[ROOT
  [Abnormality
    [Pulmonary abnormality
      [Lung Opacity
        [Consolidation [Pneumonia]]
        [Edema]
        [Atelectasis]
      ]
      [Lung Lesion]
    ]
    [Pleural abnormality
      [Pleural Effusion]
      [Pleural Other]
      [Pneumothorax]
    ]
    [Cardiac / mediastinal abnormality
      [Enlarged Cardiomediastinum
        [Cardiomegaly]
      ]
    ]
    [Musculoskeletal abnormality
      [Fracture]
    ]
  ]
  [Device
    [Support Devices]
  ]
]
\end{forest}
}
\caption{Taxonomy utilised for the CheXpert dataset.}
\label{fig:chexpert_taxonomy}
\end{figure}

\subsection{PadChest}
\label{app:padchest_taxonomy}

Figure~\ref{fig:padchest_taxonomy} shows the larger PadChest taxonomy used as a controlled-expert stress test for taxonomy scale.

\begin{figure}[h]
\centering
\scalebox{0.75}{
\begin{forest}
for tree={
    draw=none,
    font=\footnotesize,
    edge={semithick},
    anchor=west,
    grow'=east,
    child anchor=west,
    parent anchor=east,
    l sep=4pt,
    s sep=4pt
}
[ROOT
  [Abnormality
    [Pulmonary abnormality
      [Chronic pulmonary finding
        [Obstructive pulmonary finding
          [copd signs]
          [air trapping]
          [emphysema]
          [flattened diaphragm]
        ]
        [Interstitial / fibrotic pulmonary finding
          [interstitial pattern]
          [fibrotic band]
          [pulmonary fibrosis]
        ]
        [Airway structural finding
          [bronchiectasis]
        ]
        [chronic changes]
      ]
      [Parenchymal opacity / volume-loss finding
        [Opacity-related pulmonary finding
          [infiltrates]
          [alveolar pattern]
          [increased density]
        ]
        [Atelectatic / volume-loss finding
          [atelectasis]
          [laminar atelectasis]
          [volume loss]
          [hypoexpansion]
        ]
      ]
      [Nodular pulmonary finding
        [nodule]
        [pseudonodule]
        [calcified granuloma]
      ]
    ]
    [Pleural / diaphragmatic abnormality
      [Pleural fluid-related finding
        [pleural effusion]
        [costophrenic angle blunting]
      ]
      [Pleural thickening finding
        [pleural thickening]
        [apical pleural thickening]
      ]
      [Diaphragmatic finding
        [hemidiaphragm elevation]
        [diaphragmatic eventration]
      ]
    ]
    [Cardiomediastinal / vascular abnormality
      [Hilar finding
        [vascular hilar enlargement]
        [hilar enlargement]
      ]
      [Aortic finding
        [aortic elongation]
        [aortic atheromatosis]
      ]
      [cardiomegaly]
    ]
    [Musculoskeletal abnormality
      [Spinal degenerative / curvature finding
        [scoliosis]
        [kyphosis]
        [vertebral degenerative changes]
      ]
      [Thoracic osseous injury finding
        [callus rib fracture]
      ]
    ]
    [Hiatal abnormality
      [hiatal hernia]
    ]
  ]
  [Procedure / device finding
    [Surgical change
      [sternotomy]
      [suture material]
    ]
    [pacemaker]
  ]
]
\end{forest}
}
\caption{Taxonomy utilised for the PadChest dataset.}
\label{fig:padchest_taxonomy}
\end{figure}

\subsection{ADPv2}
\label{app:adpv2_taxonomy}

Figure~\ref{fig:adpv2_taxonomy} shows the ADPv2 tissue taxonomy used to test the framework outside chest radiography.

\begin{figure}[h]
\centering
\scalebox{0.95}{
\begin{forest}
for tree={
    draw=none,
    font=\footnotesize,
    edge={semithick},
    anchor=west,
    grow'=east,
    child anchor=west,
    parent anchor=east,
    l sep=4pt,
    s sep=4pt
}
[ROOT
  [HTT\_ROOT
    [EPITHELIAL
      [SURFACE EPITHELIUM
        [SE]
      ]
      [GLANDULAR
        [GD]
      ]
    ]
    [STROMA
      [MUCOSAL STROMA
        [LP]
        [LA]
      ]
      [FIBROMUSCULAR SUPPORT
        [SMC]
        [EF]
      ]
      [FAT
        [AT]
      ]
    ]
    [VASCULAR
      [VESSELS
        [V]
      ]
      [LYMPHOVASCULAR
        [LC]
      ]
      [BLOOD ELEMENTS
        [RBC]
      ]
    ]
    [IMMUNE
      [GRANULOCYTIC
        [ES]
      ]
      [HISTIOCYTIC
        [MA]
      ]
      [LYMPHOPLASMACYTIC
        [LY]
        [PC]
      ]
    ]
    [NEURAL
      [NT]
    ]
  ]
]
\end{forest}
}
\caption{Taxonomy utilised for the ADPv2 dataset.}
\label{fig:adpv2_taxonomy}
\end{figure}

\section{Experimental Details}
\label{app:exp_details}

This appendix provides implementation details for dataset construction, taxonomy structure, preprocessing, model architecture, optimisation, projection decoding, budget-swept evaluation, and statistical testing. All details below were checked against the released code and saved experiment configurations.

\subsection{Dataset Construction and Taxonomy Summaries}
\label{app:exp_details_datasets}

\begin{table}[h]
\centering
\caption{\textbf{Summary of the real-reader datasets and raw taxonomies used in the main experiments.}
For VinDr-CXR, the number of usable studies depends on which radiologist is treated as the expert, because we require exactly three reader annotations and retain only studies containing the target reader. Raw non-root taxonomy counts are reported here; after low-prevalence filtering, the VinDr-CXR experiments retained 34--35 labels depending on the expert, while CheXpert retained all 19 labels. The controlled-expert PadChest and ADPv2 protocols are detailed in the text immediately below.}
\label{tab:dataset_taxonomy_summary}
\scriptsize
\setlength{\tabcolsep}{5pt}
\renewcommand{\arraystretch}{1.15}
\begin{tabular}{@{}lcccccc@{}}
\toprule
\textbf{Dataset} & \textbf{Studies} & \textbf{Readers} & \textbf{Expert Tasks} & \textbf{Nodes} & \textbf{Internal} & \textbf{Leaves} \\
\midrule
VinDr-CXR  & 6,125--6,582 per task & 3 & 3 & 36 & 13 & 23 \\
CheXpert   & 500 & 5 & 5 & 19 & 9 & 10 \\
\bottomrule
\end{tabular}
\end{table}

\textbf{VinDr-CXR.}
We use the VinDr-CXR v1.0.0 training annotations from PhysioNet, specifically the study-level label file \texttt{image\_labels\_train.csv} together with the radiographs under \texttt{images/train/}. The label column \texttt{No finding} is excluded. The dataset contains three reader annotations per retained study. For each expert task, one radiologist is treated as the expert $m$, while the remaining two radiologists are averaged label-wise to form the reference label $y$. Thus, the reference consensus scores are soft values in $\{0, 0.5, 1\}$ at the original leaf labels before hierarchy expansion. The three expert tasks are the three radiologists present in the current pipeline, namely \texttt{R8}, \texttt{R9}, and \texttt{R10}. The resulting sample counts are 6,582 for \texttt{R8}, 6,125 for \texttt{R9}, and 6,467 for \texttt{R10}. The stratified train/validation/test splits are:
\[
\texttt{R8}: 3948/1317/1317,\qquad
\texttt{R9}: 3675/1225/1225,\qquad
\texttt{R10}: 3879/1294/1294.
\]
In the main text, VinDr-CXR results are averaged across these three expert tasks.

\textbf{CheXpert.}
We use the radiologist-comparison subset distributed through the repository layout under \texttt{data/chexpert/}, with reader-specific annotation CSVs \texttt{bc1\_gt.csv}, \texttt{bc2\_gt.csv}, \texttt{bc3\_gt.csv}, \texttt{bc5\_gt.csv}, together with the associated study directories under \texttt{data/chexpert/chexlocalize/}. We exclude \texttt{No Finding} and use the 13 CheXpert pathology/device labels listed in the code as base labels. For each expert task, one reader is treated as the expert $m$, while the remaining four readers are averaged label-wise to form the reference label $y$. Thus, the reference labels are soft scores in $\{0,0.25,0.5,0.75,1\}$ at the base labels before hierarchy expansion. The four expert tasks are \texttt{bc1}, \texttt{bc2}, \texttt{bc3}, and \texttt{bc5}. Each task contains 500 studies, split by multilabel-stratified sampling into 300 train, 100 validation, and 100 test examples. The raw taxonomy contains 19 non-root nodes, of which 9 are internal and 10 are leaves; in the reported runs, all 19 labels are retained after preprocessing.

\textbf{PadChest.}
We construct the PadChest controlled-expert cohort from cached expert-prediction CSVs together with the PadChest master label CSV. Only \texttt{PA} studies with \texttt{MethodLabel = Physician} are retained, and any overlap with the RNN-label pool is removed by image ID before splitting. Ground-truth labels are parsed from the PadChest label strings and then upward-closed through the evaluation taxonomy. The resulting retained cohort contains 6{,}320 studies/images. Splits are patient-disjoint and multilabel-stratified: first 20\% of patients are assigned to test, and then 1/8 of the remaining patients are assigned to validation, yielding an overall 70/10/20 patient split. Patient counts are stable across seeds (4{,}298 train / 615 val / 1{,}229 test), while study counts vary slightly because the split is patient-level; for example, seed 42 yields 4{,}413/640/1{,}267 studies. After applying the PadChest low-prevalence filter (\texttt{PADCHEST\_MIN\_POSITIVE\_SAMPLES}=10), the reported runs retain 61 labels (38 leaves and 23 internal nodes). The expert is a precomputed synthetic physician-style predictor loaded from cached per-image predictions rather than trained online during L2D; the codebase supports \texttt{pc\_carzero} and \texttt{pc\_carzero\_tuned} expert IDs.

\textbf{ADPv2.}
We construct the ADPv2 controlled-expert cohort from the \texttt{colon\_annotations\_batch\_*} and \texttt{colon\_metadata\_batch\_*} CSVs, merged on patch ID, with image paths resolved as \texttt{images/<PatchID>\_<PatchName>.png}. Ground-truth labels are upward-closed through the ADPv2 taxonomy. Splits are WSI-disjoint by \texttt{wsi\_name}: 20\% of WSIs are assigned to test and then 25\% of the remaining WSIs to validation, giving an approximate 60/20/20 split by WSI. Exact train/validation patch counts were not preserved in the current saved run summaries, but archived diagnostics confirm test split sizes between 3{,}985 and 4{,}000 patches across the reported seeds and 25 retained labels in all reported runs after applying \texttt{ADPV2\_MIN\_POSITIVE\_SAMPLES}=10. The ADPv2 experts are synthetic and built online from handcrafted morphology, colour, and texture features using truth-conditioned profile-specific calibration; available expert IDs are \texttt{generalist}, \texttt{epi}, \texttt{stroma}, and \texttt{immune}, with five-fold cross-fitting in the expert-construction pipeline.

\textbf{Taxonomy definitions.}
The taxonomies are implemented as explicit child-to-parent maps, i.e.\ dictionaries of the form
\[
\texttt{taxonomy[label]} = \texttt{parent\_label},
\]
with \texttt{"ROOT"} used as the sentinel top node. Auxiliary structures, including immediate-children maps, index-level ancestry maps, descendant matrices, topological orders, and reverse topological orders, are derived from this parent map at runtime. These are dataset-specific engineering taxonomies for coherent deferral experiments: they preserve clinically meaningful implication structure where possible, but may also introduce synthetic grouping nodes to obtain annotation-compatible trees. VinDr-CXR has a single root child (\texttt{Abnormality}); CheXpert has two root children (\texttt{Abnormality} and \texttt{Device}), so the kept structure is a forest. The main-text \textit{Lung Opacity} example is illustrative rather than a literal copy of every dataset taxonomy; in particular, the VinDr-CXR evaluation taxonomy uses an \textit{Opacity-related finding} grouping node and places \textit{Lung Opacity} as a sibling of \textit{Consolidation}, \textit{Infiltration}, and \textit{Edema}.

\subsection{Hierarchy Consistency of Reference and Expert Labels}
\label{app:exp_details_label_handling}

Both the reference labels $y$ and the expert labels $m$ are converted into hierarchy-consistent binary label vectors before hierarchical training and coherence evaluation. The preprocessing pipeline is the following.

\begin{itemize}[topsep=0pt,itemsep=0pt,parsep=0pt,leftmargin=*]
    \item For each expert task, we first construct the expert label matrix $m$ from the target reader and the soft reference matrix $y$ by averaging the remaining readers.
    \item We threshold the soft reference matrix at $0.5$ to obtain a binary copy used for hierarchy expansion.
    \item We add missing internal taxonomy nodes and enforce upward closure by propagating positive labels from children to all ancestors.
    \item For expert labels, we likewise add missing internal nodes, threshold any non-binary values at $0.5$, and apply the same upward-closure procedure.
    \item We then build the final reference matrix by keeping the original soft values on the original annotated labels and inserting hierarchy-expanded binary values for the newly introduced internal nodes.
    \item Finally, we enforce monotonicity of the reference scores by clipping each parent score from below by the maximum of its kept children.
\end{itemize}

Thus, upward closure is applied to both the expert labels and a binarised copy of the reference labels. Raw hierarchy violations are not discarded; instead, they are corrected by closure. The current pipeline does not apply any additional remapping for uncertain labels beyond the numeric labels already present in the input CSVs. The \texttt{No finding}/\texttt{No Finding} columns are excluded before model construction.

The exact order is:
\[
\begin{aligned}
&\text{raw reader annotations} \rightarrow \text{single-expert split} \rightarrow \\
&\text{reader averaging for } y \rightarrow \text{thresholded copy for hierarchy expansion} \rightarrow \\
&\text{internal-node insertion and upward closure} \rightarrow \text{parent-monotonicity correction for } y.
\end{aligned}
\]
After splitting, we apply low-prevalence filtering on the \emph{training} split only. A label is retained if it has at least 5 positive training examples on VinDr-CXR and at least 3 on CheXpert, where positivity is defined by thresholding at $0.5$. All ancestors of retained labels are also retained. This yields 34 kept labels for \texttt{R9} and 35 kept labels for \texttt{R8}/\texttt{R10} on VinDr-CXR, and 19 kept labels for every CheXpert task.

\textbf{Soft versus hard targets.}
After reader averaging, each node has a soft consensus score $s_v \in [0,1]$. For discrete correctness and evaluation, we define the hard majority label
\[
Y_v := \mathbb I[s_v \ge 0.5].
\]
In the reported experiments, the classification term of the defer-aware training loss uses $s_v$, while expert-correctness terms, correctness-based masking, and all reported F1-, balanced-accuracy-, and accuracy-based metrics use $Y_v$. The binary label notation used in the theory therefore corresponds to the thresholded consensus label $Y_v$ in experiments; the soft score $s_v$ is used only in the class-fit term during training.

\subsection{Model Architecture and Training Hyperparameters}
\label{app:exp_details_training}

All methods on a given dataset use the same image encoder and the same per-node ternary action heads.

\textbf{Architecture.}
For VinDr-CXR, the encoder is a \texttt{torchvision} ResNet-18 backbone, initialised with \texttt{ImageNet1K\_V1} weights. For CheXpert, the code is written to use a \texttt{torchxrayvision} DenseNet-121 backbone with weights \texttt{densenet121-res224-chex}. For PadChest, the encoder is an \texttt{XRVBackbone} built on the chest-X-ray-pretrained TorchXRayVision DenseNet-121. For ADPv2, the encoder is a \texttt{ResNet18Backbone}. The X-ray-pretrained backbones are frozen by default in the current codepath.

On top of the encoder, we use a shared projection block followed by one three-way linear head per node. Concretely, if the encoder output is $h(x)$, we compute
\[
z(x)=\mathrm{LayerNorm}\!\big(\mathrm{Dropout}(\mathrm{ReLU}(Wh(x)+b))\big),
\]
and then attach one linear map $\mathbb R^{256}\to\mathbb R^3$ per taxonomy node.

Additional architectural details are:
\begin{itemize}[topsep=0pt,itemsep=0pt,parsep=0pt,leftmargin=*]
    \item shared hidden dimension: 256
    \item dropout: 0.1
    \item ternary action heads: one linear head per kept node
\end{itemize}

\textbf{Image preprocessing.}
VinDr-CXR images are loaded as RGB JPEGs, resized to a cached size of $256\times256$, and then transformed as follows:
\begin{itemize}[topsep=0pt,itemsep=0pt,parsep=0pt,leftmargin=*]
    \item training: random resized crop to $224\times224$ with scale range $(0.8,1.0)$, random horizontal flip with probability $0.5$, ImageNet normalisation
    \item validation/test: centre crop to $224\times224$, ImageNet normalisation
\end{itemize}
The ImageNet normalisation constants are
\[
\mu=(0.485,0.456,0.406),\qquad \sigma=(0.229,0.224,0.225).
\]

CheXpert images are loaded as greyscale images, resized directly to $224\times224$, randomly flipped horizontally during training with probability $0.5$, converted to a single-channel tensor, and normalised to the range $[-1,1]$ via
\[
x \mapsto \frac{x/255 - 0.5}{0.5}.
\]
When the \texttt{torchvision} DenseNet fallback is used, the single channel is repeated to three channels inside the backbone wrapper before feature extraction.

PadChest uses the same greyscale $224\times224$ X-ray preprocessing and $[-1,1]$ normalisation as the XRV-based CheXpert pipeline.

ADPv2 patches are loaded as RGB images. Training uses \texttt{RandomResizedCrop(224)}, random horizontal flip, and colour jitter, while validation/test use deterministic resize preprocessing.

\textbf{Optimisation.}
Unless otherwise stated, all models are optimised with:
\begin{itemize}[topsep=0pt,itemsep=0pt,parsep=0pt]
    \item optimiser: AdamW
    \item base learning rate: $10^{-3}$
    \item weight decay: $10^{-4}$
    \item batch size: 128
    \item number of seeds: 5
\end{itemize}

\textbf{Checkpoint selection.}
The balanced-accuracy-stopped BR-family runs reported in the main text use validation \textbf{AU-SysBalAcc} as the stopping metric. Earlier runs in the repository also include AU-SysF1-I-stopped checkpoints, but we do not mix stopping rules within the main utility comparison. Early stopping uses patience 25 and minimum delta $10^{-4}$. For each method family, the reported model is the checkpoint with the best validation value of the designated stopping metric under the inference semantics corresponding to that family.

\textbf{Stage I / Stage II schedule.}
For BR-L2D and BR (cont.), Stage~I optimises the local BR objective
\[
\mathcal J_{\mathrm{BR}}
=
\sum_{v\in\mathcal V}\mathcal L_\phi(\etaLoc_v(x), y_v, m_v).
\]
For BR+RPO, Stage~I uses the same objective, and Stage~II fine-tunes through TBP using
\[
\mathcal J_{\mathrm{RPO}}
=
\sum_{v\in\mathcal V}\mathcal L_\phi(\muGlob_v(x), y_v, m_v).
\]
The exact schedule in the released runs is:
\begin{itemize}[topsep=0pt,itemsep=0pt,parsep=0pt]
    \item Stage~I epochs: 100
    \item Stage~II epochs: 100
    \item learning-rate drop at Stage~II: factor 0.1
    \item continuation schedule: identical to Stage~II in epoch count and LR change
\end{itemize}

\textbf{Loss function.}
The defer-aware surrogate $\mathcal L_\phi$ used in the reported experiments is the default \texttt{mozannar} loss in the code, i.e.\ the Mozannar--Sontag style defer-aware cross-entropy implemented in \texttt{lce\_eq7\_per\_sample}. The codebase also contains Mao-style and RS-style alternatives, but these were not used in the main reported runs.

\subsection{Exact Coherent Projection and Dynamic Programs}
\label{app:exp_details_projection}

BR+Projection is an inference-only structured method: training remains standard
BR-L2D, but both defer ranking and final action decoding are performed under the
coherent projection model induced by the Selective-Exclusion contract.

\textbf{Coherent feasible set.}
Let
\[
\mathcal C_{\mathrm{SE}}
=
\left\{
a \in \act^|v| :
\forall (p \to c)\in\mathcal E,\;
a_p=\absent \Rightarrow a_c=\absent,\;
a_p=\defer \Rightarrow a_c\in\{\absent,\defer\}
\right\}.
\]
This is exactly the set of joint action vectors satisfying taxonomic satisfiability, deductive closure, and the Selective-Exclusion parent-deferral contract from Proposition~\ref{prop:local_characterization}. For notational
simplicity we present the tree case below; for a forest, the dynamic programs
factor independently across roots and the root scores are summed.

\textbf{Full coherent projection.}
Ignoring the budget constraint, coherent projection is the MAP decode
\begin{equation}
\label{eq:proj_full_map}
\hat{\mathbf a}^{\mathrm{Proj}}(x)
\in
\arg\max_{a \in \mathcal C_{\mathrm{SE}}}
\sum_{v \in \mathcal V}\log \etaLoc_v(a_v \mid x).
\end{equation}

This dynamic program has the same form as the coherent Bayes decoder in Eq.~\eqref{eq:bayes_coherent}, but the scores are different. Eq.~\eqref{eq:proj_full_map} uses learned log primitive scores, whereas the Bayes decoder uses conditional risks or utilities. Thus projection is an exact coherent MAP decoder under the learned local action model; it is Bayes-risk optimal only when the local scores are calibrated as Bayes utilities for the target system loss.

\textbf{Structured action scores for defer ranking.}
To rank node-level deferrals under a budget, we do not use the local BR
marginals directly. Instead, for each node $t\in\mathcal V$ and action
$a^\star\in\{\absent,\present,\defer\}$, we define the constrained MAP action score
\begin{equation}
\label{eq:proj_action_value_app}
V_t(a^\star \mid x)
=
\max_{\mathbf a \in \mathcal C_{\mathrm{SE}}:\, a_t=a^\star}
\sum_{v \in \mathcal V}\log \etaLoc_v(a_v \mid x).
\end{equation}
This score measures, under the coherent projection model itself, how compatible
each action at node $t$ is with the best globally coherent configuration.

We then define the projection defer-priority score
\begin{equation}
\label{eq:proj_defer_score_app}
s_{\mathrm{Proj}}(x,t)
=
V_t(\defer\mid x)
-
\max\!\Big(V_t(\absent\mid x),\,V_t(\present\mid x)\Big).
\end{equation}
At budget $b$, the top $b\cdot N_{\text{total}}$ node-level decisions according
to \eqref{eq:proj_defer_score_app} are selected for deferral.

\textbf{Dynamic program for constrained action scores.}
Because the taxonomy is a tree, \eqref{eq:proj_action_value_app} is solved
exactly by dynamic programming. Let $\mathrm{Ch}(v)$ denote the immediate
children of node $v$, and let the default admissible child-action sets under
Selective-Exclusion be
\[
A(i)
=
\begin{cases}
\{\absent\}, & i=\absent,\\
\{\absent,\present,\defer\}, & i=\present,\\
\{\absent,\defer\}, & i=\defer.
\end{cases}
\]
For a fixed target node $t$ and target action $a^\star$, define the clamped
admissible set
\[
A_v(i;t,a^\star)
=
\begin{cases}
A(i)\cap\{a^\star\}, & v=t,\\
A(i), & v\neq t.
\end{cases}
\]
If $A_v(i;t,a^\star)=\varnothing$, the corresponding DP value is set to $-\infty$.

For non-root nodes, define
\[
G_v(j;t,a^\star)
=
\log \etaLoc_v(j\mid x)
+
\sum_{u\in\mathrm{Ch}(v)} F_u(j;t,a^\star),
\]
and the Bellman recursion
\[
F_v(i;t,a^\star)
=
\max_{j\in A_v(i;t,a^\star)} G_v(j;t,a^\star).
\]
For the root node $r$, the admissible set is
\[
A_r(t,a^\star)
=
\begin{cases}
\{a^\star\}, & r=t,\\
\{\absent,\present,\defer\}, & r\neq t.
\end{cases}
\]
The constrained action value is then
\[
V_t(a^\star\mid x)
=
\max_{j\in A_r(t,a^\star)} G_r(j;t,a^\star).
\]
In practice, this dynamic program is run once for each pair
$(t,a^\star)\in\mathcal V\times\{\absent,\present,\defer\}$.

\textbf{Projection under a deferral budget.}
At budget $b$, let $D_b(x)\subseteq\mathcal V$ denote the set of nodes selected
for deferral by \eqref{eq:proj_defer_score_app}. The final budgeted coherent
decode is
\begin{equation}
\label{eq:budget_proj_obj_new}
\hat{\mathbf a}^{\mathrm{Proj}}_b(x)
\in
\arg\max_{\mathbf a \in \mathcal C_{\mathrm{SE}} \cap \mathcal F_b(x)}
\sum_{v \in \mathcal V}\log \etaLoc_v(a_v \mid x),
\end{equation}
where
\[
\mathcal F_b(x)
=
\left\{
a\in\act^{|\mathcal V|} :
a_v=\defer \ \forall v\in D_b(x),
\quad
a_v\in\{\absent,\present\}\ \forall v\notin D_b(x)
\right\}.
\]
Thus, selected nodes are clamped to $\defer$, while all remaining nodes are
restricted to autonomous actions.

\textbf{Dynamic program for budgeted coherent decoding.}
For non-root nodes, define the budget-constrained admissible action set
\[
B_v(i;D_b(x))
=
\begin{cases}
\{\defer\}\cap A(i), & v\in D_b(x),\\
\{\absent,\present\}\cap A(i), & v\notin D_b(x).
\end{cases}
\]
For the root node $r$,
\[
B_r(D_b(x))
=
\begin{cases}
\{\defer\}, & r\in D_b(x),\\
\{\absent,\present\}, & r\notin D_b(x).
\end{cases}
\]
Define
\[
G_v^{b}(j)
=
\log \etaLoc_v(j\mid x)
+
\sum_{u\in\mathrm{Ch}(v)} F_u^{b}(j),
\]
and
\[
F_v^{b}(i)
=
\max_{j\in B_v(i;D_b(x))} G_v^{b}(j).
\]
For the root,
\[
F_r^{b}
=
\max_{j\in B_r(D_b(x))} G_r^{b}(j).
\]
The maximising actions are then recovered by backtracking. This produces the
exact MAP action vector under both the coherence constraints and the externally
imposed defer budget.

\textbf{Implementation note.}
In the released implementation, the raw top-\(k\) defer set selected by the global ranking is first passed through a deterministic feasibility closure before the exact dynamic program is run. The raw mask can be infeasible when it forces both an ancestor and a descendant to defer but leaves an intermediate node on that path unforced. Under Selective-Exclusion, such an intermediate node cannot be asserted present below a deferred parent; if it is not forced to defer, the budget mask restricts it to an autonomous action, so it is forced absent, making the lower deferred descendant infeasible. The closure fixes exactly this case by replacing \(D_b(x)\) with the minimal superset \(D_b^{\mathrm{cl}}(x)\supseteq D_b(x)\) that connects forced-deferred ancestor--descendant pairs by deferred paths. Concretely, if \(p\in D_b^{\mathrm{cl}}(x)\) and the subtree rooted at child \(c\) contains any node in \(D_b^{\mathrm{cl}}(x)\), then \(c\) is added to \(D_b^{\mathrm{cl}}(x)\), with this rule applied top-down until no more nodes are added.

\textbf{Complexity.}
Since $|\act|=3$ is fixed and the taxonomy is a tree, both dynamic programs are
linear in the number of nodes up to a small constant factor. The budgeted decode
is
\[
O(|\mathcal V|\,|\act|^2),
\]
which is linear in practice. The constrained action-score computation requires
one such DP for each $(t,a^\star)$ pair, yielding
\[
O(|\mathcal V|^2\,|\act|^3)
\]
in the straightforward exact implementation, which is acceptable for this
offline baseline.

\subsection{Budget-Swept Utility and Coherence Metrics}
\label{app:exp_details_metrics}

\textbf{Budget sweep.}
For each image $x_i$ and node $v$, we compute a method-specific defer-priority score
\[
s(x_i,v)
=
r_{iv}(\defer)
-
\max\big(r_{iv}(\absent),\,r_{iv}(\present)\big),
\]
where $r$ is the method-specific ranking value:
\begin{itemize}[topsep=0pt,itemsep=0pt,parsep=0pt,leftmargin=*]
    \item for BR-L2D and BR (cont.), $r_{iv}=\etaLoc_v(\cdot\mid x_i)$;
    \item for BR+Projection, $r_{iv}=V_v(\cdot\mid x_i)$, the constrained MAP action scores from \eqref{eq:proj_action_value_app};
    \item for BR+RPO, $r_{iv}=\muGlob_v(\cdot\mid x_i)$.
\end{itemize}
Flattening all $(i,v)$ pairs across the evaluation set gives a global ranking over
node-level decisions. At budget $b$, the top $b\cdot N_{\text{total}}$ ranked
decisions are clamped to $\defer$, where $N_{\text{total}}=N\cdot L$.

\textbf{Threshold discretisation and integration.}
The released runs use \texttt{NUM\_BUDGET\_THRESHOLDS}=101. Concretely, the implementation samples an integer threshold grid by taking the unique values in
\[
\mathrm{linspace}(0,N_{\text{total}},101+1),
\]
and always including both endpoints $0$ and $N_{\text{total}}$. Thus the number of evaluated thresholds is at most 102, and can be smaller when duplicate integer thresholds arise. All AUC-style quantities are then computed by trapezoidal integration using \texttt{sklearn.metrics.auc} over budget fraction.

\textbf{Why we emphasise balanced metrics.}
Chest X-ray labels are sparse and strongly negative-dominant. Metrics based on balanced accuracy and macro-F1 therefore provide a more faithful picture of comparative system utility than raw correctness averaged over all study-label decisions. For that reason, the main comparison emphasises AU-SysBalAcc and macro-F1 metrics. These summaries are descriptive, however, and should not be read as a replacement for a deployment-specific cost-sensitive risk model. In particular, the Bayes characterisation in Section~\ref{sec:bayes_coherent} assumes additive nodewise system risk, whereas F1 and balanced-accuracy AUCs are evaluation summaries over a budget sweep.

\textbf{AU-SysBalAcc.}
Let $\hat Y^{\mathrm{sys}}(b)\in\{0,1\}^{N\times L}$ denote the system predictions at budget $b$, and flatten the $(N\times L)$ study-label decisions into one binary vector. Let
\[
\mathrm{TPR}(b)=\frac{\mathrm{TP}(b)}{\mathrm{TP}(b)+\mathrm{FN}(b)},
\qquad
\mathrm{TNR}(b)=\frac{\mathrm{TN}(b)}{\mathrm{TN}(b)+\mathrm{FP}(b)}.
\]
The system balanced accuracy at budget $b$ is
\[
\mathrm{BalAcc}^{\mathrm{sys}}(b)
=
\frac{1}{2}\bigl(\mathrm{TPR}(b)+\mathrm{TNR}(b)\bigr).
\]
We then define
\[
\mathrm{AU\mbox{-}SysBalAcc}
=
\int_0^1 \mathrm{BalAcc}^{\mathrm{sys}}(b)\,db.
\]

\textbf{AU-SysF1-L (pooled label-wise F1).}
Let $\hat Y^{\mathrm{sys}}(b)\in\{0,1\}^{N\times L}$ denote the system predictions at budget $b$. In the current implementation, AU-SysF1-L is computed by flattening all study-label decisions into one long binary vector and evaluating a single F1 score:
\[
\mathrm{SysF1}_L(b)
=
\mathrm{F1}\!\left(\mathrm{vec}\bigl(\hat Y^{\mathrm{sys}}(b)\bigr),\,\mathrm{vec}(Y)\right),
\]
where $Y$ is the reference label matrix and $\mathrm{vec}(\cdot)$ denotes flattening. This is therefore a pooled or micro-style F1 over study-label decisions rather than a macro average over labels. We then define
\[
\mathrm{AU\mbox{-}SysF1\mbox{-}L}
=
\int_0^1 \mathrm{SysF1}_L(b)\,db,
\]
approximated numerically by the sampled threshold grid and trapezoidal integration described above.

\textbf{AU-F1-Lmacro.}
The macro label-wise F1 treats each label equally. At budget $b$,
\[
\mathrm{F1}_{L,\mathrm{macro}}(b)
=
\frac{1}{L}
\sum_{v=1}^{L}
\mathrm{F1}\!\left(\hat Y^{\mathrm{sys}}_{:v}(b),\,Y_{:v}\right),
\]
and
\[
\mathrm{AU\mbox{-}F1\mbox{-}Lmacro}
=
\int_0^1 \mathrm{F1}_{L,\mathrm{macro}}(b)\,db.
\]

\textbf{AU-SysF1-I / AU-F1-Imacro.}
The instance-wise system F1 computes one F1 score per study and averages across studies:
\[
\mathrm{SysF1}_I(b)
=
\frac{1}{N}
\sum_{i=1}^{N}
\mathrm{F1}\!\left(\hat Y^{\mathrm{sys}}_{i:}(b),\,Y_{i:}\right),
\]
and
\[
\mathrm{AU\mbox{-}SysF1\mbox{-}I}
=
\int_0^1 \mathrm{SysF1}_I(b)\,db.
\]
In the current codepath, AU-F1-Imacro is populated from this same per-study F1 computation, so AU-F1-Imacro and AU-SysF1-I are numerically identical. We use the AU-SysF1-I name in the main text because it is the more direct description of the reported quantity.

\textbf{Neighbourhood-partition incoherence.}
Let $\mathcal P\subseteq \mathcal V$ be the set of kept parent nodes with at least one kept immediate child. For each image $i$, parent node $p\in\mathcal P$, and budget $b$, define the local neighbourhood action tuple
\[
a^{(i)}_{N(p)}(b)
=
\big(a_{ip}(b), \{a_{ic}(b)\}_{c\in C(p)}\big).
\]
Using the partition induced by Proposition~\ref{prop:local_characterization}, define indicator functions
\[
\delta_{\mathrm{tax}}(a^{(i)}_{N(p)}(b)),\quad
\delta_{\mathrm{ded}}(a^{(i)}_{N(p)}(b)),\quad
\delta_{\mathrm{del}}(a^{(i)}_{N(p)}(b)),
\]
which take value $1$ if the local neighbourhood falls into the corresponding incoherent class and $0$ otherwise. The ``any incoherence'' indicator is
\[
\delta_{\mathrm{any}}
=
\max\{\delta_{\mathrm{tax}},\,\delta_{\mathrm{ded}},\,\delta_{\mathrm{del}}\}.
\]
The neighbourhood-partition rates at budget $b$ are then
\[
R_{\mathrm{type}}(b)
=
\frac{1}{N|\mathcal P|}
\sum_{i=1}^{N}
\sum_{p\in\mathcal P}
\delta_{\mathrm{type}}(a^{(i)}_{N(p)}(b)),
\qquad
\mathrm{type}\in\{\mathrm{tax},\mathrm{ded},\mathrm{del},\mathrm{any}\}.
\]
In the implementation, the partition uses immediate children only and ``first violated predicate wins'':
\begin{enumerate}[topsep=0pt,itemsep=0pt,parsep=0pt,leftmargin=*]
    \item taxonomic contradiction: parent absent and any child present
    \item delegation violation: parent defer and any child present
    \item deductive defect: parent absent, no child present, and any child defer
    \item coherent: all remaining neighbourhoods
\end{enumerate}
Finally,
\[
\mathrm{AU\mbox{-}Neigh\mbox{-}Any}
=
\int_0^1 R_{\mathrm{any}}(b)\,db,
\]
and similarly for the defect-specific neighbourhood-partition rates used throughout the paper.

\textbf{Edge-weighted incoherence.}
In addition to the neighbourhood partition, we also compute an edge-weighted view in which the unit of analysis is the immediate parent--child edge. Let $\mathcal E_{\text{kept}}$ be the set of kept immediate edges. Then the denominator is
\[
N\,|\mathcal E_{\text{kept}}|,
\]
and the edge-weighted defect rates at budget $b$ are
\[
R^{\mathrm{edge}}_{\mathrm{tax}}(b)
=
\frac{1}{N|\mathcal E_{\text{kept}}|}
\sum_{i=1}^{N}
\sum_{(p\to c)\in\mathcal E_{\text{kept}}}
\mathbbm{1}[a_{ip}(b)=\absent,\ a_{ic}(b)=\present],
\]
\[
R^{\mathrm{edge}}_{\mathrm{ded}}(b)
=
\frac{1}{N|\mathcal E_{\text{kept}}|}
\sum_{i=1}^{N}
\sum_{(p\to c)\in\mathcal E_{\text{kept}}}
\mathbbm{1}[a_{ip}(b)=\absent,\ a_{ic}(b)=\defer],
\]
\[
R^{\mathrm{edge}}_{\mathrm{del}}(b)
=
\frac{1}{N|\mathcal E_{\text{kept}}|}
\sum_{i=1}^{N}
\sum_{(p\to c)\in\mathcal E_{\text{kept}}}
\mathbbm{1}[a_{ip}(b)=\defer,\ a_{ic}(b)=\present],
\]
and
\[
R^{\mathrm{edge}}_{\mathrm{any}}(b)
=
\frac{1}{N|\mathcal E_{\text{kept}}|}
\sum_{i=1}^{N}
\sum_{(p\to c)\in\mathcal E_{\text{kept}}}
\mathbbm{1}\!\left[
\begin{array}{l}
(a_{ip}(b)=\absent,\ a_{ic}(b)=\present)\ \vee\\
(a_{ip}(b)=\absent,\ a_{ic}(b)=\defer)\ \vee\\
(a_{ip}(b)=\defer,\ a_{ic}(b)=\present)
\end{array}
\right].
\]
Finally,
\[
\mathrm{AU\mbox{-}EW\mbox{-}Any}
=
\int_0^1 R^{\mathrm{edge}}_{\mathrm{any}}(b)\,db,
\]
and similarly for the defect-specific edge-weighted AUCs used in the appendix decomposition tables.

\subsection{Statistical Testing}
\label{app:exp_details_stats}

All hypothesis tests are paired one-sided Wilcoxon signed-rank tests. The main-text comparisons use the $15$ matched balanced-accuracy-stopped VinDr-CXR runs pooled across R8/R9/R10 and the $20$ matched balanced-accuracy-stopped CheXpert runs pooled across bc1/bc2/bc3/bc5. For utility metrics, the alternative hypothesis is that the first method has higher performance; for incoherence metrics, the alternative is that the first method has lower incoherence. All reported $p$-values are unadjusted.

\begin{table*}[h]
\centering
\caption{\textbf{Pairwise one-sided Wilcoxon tests for the VinDr-CXR main comparison.}
Each cell reports the direction favoured by the test together with the unadjusted $p$-value. ``ns'' denotes that the reported one-sided comparison was not significant at $p<0.05$.}
\label{tab:vindr_balacc_stats}
\scriptsize
\setlength{\tabcolsep}{4pt}
\renewcommand{\arraystretch}{1.10}
\resizebox{\linewidth}{!}{%
\begin{tabular}{@{}lcccccc@{}}
\toprule
\textbf{Pair}
& \textbf{AU-BalAcc}
& \textbf{AU-F1-Lmacro}
& \textbf{AU-SysF1-I}
& \textbf{AU-SysF1-L}
& \textbf{EW Any}
& \textbf{NP Any} \\
\midrule
BR vs Projection
& Projection $(0.0473)$
& Projection $(0.0473)$
& Projection $(0.0177)$
& ns
& Projection $(3.1\times 10^{-5})$
& Projection $(3.1\times 10^{-5})$ \\
BR vs cont.
& BR $(3.1\times 10^{-5})$
& cont. $(0.0151)$
& ns
& cont. $(0.0010)$
& BR $(0.0090)$
& BR $(0.0108)$ \\
BR vs RPO
& ns
& RPO $(3.1\times 10^{-5})$
& ns
& RPO $(9.2\times 10^{-5})$
& RPO $(3.1\times 10^{-5})$
& RPO $(3.1\times 10^{-5})$ \\
Projection vs cont.
& Projection $(3.1\times 10^{-5})$
& cont. $(0.0365)$
& ns
& cont. $(5.8\times 10^{-4})$
& Projection $(3.1\times 10^{-5})$
& Projection $(3.1\times 10^{-5})$ \\
Projection vs RPO
& ns
& RPO $(3.1\times 10^{-5})$
& ns
& RPO $(9.2\times 10^{-5})$
& Projection $(3.1\times 10^{-5})$
& Projection $(3.1\times 10^{-5})$ \\
cont. vs RPO
& RPO $(9.2\times 10^{-5})$
& RPO $(6.1\times 10^{-5})$
& ns
& RPO $(0.0277)$
& RPO $(3.1\times 10^{-5})$
& RPO $(3.1\times 10^{-5})$ \\
\bottomrule
\end{tabular}%
}
\end{table*}

\begin{table}[h]
\centering
\caption{\textbf{Per-expert mean utility winners for the VinDr-CXR main comparison.}}
\label{tab:vindr_by_expert_best}
\small
\setlength{\tabcolsep}{5pt}
\renewcommand{\arraystretch}{1.10}
\begin{tabular}{@{}lcccc@{}}
\toprule
\textbf{Expert}
& \textbf{Best AU-BalAcc}
& \textbf{Best AU-F1-Lmacro}
& \textbf{Best AU-SysF1-I}
& \textbf{Best AU-SysF1-L} \\
\midrule
R8  & Projection $0.8421$ & RPO $0.4850$ & Projection $0.4914$ & RPO $0.7519$ \\
R9  & RPO $0.8689$ & RPO $0.5175$ & RPO $0.5510$ & RPO $0.7867$ \\
R10 & Projection $0.8483$ & RPO $0.4924$ & RPO $0.5061$ & RPO $0.7644$ \\
\bottomrule
\end{tabular}
\end{table}

\subsection{Additional CheXpert Statistics}
\label{app:chexpert_partial}

The main-text CheXpert rows in Table~\ref{tab:main_results} pool the four completed real-reader experts bc1, bc2, bc3, and bc5, with five matched seeds per expert ($20$ matched runs total); the same pooling is used for the incoherence decompositions. We collect the corresponding pooled significance tests and per-expert utility winners here.

\begin{table*}[h]
\centering
\caption{\textbf{Pairwise one-sided Wilcoxon tests for the pooled CheXpert real-reader sweep.}
Each cell reports the direction favoured by the test together with the unadjusted $p$-value. The comparison pools the $20$ matched runs from bc1/bc2/bc3/bc5. ``ns'' denotes that the reported one-sided comparison was not significant at $p<0.05$.}
\label{tab:chexpert_partial_stats}
\scriptsize
\setlength{\tabcolsep}{4pt}
\renewcommand{\arraystretch}{1.10}
\begin{tabular}{@{}lcccccc@{}}
\toprule
\textbf{Pair}
& \textbf{AU-BalAcc}
& \textbf{AU-F1-Lmacro}
& \textbf{AU-SysF1-I}
& \textbf{AU-SysF1-L}
& \textbf{EW Any}
& \textbf{NP Any} \\
\midrule
BR vs Projection
& ns
& ns
& ns
& ns
& Projection $(9.5\times 10^{-7})$
& Projection $(4.4\times 10^{-5})$ \\
BR vs cont.
& cont.\ $(0.0086)$
& cont.\ $(2.4\times 10^{-4})$
& cont.\ $(0.0133)$
& cont.\ $(0.0266)$
& ns
& ns \\
BR vs RPO
& RPO $(3.1\times 10^{-5})$
& RPO $(3.5\times 10^{-4})$
& RPO $(0.0012)$
& RPO $(1.8\times 10^{-5})$
& RPO $(9.5\times 10^{-7})$
& RPO $(9.5\times 10^{-7})$ \\
Projection vs cont.
& cont.\ $(4.3\times 10^{-4})$
& cont.\ $(6.0\times 10^{-4})$
& cont.\ $(0.0181)$
& cont.\ $(0.0047)$
& Projection $(4.4\times 10^{-5})$
& Projection $(9.5\times 10^{-7})$ \\
Projection vs RPO
& RPO $(3.1\times 10^{-5})$
& RPO $(8.4\times 10^{-4})$
& RPO $(0.0036)$
& RPO $(1.8\times 10^{-5})$
& Projection $(0.0139)$
& Projection $(0.0139)$ \\
cont.\ vs RPO
& RPO $(0.0200)$
& ns
& ns
& RPO $(0.0148)$
& RPO $(4.4\times 10^{-5})$
& RPO $(9.5\times 10^{-7})$ \\
\bottomrule
\end{tabular}
\end{table*}

\begin{table}[h]
\centering
\caption{\textbf{Per-expert mean utility winners for the pooled CheXpert real-reader sweep.}
Each entry reports the best method for that expert, averaged over the five matched seeds.}
\label{tab:chexpert_partial_by_expert}
\small
\setlength{\tabcolsep}{5pt}
\renewcommand{\arraystretch}{1.10}
\begin{tabular}{@{}lcccc@{}}
\toprule
\textbf{Expert}
& \textbf{Best AU-BalAcc}
& \textbf{Best AU-F1-Lmacro}
& \textbf{Best AU-SysF1-I}
& \textbf{Best AU-SysF1-L} \\
\midrule
bc1 & RPO $0.8299$ & RPO $0.5019$ & RPO $0.6599$ & RPO $0.7668$ \\
bc2 & RPO $0.8388$ & RPO $0.5428$ & RPO $0.6775$ & RPO $0.7810$ \\
bc3 & RPO $0.8353$ & RPO $0.5111$ & RPO $0.6278$ & RPO $0.7633$ \\
bc5 & RPO $0.8198$ & cont.\ $0.5291$ & RPO $0.6426$ & RPO $0.7462$ \\
\bottomrule
\end{tabular}
\end{table}

\subsection{Reproducibility Details}
\label{app:exp_details_repro}

Unless a seed list is explicitly supplied, the code generates the five seeds used in the released experiments via
\[
42,\ 153,\ 264,\ 375,\ 486,
\]
i.e.\ \texttt{42 + 111*i} for \texttt{i=0,\dots,4}. The reported experiments were run on a workstation with an AMD Ryzen 9 7950X CPU (16 cores / 32 threads), 62 GiB RAM, and a single NVIDIA GeForce RTX 4090 GPU with 24 GiB VRAM. The code runs on CUDA when \texttt{torch.cuda.is\_available()} is true and otherwise falls back to CPU.

The software stack in the current release environment is:
\begin{itemize}[topsep=0pt,itemsep=0pt,parsep=0pt,leftmargin=*]
    \item Python 3.12.2
    \item PyTorch 2.6.0
    \item torchvision 0.21.0
    \item scikit-learn 1.8.0
    \item NumPy 2.2.6
    \item pandas 2.2.3
    \item iterative-stratification 0.1.9
\end{itemize}

Code-level implementation details not central to the main text include:
\begin{itemize}[topsep=0pt,itemsep=0pt,parsep=0pt,leftmargin=*]
    \item VinDr-CXR training uses random resized crop and horizontal flip; validation/test use centre crop.
    \item CheXpert training uses random horizontal flip only; validation/test are deterministic.
    \item The budget-sweep AUCs use 101 sampled intervals (at most 102 evaluated thresholds including endpoints).
    \item Projection decoding clamps local probabilities below by $10^{-12}$ before taking logs.
\end{itemize}

\section{Exact Decoding under the TBP Joint Model}
\label{app:tbp_exactdecode}

The main-text RPO results use a fast budgeted decoder based on TBP marginals. This decoder is efficient and yields only negligible residual incoherence, but it is not itself guaranteed coherent. To separate coherence of the TBP joint model from coherence of the external decoder, we additionally evaluate an exact MAP decoder under the TBP model.

\textbf{Exact TBP decode.}
Let
\[
P_{\mathrm{TBP}}(\mathbf a\mid x)
=
\etaLoc_r(a_r\mid x)
\prod_{v\neq r}\mathbf T_v(x)_{a_{pa(v)},a_v}
\]
be the TBP joint model from \eqref{eq:tbp_joint}. At budget \(b\), let \(D_b^{\mathrm{raw}}(x)\) be the set of nodes selected for deferral by the same marginal ranking used in the main-text RPO experiments. As in Appendix~\ref{app:exp_details_projection}, we apply the deterministic feasibility closure that adds exactly the intermediate nodes needed to connect forced-deferred ancestor--descendant pairs by deferred paths, yielding \(D_b^{\mathrm{cl}}(x)\). We then decode
\begin{equation}
\label{eq:tbp_exact_decode_app}
\hat{\mathbf a}^{\mathrm{Exact}}_b(x)
\in
\arg\max_{\mathbf a\in\mathcal C_{\mathrm{SE}}\cap \mathcal F_b^{\mathrm{cl}}(x)}
\log P_{\mathrm{TBP}}(\mathbf a\mid x),
\end{equation}
where
\[
\mathcal F_b^{\mathrm{cl}}(x)
=
\left\{
a\in\act^{|\mathcal V|} :
a_v=\defer\ \forall v\in D_b^{\mathrm{cl}}(x),
\quad
a_v\in\{\absent,\present\}\ \forall v\notin D_b^{\mathrm{cl}}(x)
\right\}.
\]
Because \eqref{eq:tbp_exact_decode_app} maximises over the coherent feasible set itself, the resulting deterministic action vector is exactly coherent.
As with projection, this is exact MAP decoding under the learned TBP joint model, not necessarily Bayes-risk decoding for the additive system loss unless the joint scores are calibrated to that risk.

\begin{table}[h]
\centering
\caption{\textbf{Exact decoding under the TBP joint model.}
Values are mean $\pm$ standard deviation over matched runs.}
\label{tab:tbp_exactdecode}
\small
\setlength{\tabcolsep}{6pt}
\renewcommand{\arraystretch}{1.10}
\begin{tabular}{@{}lccc@{}}
\toprule
\textbf{Dataset}
& \textbf{AU-SysF1-L}
& \textbf{AU-SysF1-I}
& \textbf{AU-Neigh-Any} \\
\midrule
VinDr-CXR
& $0.7806 \pm 0.0141$
& $0.5175 \pm 0.0247$
& $0.0000 \pm 0.0000$ \\
CheXpert
& $0.6983 \pm 0.0185$
& $0.6004 \pm 0.0303$
& $0.0000 \pm 0.0000$ \\
\bottomrule
\end{tabular}
\end{table}

As expected, exact decoding drives both edge-weighted and neighbourhood-partition incoherence to zero on both datasets. Its utility trade-off is dataset-dependent: on VinDr-CXR, exact decoding is comparable to the fast marginal RPO decoder and slightly improves the reported AU-SysF1-L and AU-SysF1-I values in this comparison, whereas on CheXpert it reduces utility. Thus, exact deterministic coherence under the TBP model is feasible, but the fast marginal decoder remains the preferred high-throughput operating point when its small residual incoherence is acceptable.

\textbf{Closure diagnostics.}
Table~\ref{tab:tbp_exactdecode_closure} quantifies the effect of the feasibility closure. In both datasets it is rare and negligible: closure is activated in fewer than $0.4\%$ of budgeted example-threshold pairs on VinDr-CXR and fewer than $0.05\%$ on CheXpert, with mean added deferred nodes below $0.005$ and realised-to-raw defer ratios essentially equal to one.

\begin{table}[h]
\centering
\caption{\textbf{Feasibility-closure diagnostics for exact TBP decoding.}}
\label{tab:tbp_exactdecode_closure}
\small
\setlength{\tabcolsep}{5pt}
\renewcommand{\arraystretch}{1.10}
\begin{tabular}{@{}lcccc@{}}
\toprule
\textbf{Dataset}
& \makecell{\textbf{Closure}\\\textbf{activation rate}}
& \makecell{\textbf{Mean added}\\\textbf{deferred nodes}}
& \makecell{\textbf{Max added}\\\textbf{deferred nodes}}
& \makecell{\textbf{Realised/raw}\\\textbf{defer ratio}} \\
\midrule
VinDr-CXR & $0.00380$ & $0.00412$ & $3$ & $1.00024$ \\
CheXpert  & $0.000424$ & $0.000424$ & $1$ & $1.000045$ \\
\bottomrule
\end{tabular}
\end{table}

\section{Inference Runtime}
\label{app:exp_details_runtime}

To quantify the computational cost of coherent inference, we benchmarked per-study inference time and full budget-sweep evaluation time for three structured decoders: the fast marginal RPO decoder, the exact TBP MAP decoder from Appendix~\ref{app:tbp_exactdecode}, and exact coherent projection. All timings were measured on CPU, exclude data-loading overhead, and are intended as representative runtime profiles rather than deployment-time service estimates. For exact projection, runtime includes both computation of the constrained MAP action scores $V_v(a\mid x)$ for all nodes and actions and the final budgeted coherent decode. For exact TBP decoding, runtime includes the TBP forward pass and a single exact MAP decode under the budgeted feasible set.

For per-study decode timing, we use a fixed within-study defer fraction of
$10\%$. This is only for runtime benchmarking; the main paper's evaluation still
uses the global dataset-level budget sweep. We additionally report the full
runtime of the offline budget-swept evaluation with 101 budget thresholds.

\begin{table*}[h]
\centering
\caption{\textbf{Runtime of coherent decoding variants.}
Per-study timings are measured on CPU and exclude data-loading overhead. Budget-sweep timings correspond to the full offline evaluation on the test split with 101 budget thresholds. Entries in the final column are reported as per-study speedup / budget-sweep speedup relative to exact projection.}
\label{tab:runtime}
\small
\setlength{\tabcolsep}{6pt}
\renewcommand{\arraystretch}{1.10}
\begin{tabular}{@{}llcccc@{}}
\toprule
\textbf{Dataset} & \textbf{Method}
& \makecell{\textbf{Total}\\\textbf{ms/study}}
& \makecell{\textbf{Budget Sweep}\\\textbf{sec/split}}
& \makecell{\textbf{Slowdown}\\\textbf{vs.\ fast RPO}}
& \makecell{\textbf{Speedup}\\\textbf{vs.\ Projection}\\\textbf{(study / sweep)}} \\
\midrule
\multirow{3}{*}{VinDr-CXR}
& BR-L2D+RPO
& 0.225
& 0.699
& 1.00$\times$
& 1074$\times$ / 1287$\times$ \\
& BR-L2D+RPO+Exact Decode
& 1.178
& 97.21
& 5.25$\times$
& 205$\times$ / 9.25$\times$ \\
& BR+Projection
& 241.71
& 899.36
& 1074$\times$ / 1287$\times$
& 1.00$\times$ \\
\midrule
\multirow{3}{*}{CheXpert}
& BR-L2D+RPO
& 0.611
& 0.300
& 1.00$\times$
& 138$\times$ / 123$\times$ \\
& BR-L2D+RPO+Exact Decode
& 1.045
& 4.891
& 1.71$\times$
& 80$\times$ / 7.5$\times$ \\
& BR+Projection
& 84.287
& 36.862
& 138$\times$ / 123$\times$
& 1.00$\times$ \\
\bottomrule
\end{tabular}
\end{table*}

These timings show that exact TBP decoding occupies a useful middle ground. It is slower than the fast marginal decoder, especially for full budget-sweep evaluation, but it remains dramatically cheaper than exact projection while eliminating incoherence exactly. This supports the main practical interpretation of the paper: fast RPO is the preferred deployment-time operating point because it delivers strong utility at near-BR cost while maintaining negligible incoherence, and exact TBP decoding provides a coherent deterministic decoder under the learned TBP joint model that is far less expensive than projection.

\section{Extension Beyond Trees: DAG Taxonomies}
\label{app:dag_extension}

The main paper assumes that the taxonomy is a tree. This assumption is useful because it makes coherence local and gives exact linear-time dynamic programs for both Bayes decoding and coherent projection. Many medical vocabularies, however, are more naturally represented as directed acyclic graphs (DAGs), where a finding may have multiple parents. This appendix sketches how the same conceptual framework can be extended to DAG taxonomies, and clarifies which parts of the method transfer directly and which require new inference machinery.

\subsection{DAG label space and action coherence}

Let
\[
\mathcal G=(\mathcal V,\mathcal E)
\]
be a directed acyclic taxonomy, where an edge $(p\to c)\in\mathcal E$ means that child concept $c$ implies parent concept $p$. As in the tree setting, labels obey upward implication:
\begin{equation}
\label{eq:dag_upward}
y_c=1 \Rightarrow y_p=1
\qquad
\forall (p\to c)\in\mathcal E .
\end{equation}
Equivalently, an absent parent forces all of its descendants absent. The taxonomy-consistent binary label space becomes
\[
\mathcal Y_{\mathcal G}
:=
\left\{
y\in\{0,1\}^{|\mathcal V|}:
\forall (p\to c)\in\mathcal E,\;
y_c=1\Rightarrow y_p=1
\right\}.
\]
For an action vector $a\in\act^{|\mathcal V|}$, the compatibility set is defined exactly as before:
\[
\compat{a}
:=
\left\{
y\in\mathcal Y_{\mathcal G}:
\forall v\in\mathcal V,\;
a_v\neq\defer \Rightarrow y_v=a_v
\right\}.
\]
Thus $a$ is taxonomically satisfiable if $\compat{a}\neq\emptyset$.

Under Selective-Exclusion, a deferred parent may selectively exclude descendants, but may not allow a positive child assertion that commits the deferred parent. For a DAG, the natural edge-wise contract is
\begin{equation}
\label{eq:dag_se_edge}
a_p=\defer \Rightarrow a_c\in\{\absent,\defer\}
\qquad
\forall (p\to c)\in\mathcal E .
\end{equation}
Together with deductive closure, this yields the following DAG analogue of the tree constraints:
\begin{equation}
\label{eq:dag_local_forbidden}
\begin{aligned}
a_p=\absent,\ a_c=\present
&\qquad \text{taxonomic contradiction},\\
a_p=\defer,\ a_c=\present
&\qquad \text{delegation violation},\\
a_p=\absent,\ a_c=\defer
&\qquad \text{deductive defect},
\end{aligned}
\qquad
\forall (p\to c)\in\mathcal E .
\end{equation}
These are the same local forbidden parent--child patterns as in Proposition~\ref{prop:local_characterization}. The difference is that, in a DAG, a node may have multiple parents, so its feasible action is determined by the conjunction of all parent constraints.

\subsection{Multiple-parent feasibility}

Let $P(v)=\{p:(p\to v)\in\mathcal E\}$ denote the set of parents of node $v$. In the tree case, the parent action $a_{pa(v)}$ determines the feasible child-action set through
\[
\Gamma_{\mathrm{SE}}(\absent)=\{\absent\},\qquad
\Gamma_{\mathrm{SE}}(\present)=\act,\qquad
\Gamma_{\mathrm{SE}}(\defer)=\{\absent,\defer\}.
\]
For a DAG, the corresponding feasible set for node $v$ is the intersection of all parent-imposed feasible sets:
\begin{equation}
\label{eq:dag_gamma_intersection}
\Gamma_{\mathrm{DAG}}(a_{P(v)})
:=
\bigcap_{p\in P(v)}
\Gamma_{\mathrm{SE}}(a_p).
\end{equation}
Thus:
\[
\Gamma_{\mathrm{DAG}}(a_{P(v)})=
\begin{cases}
\{\absent\}, & \exists p\in P(v): a_p=\absent,\\
\{\absent,\defer\}, & \nexists p:a_p=\absent \text{ and } \exists p:a_p=\defer,\\
\act, & \forall p\in P(v): a_p=\present.
\end{cases}
\]
This has the desired semantics. If any parent is asserted absent, then $v$ is forced absent. If no parent is absent but at least one parent is deferred, then $v$ may be absent or deferred, but cannot be asserted present. Only when all parents are asserted present may $v$ freely choose among $\absent,\present,\defer$.

The coherent DAG action set can therefore be written as
\begin{equation}
\label{eq:dag_coherent_set}
\mathcal C_{\mathrm{SE}}^{\mathrm{DAG}}
=
\left\{
a\in\act^{|\mathcal V|}:
\forall v\in\mathcal V,\;
a_v\in\Gamma_{\mathrm{DAG}}(a_{P(v)})
\right\},
\end{equation}
with the convention that roots have no parent restriction and may take any action in $\act$.

\subsection{Coherent projection on a DAG}

The exact projection objective transfers directly. Given local primitive distributions $\etaLoc_v(a\mid x)$, the DAG coherent projection is
\begin{equation}
\label{eq:dag_projection}
\hat{\mathbf a}^{\mathrm{Proj}}_{\mathrm{DAG}}(x)
\in
\arg\max_{\mathbf a\in\mathcal C_{\mathrm{SE}}^{\mathrm{DAG}}}
\sum_{v\in\mathcal V}\log \etaLoc_v(a_v\mid x).
\end{equation}
The distinction is computational. On a tree, the factor graph induced by the constraints has treewidth one, so dynamic programming gives exact linear-time decoding. On a general DAG, multiple parents create factor loops in the moralised constraint graph, and exact MAP inference need not decompose into independent subtrees.

Several practical routes are available.

\textbf{Bounded-treewidth exact decoding.}
If the moralised graph of the taxonomy has small treewidth, Eq.~\eqref{eq:dag_projection} can be solved exactly by junction-tree or variable-elimination methods. The local factors are small because each node still has only three possible actions. This would retain the exact zero-incoherence guarantee, with complexity exponential in graph treewidth rather than in taxonomy size.

\textbf{Integer-programming decoding.}
For arbitrary DAGs, coherent projection can be formulated as a small integer linear program. Introduce binary variables
\[
z_{v,a}\in\{0,1\},
\qquad
v\in\mathcal V,\ a\in\act,
\]
where $z_{v,a}=1$ means $a_v=a$, and impose
\[
\sum_{a\in\act}z_{v,a}=1
\qquad
\forall v\in\mathcal V .
\]
The three forbidden edge patterns in Eq.~\eqref{eq:dag_local_forbidden} become linear constraints:
\begin{align}
z_{p,\absent}+z_{c,\present} &\le 1,
\label{eq:dag_ilp_tax}\\
z_{p,\defer}+z_{c,\present} &\le 1,
\label{eq:dag_ilp_del}\\
z_{p,\absent}+z_{c,\defer} &\le 1,
\label{eq:dag_ilp_ded}
\end{align}
for all $(p\to c)\in\mathcal E$. The projection problem is then
\begin{equation}
\label{eq:dag_ilp_obj}
\max_{z}
\sum_{v\in\mathcal V}
\sum_{a\in\act}
z_{v,a}\log \etaLoc_v(a\mid x)
\quad
\text{s.t. Eqs.~\eqref{eq:dag_ilp_tax}--\eqref{eq:dag_ilp_ded}.}
\end{equation}
This formulation gives an exact coherent decoder for DAGs and is especially attractive for offline evaluation, moderate-sized taxonomies, or safety-critical settings where deterministic zero incoherence is required.

\textbf{Approximate message passing or local repair.}
For large DAGs, one could use loopy max-product, dual decomposition, or Lagrangian relaxation to obtain approximate coherent decodes. A simpler engineering alternative is to first decode nodewise, then iteratively repair violated edges until all constraints in Eq.~\eqref{eq:dag_local_forbidden} are satisfied. Such repairs would be heuristic unless paired with an exact optimisation certificate, but they may be useful as fast deployment-time approximations.

\subsection{TBP-style models on a DAG}

The tree version of TBP defines a top-down Markov action model in which each child has one parent. A direct DAG analogue must combine evidence from multiple parents. One natural extension is to replace the single-parent transition by the multiple-parent feasible set in Eq.~\eqref{eq:dag_gamma_intersection}. Let $\etaLoc_v(\cdot\mid x)$ be the local primitive at node $v$. Conditional on the parent actions $a_{P(v)}$, define
\begin{equation}
\label{eq:dag_tbp_conditional}
P(a_v=j\mid a_{P(v)},x)
=
\frac{
\etaLoc_v(j\mid x)\,
\mathbbm{1}\{j\in\Gamma_{\mathrm{DAG}}(a_{P(v)})\}
}{
\sum_{k\in\Gamma_{\mathrm{DAG}}(a_{P(v)})}
\etaLoc_v(k\mid x)
}.
\end{equation}
This conditional is well-defined because \(\absent\in\Gamma_{\mathrm{SE}}(a_p)\) for every parent action \(a_p\), and therefore \(\absent\in\Gamma_{\mathrm{DAG}}(a_{P(v)})\) for every parent-action configuration. Since \(\etaLoc_v(\absent\mid x)>0\) under the softmax parameterisation, the normalising denominator is strictly positive.

This is the same renormalisation principle used in the tree TBP transition kernel: local primitive mass is projected onto the admissible action face induced by the parents. For roots, $P(a_v\mid x)=\etaLoc_v(a_v\mid x)$. Since $\mathcal G$ is acyclic, these conditionals define a valid Bayesian-network-style joint action model:
\begin{equation}
\label{eq:dag_tbp_joint}
P_{\mathrm{DAG}}(\mathbf a\mid x)
=
\prod_{v\in\mathcal V}
P(a_v\mid a_{P(v)},x),
\end{equation}
where root nodes use their primitive distributions.

By construction, Eq.~\eqref{eq:dag_tbp_joint} assigns zero probability to every action vector outside $\mathcal C_{\mathrm{SE}}^{\mathrm{DAG}}$. Thus the coherent-support property of TBP extends naturally from trees to DAGs. However, marginal computation is no longer a single top-down pass, because a node's parents may be statistically dependent through shared ancestors. Exact marginal inference in Eq.~\eqref{eq:dag_tbp_joint} would again require junction-tree or variable-elimination methods, while approximate inference could use loopy belief propagation, variational methods, or sampling.

\subsection{RPO for DAG taxonomies}

Recursive Policy Optimisation also extends at the objective level. Stage I would still learn local primitives with a nodewise L2D loss:
\[
\mathcal J_{\mathrm{StageI}}
=
\sum_{v\in\mathcal V}
\mathcal L_\phi\big(\etaLoc_v(x),y_v,m_v\big).
\]
Stage II would replace the tree-composed marginals $\muGlob_v$ with marginals under the DAG joint model:
\[
\muGlob_v(a\mid x)
:=
P_{\mathrm{DAG}}(a_v=a\mid x),
\]
and optimise
\begin{equation}
\label{eq:dag_rpo}
\mathcal J_{\mathrm{RPO}}^{\mathrm{DAG}}
=
\sum_{v\in\mathcal V}
\mathcal L_\phi\big(\muGlob_v(x),y_v,m_v\big).
\end{equation}
When exact marginals are tractable, this gives a direct DAG analogue of RPO. When they are not, Eq.~\eqref{eq:dag_rpo} could be optimised with differentiable approximate inference. Alternatively, one could train against coherent MAP actions or structured surrogate losses derived from the ILP decoder, though this would require handling the non-smoothness of discrete optimisation.

\subsection{Practical interpretation}

The tree assumption in the main paper is therefore not a conceptual requirement of deferral coherence. The three central ideas all extend to DAGs:

\begin{enumerate}[leftmargin=*]
    \item deferral coherence is still defined over ternary handoff actions rather than binary labels;
    \item Selective-Exclusion still forbids positive child assertions under absent or deferred parents;
    \item coherent projection and coherent-support joint action models can still be formulated exactly.
\end{enumerate}

What changes is the inference problem. Trees permit simple linear-time dynamic programs. DAGs require either exact bounded-treewidth inference, integer-programming decoders, or approximate structured inference. Thus, extending the present framework to realistic DAG medical vocabularies is a natural direction for future work: the semantics of coherent handoff are essentially unchanged, but the computational machinery must move from tree recursions to more general structured prediction.

\section{Additional Experiments and Ablations}

\subsection{Extension to Multiple Experts}
\label{apd:multi-expert-extension}

Our primary focus in the main paper is the single-expert setting. We include this subsection to show that the same coherent hierarchical recipe extends naturally to multiple experts in practice. We emphasise that we view this as an empirical extension of the framework, rather than as a new theoretical result.

Let \(E\) denote the number of available experts. We replace the single deferral action \(\perp\) with expert-indexed deferral actions \(\perp_1,\ldots,\perp_E\), so that each node acts on the augmented action space
\[
A_E \;:=\; \{0,1,\perp_1,\ldots,\perp_E\}.
\]
The corresponding coherent Bayes rule is obtained from Eq.~\eqref{eq:bayes_coherent} by replacing the single deferral risk with expert-indexed risks
\[
\rho_v(\perp_e\mid x)=P(m_{v,e}\neq Y_v\mid x)+\lambda_{v,e},
\]
and optimising over the multi-expert coherent action set.

At each node \(v\in V\), we use a dedicated head
\[
f_{\theta}^{(v)}(x)\in\mathbb{R}^{2+E},
\]
which induces the local primitive policy
\[
\pi_v(a\mid x)
\;:=\;
\mathrm{Softmax}\!\bigl(f_{\theta}^{(v)}(x)\bigr)_a,
\qquad a\in A_E.
\]

\textbf{Stage I: local multi-expert primitives.}
In Stage I, we train each node independently with the multi-expert classifier--rejector surrogate. Let \(y_v\in\{0,1\}\) denote the ground-truth label at node \(v\), and let \(m_{v,1},\ldots,m_{v,E}\in\{0,1\}\) denote the corresponding predictions of the \(E\) experts. We define the local loss
\[
\mathcal{L}_{\mathrm{ME}}^{(v)}(x,y_v,m_{v,1:E})
=
-\log \pi_v(y_v\mid x)
-
\sum_{e=1}^{E}
\mathbb{I}(m_{v,e}=y_v)\,
\log \pi_v(\perp_e\mid x),
\]
and optimise the Stage I objective
\[
J_{\mathrm{base}}^{\mathrm{ME}}
=
\sum_{v\in V}
\mathbb{E}\!\left[
\mathcal{L}_{\mathrm{ME}}^{(v)}(x,y_v,m_{v,1:E})
\right].
\]

\textbf{Multi-expert Selective-Exclusion contract.}
We retain the same Selective-Exclusion semantics as in the single-expert setting, but treat the deferral contract as expert-agnostic. Concretely, for every edge \((\mathrm{pa}(v)\to v)\),
\[
a_{\mathrm{pa}(v)} \in \{\perp_1,\ldots,\perp_E\}
\;\Longrightarrow\;
a_v \in \{0,\perp_1,\ldots,\perp_E\}.
\]
Together with taxonomic satisfiability, this implies that any positive child assertion must have a positive parent assertion:
\[
a_v = 1 \;\Longrightarrow\; a_{\mathrm{pa}(v)} = 1.
\]
Thus, if a parent is deferred to any expert, a child may still be excluded or deferred, but it may not be asserted present.

This is an expert-agnostic handoff contract: a parent deferred to one expert may have a child deferred to another expert. A stricter same-expert contract would instead require
\[
a_{\mathrm{pa}(v)}=\perp_e
\Rightarrow
a_v\in\{0,\perp_e\},
\]
which may be preferable when responsibility for a finding family should remain with one reader. We leave this workflow choice to deployment.

\textbf{Multi-expert TBP.}
For each edge \((\mathrm{pa}(v)\to v)\), we define a transition kernel
\[
T_v(x)\in[0,1]^{(2+E)\times(2+E)},
\]
with rows and columns indexed by \(A_E=\{0,1,\perp_1,\ldots,\perp_E\}\). The kernel is specified by the following cases:
\[
(T_v)_{0,0}=1,
\qquad
(T_v)_{0,a}=0 \;\; \text{for all } a\neq 0,
\]
\[
(T_v)_{1,a}=\pi_v(a\mid x),
\qquad a\in A_E,
\]
and for each deferred parent state \(\perp_e\),
\[
(T_v)_{\perp_e,1}=0,
\qquad
(T_v)_{\perp_e,a}
=
\frac{\pi_v(a\mid x)}{Z_v(x)},
\qquad
a\in\{0,\perp_1,\ldots,\perp_E\},
\]
where
\[
Z_v(x)
=
\pi_v(0\mid x)
+
\sum_{e'=1}^{E}\pi_v(\perp_{e'}\mid x).
\]
When \(Z_v(x)=0\), we set \((T_v)_{\perp_e,0}=1\) and \((T_v)_{\perp_e,a}=0\) for all \(a\neq 0\).

Let \(\Pi_v(x)\in\Delta^{1+E}\) denote the unconditional marginal action distribution at node \(v\). As in the single-expert setting, TBP propagates these marginals top-down via
\[
\Pi_v(x)
=
\Pi_{\mathrm{pa}(v)}(x)^{\top}T_v(x).
\]
By construction, this recursion assigns zero probability to both taxonomic contradictions (positive child under absent parent) and delegation violations (positive child under deferred parent).

\textbf{Stage II: recursive policy optimisation.}
In Stage II, we fine-tune end-to-end through the TBP recursion using the same surrogate applied to the propagated unconditional marginals. Writing \(\Pi_v(a\mid x)\) for the \(a\)-coordinate of \(\Pi_v(x)\), we optimise
\[
J_{\mathrm{RPO}}^{\mathrm{ME}}
=
\sum_{v\in V}
\mathbb{E}\!\left[
-\log \Pi_v(y_v\mid x)
-
\sum_{e=1}^{E}
\mathbb{I}(m_{v,e}=y_v)\,
\log \Pi_v(\perp_e\mid x)
\right].
\]
This is the direct multi-expert analogue of the Stage II fine-tuning procedure in the main method: Stage I learns strong local primitives, and Stage II calibrates them under the inference-time hierarchy semantics induced by TBP.

\textbf{Inference.}
At test time, we decode top-down using the admissible TBP transitions rather than taking independent nodewise argmaxes of the unconditional marginals. This yields a coherent joint action assignment by construction under the expert-agnostic Selective-Exclusion contract.

\begin{figure}[t]
    \centering
    \includegraphics[width=\linewidth]{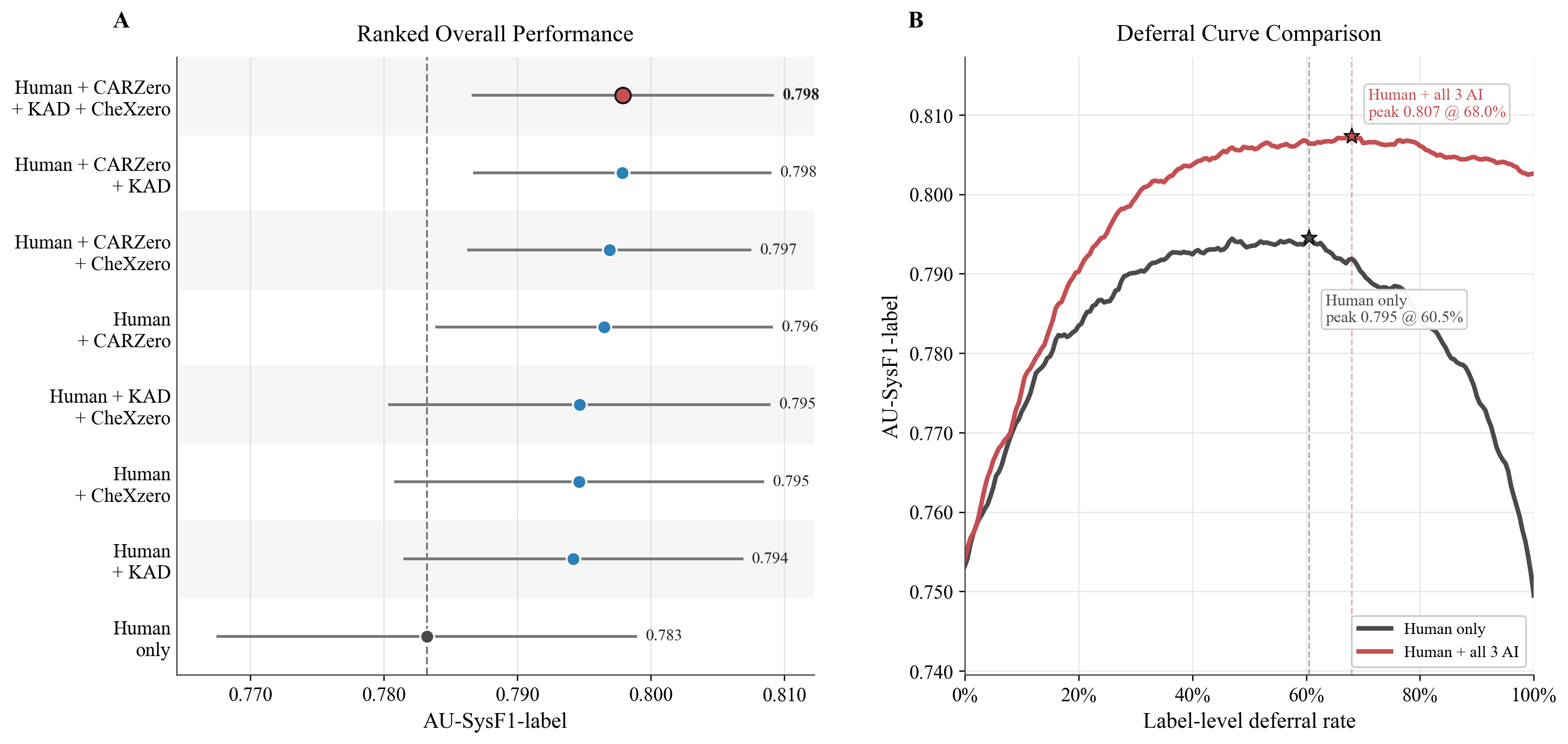}
    \caption{\textbf{CheXpert multiple-expert results.}
    \textbf{(A)} Overall AU-SysF1-label for eight human/AI expert combinations across 15 runs per combination (5 readers \(\times\) 3 seeds); points show means and bars show \(\pm 1\) SD.
    \textbf{(B)} Mean label-level deferral curves comparing Human only and Human + CARZero + KAD + CheXzero; stars mark the peak of each mean curve.}
    \label{fig:multi_l2d}
\end{figure}

\noindent\textbf{Setup.}
On CheXpert, we treat one radiologist as the designated human expert and allow deferral to a selected subset of three AI experts: CARZero \cite{lai2024carzero}, KAD \cite{zhang2023knowledge}, and CheXzero \cite{tiu2022expert}. This yields \(2^3=8\) expert combinations, ranging from Human only to Human + CARZero + KAD + CheXzero. For each chosen expert set, we instantiate the multi-expert extension above using the same taxonomy, the same Selective-Exclusion contract, and the same two-stage training recipe as in the single-expert setting. We evaluate all eight combinations across all five radiologists and three random seeds, yielding 15 runs per combination. We report AU-SysF1-label, and for the best-performing configuration we additionally plot label-level deferral curves against the human-only baseline.

\noindent\textbf{Results.}
All AI-augmented combinations outperform the human-only baseline of \(0.783\) AU-SysF1-label. The best mean performance is achieved by Human + CARZero + KAD + CheXzero at \(0.798\), narrowly ahead of Human + CARZero + KAD at \(0.798\). Figure~\ref{fig:multi_l2d}(B) further shows that the full four-expert system yields a stronger deferral curve than Human only, with a peak AU-SysF1-label of \(0.807\) at \(68.0\%\) label-level deferral, compared with \(0.795\) at \(60.5\%\) for Human only. These results suggest that the coherent hierarchical L2D framework transfers cleanly to multiple-expert routing and can improve system utility in this setting.

\subsection{Sensitivity to the Stage I objective}
\label{app:stage1_sensitivity}

\begin{table}[h]
\centering
\small
\setlength{\tabcolsep}{6pt}
\caption{\textbf{Sensitivity to the Stage I objective on VinDr-CXR expert R10: utility.}
We compare alternative Stage I objectives and post-Stage I training strategies over five random seeds for reader \texttt{R10}. Values are mean \(\pm\) SD. Higher is better for all metrics.}
\label{tab:stage1_sensitivity_r10_utility}
\begin{tabular}{lccc}
\toprule
Method & AU-SysF1(I) \(\uparrow\) & AU-SysF1(L) \(\uparrow\) & AU-SysBalAcc \(\uparrow\) \\
\midrule
\multicolumn{4}{l}{\textit{RPO}} \\
MC + RPO         & \textbf{0.5199 \(\pm\) 0.0050} & 0.7892 \(\pm\) 0.0009 & 0.8530 \(\pm\) 0.0018 \\
Focal + RPO      & 0.5198 \(\pm\) 0.0052 & \textbf{0.7899 \(\pm\) 0.0024} & 0.8536 \(\pm\) 0.0023 \\
BR-L2D + RPO     & 0.5191 \(\pm\) 0.0052 & 0.7835 \(\pm\) 0.0032 & 0.8503 \(\pm\) 0.0032 \\
\addlinespace[2pt]
\multicolumn{4}{l}{\textit{Continue}} \\
MC + Continue    & 0.5154 \(\pm\) 0.0047 & 0.7831 \(\pm\) 0.0025 & 0.8434 \(\pm\) 0.0025 \\
Focal + Continue & 0.5131 \(\pm\) 0.0055 & 0.7818 \(\pm\) 0.0028 & 0.8431 \(\pm\) 0.0043 \\
BR (cont.)       & 0.5145 \(\pm\) 0.0056 & 0.7817 \(\pm\) 0.0021 & 0.8422 \(\pm\) 0.0012 \\
\addlinespace[2pt]
\multicolumn{4}{l}{\textit{Projection}} \\
MC + Projection     & 0.5170 \(\pm\) 0.0049 & 0.7529 \(\pm\) 0.0112 & 0.8541 \(\pm\) 0.0035 \\
Focal + Projection  & 0.3786 \(\pm\) 0.0072 & 0.4736 \(\pm\) 0.0107 & 0.7221 \(\pm\) 0.0101 \\
BR-L2D + Projection & 0.5158 \(\pm\) 0.0041 & 0.7691 \(\pm\) 0.0077 & 0.8449 \(\pm\) 0.0053 \\
\addlinespace[2pt]
\multicolumn{4}{l}{\textit{Stage I only}} \\
MC Pretrain      & 0.5168 \(\pm\) 0.0056 & 0.7589 \(\pm\) 0.0083 & 0.8515 \(\pm\) 0.0044 \\
Focal Pretrain   & 0.5117 \(\pm\) 0.0052 & 0.7596 \(\pm\) 0.0060 & \textbf{0.8602 \(\pm\) 0.0048} \\
BR-L2D           & 0.5148 \(\pm\) 0.0038 & 0.7683 \(\pm\) 0.0074 & 0.8439 \(\pm\) 0.0057 \\
\bottomrule
\end{tabular}
\end{table}

\begin{table}[h]
\centering
\small
\setlength{\tabcolsep}{5pt}
\caption{\textbf{Sensitivity to the Stage I objective on VinDr-CXR expert R10: edge-weighted incoherence.}
Values are mean \(\pm\) SD over five random seeds for reader \texttt{R10}. Lower is better for all metrics. Tax = taxonomic contradiction, Ded = deductive defect, Del = delegation violation, and Any = any incoherence.}
\label{tab:stage1_sensitivity_r10_incoherence}
\begin{tabular}{lcccc}
\toprule
Method & Tax \(\downarrow\) & Ded \(\downarrow\) & Del \(\downarrow\) & Any \(\downarrow\) \\
\midrule
\multicolumn{5}{l}{\textit{RPO}} \\
MC + RPO         & 0.0000 \(\pm\) 0.0000 & 0.0006 \(\pm\) 0.0002 & 0.0001 \(\pm\) 0.0000 & 0.0007 \(\pm\) 0.0002 \\
Focal + RPO      & 0.0000 \(\pm\) 0.0000 & 0.0007 \(\pm\) 0.0001 & 0.0013 \(\pm\) 0.0002 & 0.0020 \(\pm\) 0.0003 \\
BR-L2D + RPO     & 0.0000 \(\pm\) 0.0000 & 0.0002 \(\pm\) 0.0001 & 0.0001 \(\pm\) 0.0001 & 0.0003 \(\pm\) 0.0001 \\
\addlinespace[2pt]
\multicolumn{5}{l}{\textit{Continue}} \\
MC + Continue    & 0.0000 \(\pm\) 0.0000 & 0.0780 \(\pm\) 0.0041 & 0.0114 \(\pm\) 0.0020 & 0.0894 \(\pm\) 0.0044 \\
Focal + Continue & 0.0000 \(\pm\) 0.0000 & 0.0755 \(\pm\) 0.0049 & 0.0129 \(\pm\) 0.0024 & 0.0885 \(\pm\) 0.0066 \\
BR (cont.)       & 0.0000 \(\pm\) 0.0000 & 0.0639 \(\pm\) 0.0143 & 0.0144 \(\pm\) 0.0039 & 0.0783 \(\pm\) 0.0177 \\
\addlinespace[2pt]
\multicolumn{5}{l}{\textit{Projection}} \\
MC + Projection     & 0.0000 \(\pm\) 0.0000 & 0.0000 \(\pm\) 0.0000 & 0.0000 \(\pm\) 0.0000 & 0.0000 \(\pm\) 0.0000 \\
Focal + Projection  & 0.0000 \(\pm\) 0.0000 & 0.0000 \(\pm\) 0.0000 & 0.0000 \(\pm\) 0.0000 & 0.0000 \(\pm\) 0.0000 \\
BR-L2D + Projection & 0.0000 \(\pm\) 0.0000 & 0.0000 \(\pm\) 0.0000 & 0.0000 \(\pm\) 0.0000 & 0.0000 \(\pm\) 0.0000 \\
\addlinespace[2pt]
\multicolumn{5}{l}{\textit{Stage I only}} \\
MC Pretrain      & 0.0004 \(\pm\) 0.0002 & 0.0777 \(\pm\) 0.0264 & 0.0001 \(\pm\) 0.0000 & 0.0781 \(\pm\) 0.0264 \\
Focal Pretrain   & 0.0000 \(\pm\) 0.0000 & 0.0020 \(\pm\) 0.0015 & 0.0040 \(\pm\) 0.0011 & 0.0060 \(\pm\) 0.0025 \\
BR-L2D           & 0.0000 \(\pm\) 0.0000 & 0.0433 \(\pm\) 0.0180 & 0.0142 \(\pm\) 0.0039 & 0.0575 \(\pm\) 0.0185 \\
\bottomrule
\end{tabular}
\end{table}

Tables~\ref{tab:stage1_sensitivity_r10_utility} and~\ref{tab:stage1_sensitivity_r10_incoherence} test whether the RPO gains depend on the particular Stage I objective. On VinDr-CXR expert \texttt{R10}, RPO is robust across the tested initialisations: MC + RPO, Focal + RPO, and BR-L2D + RPO all achieve strong utility while reducing edge-weighted incoherence to near zero.

The comparison also separates RPO from two alternative explanations. Simply continuing Stage I training does not remove incoherence and gives weaker utility than RPO, showing that the effect is not just longer optimisation. Projection guarantees zero incoherence for every Stage I objective, but can reduce utility substantially when the local primitives are poorly aligned with the coherent decoder, most notably for Focal + Projection. Overall, Stage I provides a useful initialisation, while Stage II RPO is the component that aligns the learned primitives with the TBP inference semantics and improves the utility--coherence trade-off.

\end{document}